\newtheorem{thm}{Theorem}
\newtheorem{lem}[thm]{Lemma}
\newtheorem{prop}[thm]{Proposition}
\newtheorem{cor}{Corollary}
\theoremstyle{plain}
\newtheorem{rem}{Remark}
\newtheorem{dfn}[thm]{Definition}
\g@addto@macro{\@algocf@init}{\SetKwInOut{Parameter}{Parameters}}
\title{Stable Recovery of Entangled
Weights: Towards Robust Identification of Deep Neural Networks from Minimal Samples}
\author{Christian Fiedler$^{1,2}$, Massimo Fornasier$^3$, Timo Klock$^{4,5}$, and Michael Rauchensteiner$^6$}
\date{%
    $^1$Intelligent Control Systems Group, Max Planck Institute for Intelligent Systems, Heisenbergstr. 3, 70569, Stuttgart, Germany\\ Email: \href{mailto:fiedler@is.mpg.de}{fiedler@is.mpg.de}\\%
    $^2$Institute for Data Science in Mechanical Engineering, RWTH-Aachen University, Dennewartstra\ss e 27, 50868, Aachen, Germany\\%
    $^3$Department of Mathematics, Bolzmannstra\ss e 3, 85748, Garching, Germany,\\ Email: \href{mailto:massimo.fornasier@ma.tum.de}{massimo.fornasier@ma.tum.de}\\%
    $^4$Department of Numerical Analysis and Scientific Computing, Simula Research Laboratory, Oslo, Norway, \\Email: \href{mailto:timo@simula.no}{timo@simula.no}\\%
    $^5$Department of Mathematics, University of San Diego, California, San Diego, US\\
    $^6$Department of Mathematics, Bolzmannstra\ss e 3, 85748, Garching, Germany, \\Email: \href{mailto:michael.rauchensteiner@ma.tum.de}{michael.rauchensteiner@ma.tum.de}\\[2ex]%
    \today
}
\newcommand{\vertiii}[1]{{\left\vert\kern-0.25ex\left\vert\kern-0.25ex\left\vert #1
    \right\vert\kern-0.25ex\right\vert\kern-0.25ex\right\vert}}
\newcommand{\R}{\mathbb{R}}
\newcommand{\supp}{{\rm supp}}
\newcommand{\argmin}{\arg \min}
\newcommand{\opvec}{\operatorname{vec}}
\newcommand{\opdiag}{\operatorname{diag}}
\DeclarePairedDelimiterX\set[1]\lbrace\rbrace{#1}
\providecommand*{\Uni}[1]{\operatorname{Unif}({#1})}
\providecommand{\Dim}{\operatorname{dim}}            
\providecommand{\dim}{\Dim}
\providecommand*{\Span}[1]{\operatorname{Span}\left\{{#1}\right\}}     
\providecommand{\supp}{\Supp}
\providecommand{\range}{\operatorname{range}}                
\providecommand{\rank}{\operatorname{rank}}                        
\providecommand{\argmin}{\operatorname*{argmin}}  
\providecommand{\Id}{\Op{Id}}                     
\providecommand{\diag}{\operatorname{diag}}
\providecommand{\CD}{{\cal D}}
\providecommand{\CN}{{\cal N}}
\providecommand{\CO}{{\cal O}}
\providecommand{\CV}{{\cal V}}
\providecommand{\CW}{{\cal W}}
\providecommand{\bbE}{\mathbb{E}}
\providecommand{\bbN}{\mathbb{N}}
\providecommand{\bbP}{\mathbb{P}}
\providecommand{\bbR}{\mathbb{R}}
\providecommand{\bbS}{\mathbb{S}}
\providecommand*{\N}[1]{\left\|{#1}\right\|} 
\providecommand*{\textN}[1]{\|{#1}\|} 
\newcommand*{\SN}[1]{\left|{#1}\right|}      
\newcommand*{\textSN}[1]{|{#1}|}      
\newcommand*{\Op}[1]{\mathsf{#1}} 
\newcommand{\Maf}{\kappa} 
\newcommand{\numsamples}{N_H}
\newcommand{\auxspace}{\mathcal{W}}
\begin{document}
\maketitle
\begin{abstract}
In this paper we approach the problem of unique and stable identifiability of generic deep artificial neural networks with pyramidal shape and smooth activation functions from a finite number of input-output samples.
More specifically we introduce the so-called {\it entangled weights},
which compose weights of successive layers intertwined with suitable diagonal and invertible matrices depending on the activation functions and their shifts.
We prove that entangled weights are completely and stably approximated by an efficient and robust algorithm as soon as
$\mathcal O(D^2 \times m)$ nonadaptive input-output samples of the network are collected,
where $D$ is the input dimension and $m$ is the number of neurons of the network. Moreover,
we empirically observe that the approach applies to networks with up to $\mathcal O(D \times m_L)$ neurons, where $m_L$ is the number of output neurons at layer $L$.
Provided knowledge of layer assignments of entangled weights and of remaining scaling and shift parameters, which may be further heuristically obtained by least squares, the entangled weights identify the network completely and uniquely.
To highlight the relevance of the theoretical result of stable recovery of entangled weights, we present numerical experiments, which demonstrate that multilayered networks with generic weights can be robustly identified and therefore {\it uniformly} approximated by the presented algorithmic pipeline. In contrast backpropagation  cannot generalize stably very well in this setting, being always limited by relatively large uniform error. In terms of practical impact, our study shows that we can relate input-output information uniquely and stably to network parameters, providing a form of explainability. Moreover, our method paves the way for compression of overparametrized networks and for the training of minimal complexity networks.
\end{abstract}

{\bf Keywords:} deep neural networks, active sampling, exact identifiability, deparametrization, frames, nonconvex optimization on matrix spaces
\tableofcontents

\section{Introduction}
Deep learning has become an extremely successful approach, performing state-of-the-art on various applications such as speech recognition \cite{DBLP:journals/corr/HannunCCCDEPSSCN14}, image recognition \cite{NIPS2012_4824,7780459}, language translation \cite{NIPS2017_7181}, and as a novel method for scientific computing \cite{Berner2020,elbraechter2020dnn}. Also in unsupervised machine learning, deep neural networks have shown great success, for instance in image and speech generation \cite{pmlr-v48-oord16,oord2016wavenet}, and in reinforcement learning for solving control problems, such as mastering Atari games \cite{nature15} or beating human champions in playing Go \cite{silver2017mastering}.
Deep learning is about realizing complex tasks as the ones mentioned above, by means of highly parametrized functions, called deep artificial neural networks $f:\mathbb R^D \to \mathbb R^{m_L}$. In this paper we consider classical feed-forward artificial neural networks of the type
\begin{equation}\label{nndef}
f(x)=g_{L}(W_L^\top g_{L-1}(W_{L-1}^\top \dots g_1(W_1^\top x)\dots),
\end{equation} where
the functions $g_\ell(\cdot)=g(\cdot + \tau_\ell)$, $\tau_\ell \in \mathbb R^{m_\ell}$, are sufficiently smooth and shifted activation functions and the matrices
$W_{\ell} \in \bbR^{ m_{\ell-1} \times m_\ell}$ collect the weights of each layer $\ell = 1,\ldots,L$, see Definition \ref{dfn:feedforward_networks} below for a more precise definition.
In practical applications, the number of layers $L$, determining the depth of the network, and the dimensions $m_{\ell-1}\times m_\ell$ of the weight matrices $W_\ell$ are typically determined through heuristic
considerations, whereas the weight matrices and the shifts  are learned based
on training data.

In support of deep learning comes the practical evidence of being able to  outperform other methods, but also the recent theoretical discoveries \cite{mhaskar2020function,shaham2018provable,Berner2020,elbraechter2020dnn,elbrachter2020deep,daubechies2019,PETERSEN2018296,cloninger2020relu,devore2020neural,mhaskar2016deep} that show that deep artificial network can approximate  high dimensional functions without incurring in the curse of dimensionality, i.e., one does not need an exponential number of parameters (weights of the network) with respect to the input dimension in order to approximate  high-dimensional functions. While the approximation properties of neural networks are becoming more understood and transparent, the training phase based on suitable optimization processes remains a (black-)box with some level of opacity. Recent results are shedding some light on this important phase of the employment of neural networks, at least in some simple cases, e.g., of linear neural networks or shallow neural networks, \cite{arora2019convergence,NIPS2019_8960,bah2019,MeiE7665,zhong2017recovery}.

\subsection{Backpropagation and fundamental issues of training}\label{bpg}

Backpropagation of error \cite{werbos74,10.5555/104279.104293,7fa6b6a5cde14bcfbd7ab3a8f19d0d56} is the most frequently used algorithm to train deep neural networks and is justified by its
tremendous empirical success. All the practical advances recalled above are due to the efficacy of this method.  Backpropagation refers to employing stochastic gradient descent or its variations  \cite{sun2019optimization} to minimize certain loss functions (e.g. least squares, Kullback-Leibler divergence, or
Wasserstein distances) of the weights\footnote{More precisely ``backpropagation'' is a recursive way of application of the chain rule needed to compute the gradient of the loss with respect to weights, but the term ``backpropagation'' is often used also to describe any algorithmic optimization which uses such gradients. In many cases such gradients are by now computed by symbolic calculus.}, usually measuring the misfit of input-output information over a finite number of labeled training samples.
The practical efficacy of deep learning is currently ensured in the so-called overparametrized regime by considering fitting a large amount of data with a larger
amount of parameters, resulting in a high dimensional optimization problem. Furthermore,
for sufficiently overparametrized networks it is known that gradient descent is
guaranteed to achieve zero, or very small training loss in some scenarios \cite{du2019gradient}. However, training deep networks  features a surprising phenomenon which stands in contrast to conventional wisdom in statistics: despite data fitting, increasing the number of parameters of the model beyond the number of training examples often reduces the generalization error, i.e., the prediction error on new unseen data, while from classical theory one would expect that overfitting leads to a blow-up of the generalization error \cite{zhang2016understanding}.
Due to the large number of parameters of neural networks and the potential variety of local and global optima, the result of the training is still difficult to explain and interpret, although a regularization effect is expected, i.e., an implicit bias towards low-complexity networks.
In fact, there are many different networks with different weights, which are essentially equivalent with respect to the loss function. Hence,  except for simple cases, e.g., \cite{arora2019convergence,NIPS2019_8960,bah2019,moroshko2020implicit,neyshabur2015,soudry2018implicit,woodworth2020kernel}, it is yet unknown how, through implicit bias, the information of the training set is eventually encoded in the parameters of the network.

The generalization ability of networks trained by such methods can be challenged by adversarial attacks \cite{Goodfellow2015ExplainingAH,42503}. This clearly show the lack of stability in uniform norm of networks trained by backpropagation, because small perturbations of the input can significantly modify the network output.
The lack of uniform stability by backpropagation can also be experienced in the realizable regime simply by considering as a datum a given (pre-trained) neural network $f$ and by attempting its approximation with another network $\tilde f$ by minimizing its mean-squared misfit over a finite number of samples $\{Y_j = f(X_j): j=1,\dots,N\}$. While the overall approximation of $f$ by $\tilde f$ will be generally good in mean-squared error (with possibly even zero loss on the data), the uniform error saturates to a less satisfactory value, see for instance Figure \ref{fig:gd_comparison} below. This means that the networks are equivalent with respect to the optimization performed by backpropagation, but there exist inputs for which the two networks still differ significantly.
Ensuring uniform stability of networks, which is the mathematical synonym of stable generalization, remains in fact an important open question.

Another aspect that is considered still quite problematic in training by backpropagation is the need of a large size of the set of labeled training data. In fact, it is thought that humans learn predominantly in an unsupervised way, without the need of much labeled data. In the typical human learning, first a context is built in a semi-supervised fashion and then suddenly the learning happens with little more effort \cite{doi:10.1002/sce.37303405110,hinton1988learning}.

\subsection{Scope of the paper: robust identifiability in realizable regime}

It is well-known that generic data are realizable by a network as soon as the network has a number of connections $\overline W= \sum_{\ell=1}^{L-1} m_{\ell+1}\times m_{\ell}$, which  essentially scales with the number $N$ of data, i.e., $\overline W = \mathcal O(N)$  \cite{DBLP:conf/nips/YunSJ19,1189626,doi:10.1137/20M1314884,zhang2016understanding}.
 In this paper we address the three issues mentioned in Section \ref{bpg},
 \begin{itemize}
 \item[(i)]   the explainability and interpretability of the weights of a network,
 \item[(ii)] its uniform stability,
 \item[(iii)] and the amount of  input-output data needed for its identification,
 \end{itemize}
 in such realizable regime. Namely, we approach the problem of the  {\it unique}  and {\it stable} identification  of a given generic neural network from a minimal  number of input-output samples, essentially scaling  with the size of the network, i.e., $N=\mathcal O(\overline W)$. The identification is intended up to equivalences given by natural symmetries such as permutations of neurons and, in case of symmetric activation functions, possible sign changes of weights and shifts \cite{DBLP:journals/corr/abs-2006-11727}.
 There is by now plenty of evidence that gradient descent and its variants could achieve exact realizability on the training data by overparametrized networks, see, e.g., the survey \cite{sun2019optimization} and reference therein. Nevertheless, for the reasons argued in Section \ref{bpg}, we do not use backpropagation for network identification in the regime $N=\mathcal O(\overline W)$ and we make an effort of providing a fully explainable and transparent procedure. Before entering in the details of the procedure, let us review the importance of network identifiability.

Robust identification of neural networks is indeed a task of relevant theoretical and practical interest.  Essentially it says that given a sufficiently generic network no other network, smaller or larger, up to the above mentioned equivalences, can in fact realize the same input-output mapping exactly.
 Robustness also implies that if a larger network performs  an input-output mapping, then it may be reduced to a minimal and potentially significantly smaller network performing  approximately the same function. Among relevant consequences of robust identifiability we mention
\begin{itemize}
\item Explainability: identifiability means to unveil how input-output information relates to the weights of the network, and it is a mathematical characterization, which is in turn a form of explainability. In fact the unique and stable representation of the network by its weights is a precise encoding that actually tells everything about its input-output relationship;
\item Compression of networks: after training of largely overparameterized neural networks, their embedding in smaller (mobile) devices requires ``miniaturization'' of the network, by taking advantage of the expected intrinsic low-complexity due to possible implicit bias. Hence, the identification of the smallest network matching approximately the given large network is of great practical importance, see also \cite{Blcskei2019OptimalAW,gale2019state,shevchenko2020landscape,ye2020good};
\item Reliable use of neural networks for scientific computing: neural networks are known to efficiently approximate solutions of partial differential equations and are recently in focus as new discretization methods for scientific computing, see, e.g.,  \cite{Berner2020,elbraechter2020dnn} and reference therein. It remains
open how to reliably compute best approximating networks and so far backpropagation is the only method used in practice, with no guarantees of optimal solutions. Hence, the robust identification with theoretical guarantees of best approximating networks of minimal complexity is of utmost importance and remains an open issue for the reliable use of neural networks in scientific computing.
\end{itemize}
In this paper we lay the groundwork for the development of such potential applications of robust identification and their further investigation.

\subsection{State of the art}

The unique identifiability of neural networks has been considered in the literature for over three decades \cite{SUSSMANN1992589,Albertini93uniquenessof,Fefferman1994ReconstructingAN,DBLP:journals/corr/abs-1906-06994, DBLP:journals/corr/abs-2006-11727}.  Despite the long standing of the problem, most of the known results are obtained under the fundamental theoretical assumption of being able to access exactly all possible inputs-outputs of the network.
Except for \cite{rolnick2020reverseengineering}, which considers piecewise linear networks, no results are based on a finite number of samples.
In particular, no constructive and stable procedure has been provided for more general networks. The results in
\cite{SUSSMANN1992589,Albertini93uniquenessof} apply to networks with a single hidden layer. The seminal work \cite{Fefferman1994ReconstructingAN} by Fefferman is about identifiability of fully connected deep networks and it is based on the unique encoding of the network architecture within the countable set of poles of the neural network function $f$ if we consider it as a meromorphic function on $\mathbb C^D$.  The recent work by   Vla\v{c}i\'c and B\"{o}lcskei \cite{DBLP:journals/corr/abs-2006-11727}, which partially builds upon  \cite{Fefferman1994ReconstructingAN}, presents a comprehensive result on the identification of deep neural networks without so-called clone nodes and piecewise $C^1$ activation functions $g$, whose derivative has bounded variation. This implies in particular that the activation function is bounded  (hence it is of sigmoidal type) and it can be uniformly approximated by functions $\sigma$, which are linear combinations of shifted (and scaled) $\tanh$ and constants.
The crucial properties of such functions are that they are $i$-periodic with an infinite and discrete set of poles.
The identifiability result is very general and requires a vast set of tools and techniques from analytic continuation from complex analysis in several variables (polydisks techniques), algebraic techniques from Lie groups (characterization of Tori etc.), and graph theory. 

A neural network of the type \eqref{nndef} remains fully determined by a finite number of parameters and although its identification is known since the 1980s to be in general an NP-hard problem \cite{JUDD1988177,BLUM1992117}, it is not at all expected to generically require an infinite amount of training samples as assumed in the above mentioned results. As we already noted above, the simple employment of backpropagation at this point does not offer an adequate solution, because of the lack of uniform stability and the fact that one has no control on the number of samples needed for the identification.\\

Inspired by older work dating back to the 1990s \cite{BUHMANN1999103,CHUI1992131}, the recent papers \cite{fornasier2018identification,fornasier2019robust,Fornasier2012,anandkumar15,Lin20, MondMont18,zhong2017recovery} have explored the connection between differentiation of shallow networks (one hidden layer) and symmetric tensor decompositions.
Namely, given a network of the type
$$
f(x) = \sum_{j=1}^{m_1} g_j(\langle w_j, x\rangle),
$$
corresponding to $L=1$ and $m_L=1$, its higher order derivatives or, more practically, its finite difference approximations $\Delta^k f(x)$ can be (approximately) decomposed into a nonorthogonal symmetric tensor decomposition of the weights
$$
\Delta^k_\varepsilon f(x) \approx \nabla^k f(x) = \sum_{j=1}^{m_1} g_j^{(k)}(\langle w_j,  x\rangle) \underbrace{w_j\otimes \dots \otimes w_j}_{k \mbox{ times}}.
$$
In other words, differentiation exposes the weights, which would  otherwise be ``hidden inside'' the neurons. It has to be noted that for $m_1>1$ one single differentiation ($k=1$) would not suffice in order to identify individual weights, but it would allow to find the active subspace only \cite{fornasier2018identification}.
In \cite{anandkumar15} stable $1$-rank
decompositions of third order symmetric tensors ($k = 3$) \cite{Anandkumar2014GuaranteedNT} have been
used for the weights identification of one hidden layer neural
networks.
In \cite{fornasier2018identification} the authors show that using second order derivatives ($k = 2$)
actually suffices and the corresponding error estimates
reflect positively the lower order and potential of improved
stability. While the computation of finite differences requires active sampling, in the above mentioned
papers also passive sampling has been considered, under the assumption that one disposes
of an estimation of the probability density of the input. Once the weights are recovered, it is possible to provably identify also the functions $g_j$ either by Fourier methods as in \cite{anandkumar15} or by direct estimation as in \cite{fornasier2018identification}.

Unfortunately,  higher order differentiation
of deeper networks (two or more hidden layers) generates nonsymmetric tensors  and the identification of the weights by tensor decompositions may become unstable and, in general, NP-hard.
Hence, it may seem that this technique has significant limitations in that it cannot be applied to deep networks. However,  three of us recently made a surprising discovery, which actually allowed to extend the results
to two hidden layer neural networks \cite{fornasier2019robust} of scalar output, i.e., $L=2$ and $m_L=1$.
The approach is based on the observation that Hessians $\nabla^2f(x)$, after an appropriate algebraic manipulation, can be rewritten in terms of a suitable non-orthogonal decomposition of rank-$1$ matrices. Namely, it holds
$\nabla^2f(x)= W_1 S^{[1]}(x) W_1^\top + V_2(x) S^{[2]}(x) V_2(x)^\top$, where $V_2(x) = W_1 G_1(x) W_2$ for suitable diagonal and invertible matrices $G_1(x)$. Denoting also $V_1(x)=V_1=W_1$, we can rewrite the Hessians as $\nabla^2f(x)= \sum_{\ell=1}^2 V_\ell(x) S^{[\ell]}(x) V_\ell(x)^\top$.

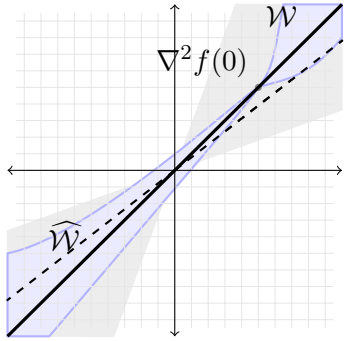
\begin{figure}
\mbox{
 \parbox{10cm}{
	\centering
	\begin{tikzpicture}[scale=1.1]
	\filldraw[gray!14] (0,0) -- (200:2.13) -- (-2,-2) -- (250:2.13) -- (0,0);
	\filldraw[gray!14] (0,0) -- (20:2.13) -- (2,2) -- (70:2.13) -- (0,0);
	\draw[thick, dashed] (0,0) -- (38:2.55);

	\draw[thick, color=blue!30, fill = blue!8]
	(1,1) .. controls (0.8, 0.7) and (0.6,0.5) .. (-1.5,-2) --
	(-2,-2) -- (-2,-1) --
	(-2,-1) .. controls (-1,-0.8) and (0.8, 0.9)  .. (1,1)--
	(1,1) .. controls (1.2, 1.25) and (1.2,1.5) .. (1.3,2)--
	(2,2)--(2,1.6)--
	(2,1.6) .. controls (1.6,1.1) and (1.3, 1.1) .. (1,1);
	\draw[step=0.2cm, gray!20, very thin] (-1.9,-1.9) grid (1.9,1.9);
	\draw[<->] (0,-2) -- (0,2);
	\draw[<->] (-2,0) -- (2, 0);
	\filldraw[gray] (1,1) circle (1pt) node[color=black,above left]{$\nabla^2 f(0)$} ;
	\draw[very thick] (-2,-2) -- node[above left,pos=0.9]{$\mathcal W$}(2,2);
	\draw[thick, dashed] (0,0) -- node[above,left]{$\widehat{\mathcal W}$} (218:2.55);
	\draw[thick, dashed] (0,0) -- (38:2.55);
	\end{tikzpicture}
 }
 \hfill
\parbox{6cm}{\small
Description: Illustration of the relationship between $\mathcal W$ (black line) for $x^\ast=0$ and $\operatorname{span}\left\lbrace \nabla^2 f(x) \middle| x \in \mathbb R^{D} \right\rbrace$ (light blue region) given by two non-linear cones that fan out from $\nabla^2 f(0)$. There is no reason to believe that the these cones are symmetric around $\mathcal W$. The gray cones show the maximal deviation of $\widehat{\mathcal W}$ from $\mathcal  W$.
}}
\caption{Geometrical visualization of Hessians' distribution.}\label{fig:geometryCW}
\end{figure}

In view of the Lipschitz continuity of $x\to V_\ell(x)$, if $x$ is sampled from a tightly concentrated distribution $\mu_X$, say around a point $x^\ast$, then the Hessians   cluster around a very specific subspace
$\mathcal W = \Span{w_i\otimes w_i}$ generated by symmetric rank one matrices. The components $w_i$ of these tensor products are only partially generated by the original weights $W_1$ and  in part are suitable compositions/mixtures of the weights of the first and second layers
$V_2(x^\ast) = W_1 G_1(x^\ast) W_2$ that we call the {\it entangled weights}.
Under the assumption that the set $\{\nabla^2 f(x_i): x_i \sim \mu_X\}$ is sufficiently rich,  the subspace $\mathcal W$ can be robustly approximated $\widehat{\mathcal W} \approx \mathcal W$ by principal component analysis of (approximate) Hessians, see Figure \ref{fig:geometryCW}. This approach does work for generic and fully nonlinear networks, for instance, with sigmoidal-type of activation functions.
We notice, however, that the method fails for networks for which $\{\nabla^2 f(x_i): x_i \sim \mu_X\}$ does not contain enough information, for instance, for piecewise linear networks (e.g., with ReLU activation function),  where $\nabla^2 f(x)=0$ almost everywhere. An extension of our approach to encompass  (leaky) ReLU networks has been recently presented in \cite{Lin20}, which, remarkably, explores the use of passive sampling also for two hidden layers networks.
An alternative {\it ad hoc} approach for ReLU networks is also \cite{rolnick2020reverseengineering}.

Once the subspace $\widehat{\mathcal W} \approx \mathcal W$ is computed, the problem of identification of the network parameters can be transformed to discovering symmetric rank-one matrices $w_i\otimes w_i$
near the subspace $\widehat{\mathcal W} \approx \mathcal W$. For this task, we introduced and analyzed in \cite{fornasier2019robust,fornasier2018identification} a robust nonconvex optimization program given by
\begin{equation}\label{nonconvprog1}
\arg\max \|M\| \mbox{ subject to } \|M\|_F \leq 1, \quad M \in \widehat{\mathcal W},
\end{equation}
where $\N{\cdot}$ is the operator norm and $\N{\cdot}_F$
is the Frobenius norm. We proved, under suitable incoherence assumptions, i.e., $|\langle w_i,w_j\rangle|$ is sufficiently small for $i\neq j$, that local maximizers of \eqref{nonconvprog1}
are in fact approximately the matrices $w_i\otimes w_i$.
Within the subspace $\mathcal W$, the hierarchy or layer attribution of weights to
the first layer $V_1 = W_1$ and entangled weights $V_2$ relative to the second layer is in fact inaccessible.
For sigmoidal type of activation functions $g$, we  devised a heuristic method based on the asymptotic behavior of the function
\begin{equation}\label{hierarchy}
t \rightarrow \N{\nabla f(tw_i)}_2
\end{equation}
to attribute the computed components to first
layer or entangled weights of the second layer: if the function decays at infinity, then $w_i$ is an entangled weight of the second layer, otherwise it is a weight of the first layer. Once the entangled weights are recovered and attributed correctly to their reference layer, we showed numerically that the remaining and much fewer parameters of the network can be identified by least squares and use of standard gradient descent. The resulting networks coincide - exactly - with the original one in full.

\subsection{Contributions of this paper}

The scope of this  paper is to  extend the approach in \cite{{fornasier2019robust}} to deeper, vector-valued networks, i.e., $L\geq 1$ and $m_L \geq 1$.
We consider networks $f:\mathbb R^D \to \mathbb R^{m_L}$ as in \eqref{nndef} and we denote $f=(f_1,\dots,f_{m_L})$ the vector components of the network. We shall show in Proposition \ref{prop:nn_derivatives} that Hessians $\nabla^2 f_p(x)$, or finite difference approximations $\Delta_\varepsilon^2 f_p(x)$ thereof, have (approximately) the form
$$\Delta_\varepsilon^2 f_p(x) \approx \nabla^2 f_p(x) = \sum_{\ell=1}^L V_\ell(x) S_p^{[\ell]}(x) V_\ell(x)^\top, \quad p \in [m_L],$$
for entangled weight matrices $V_{\ell}(x)= \left(\prod_{k=1}^{\ell-1} W_k G_k(x)\right)W_\ell$.
In view of the Lipschitz continuity of $x\to V_\ell(x)$, by sampling Hessians $\Delta_\varepsilon^2 f_p(x)$ from a distribution $x\sim \mu_X$ tightly concentrating, e.g., at $x^*$, we show that they
cluster around a subspace $\mathcal W = \operatorname{span}\{w_i\otimes w_i\}$, irrespectively of the output $p \in [m_L]$.  Furthermore, the spanning rank-1 basis elements $w_i\otimes w_i=v^{[\ell]}_i(x^*)
\otimes v^{[\ell]}_i(x^*)$ are precisely made of entangled weight vectors $v^{[\ell]}_i(x^*)$, columns of $V_{\ell}(x^*)$. Next we show by Theorem \ref{thm:approx_bound} that the subspace $\mathcal W$ can  be
stably approximated $\widehat{\mathcal W} \approx \mathcal W$ by  PCA (Algorithm \ref{alg:subspace_approx}) of the point cloud $\{\Delta_\varepsilon^2 f_p(x_i): x_i \sim \mu_X\}$.

  Then, differently from the approach used in \cite{fornasier2018identification,fornasier2019robust} where the matrix optimization program \eqref{nonconvprog1} was employed, the entangled weights are here discovered within the subspace $\widehat{\mathcal W} \approx \mathcal W$ by the robust nonconvex program
\begin{equation}\label{nonconvprog2}
\max_{u \in  \mathbb S^{D-1}}  \|P_{\widehat{\mathcal W}}(u\otimes u)\|_F^2.
\end{equation}
This program has been considered implicitly in \cite[Lemma 16 and Lemma 17]{fornasier2019robust} and it has been proposed independently in \cite{fiedlerlearning, kileel2019subspace} in the context of neural networks and tensor decompositions, respectively. It can be solved efficiently by a projected gradient ascent iteration over the sphere $\mathbb S^{D-1}$, called {\it  subspace power method} (Algorithm \ref{alg:recover_weights}), which was extensively analyzed for tensor decompositions \cite{kileel2019subspace}. The advantage of \eqref{nonconvprog2} over \eqref{nonconvprog1} is that the theoretical analysis of its robustness is mathematically easier, see Theorem \ref{thm:summary_theorem} below; moreover, as the optimization is over vectors instead of matrices, \eqref{nonconvprog2} comes with a significantly improved algorithmic complexity. In fact, such a method is highly scalable with respect to the size of the network and the dimension $D$.
Furthermore, we extend in Proposition \ref{prop:existence_of_equivalent_network} the reparametrization result of \cite{fornasier2019robust} to deep networks, showing that entangled weights can be used for loss-free reparametrization of the original network, leaving much fewer residual parameters related to scaling of the weights and shifts of activation functions undetermined.
In order to highlight the relevance of the results of robust recovery of entangled weights and to provide for them a proper context, we give empirical demonstration of complete identification of multilayer networks.
In particular, as in \cite{fornasier2019robust}, we found reliable heuristics, which allow to classify the entangled weights in terms of their attribution to different layers (Algorithms \ref{alg:clustering}-\ref{alg:detect_last_layer}). So far this approach is limited to the case of $L\leq 3$. The full identification of the reparametrized network can be again performed by a least squares fit over the remaining - scaling and shift - parameters, see \eqref{eq:new_GD_recover_network_functional}. We do not dispose yet of a proof of such convergence though, but it is consistently observed in the numerical experiments and it is subject of current investigation.

\subsection{Organization of the paper}
The paper is organized as follows. In Section \ref{sec:ffnets_and_entangled_weights} we introduce the
networks considered in this work and we recall the definition of entangled weights, alongside the mentioned
reparametrization result. Based on these preliminary results, we describe in Section \ref{sec:algorithm}
our algorithmic pipeline for reconstructing a deep network from a finite number of input-output samples. Section \ref{sec:experiments} shows
extensive numerical experiments for each part of the pipeline, whereas Sections
\ref{sec:subspace_approx} and \ref{sec:rank_one_recov} provide theoretical analyses
of the subspace approximation of $\CW$ and a robust analysis of the optimal
program \eqref{nonconvprog2} under suitable incoherence conditions. Thus, the latter two
sections give theoretical justification for our approach to entangled weight recovery.
We close the paper in Section \ref{sec:conclusion} with a conclusion, future directions,
and open problems.

\subsection{Notation}\label{sec:notation}
\newcommand{\lind}[1]{[#1]}
\newcommand{\dprime}{{\prime\prime}}

 Given any integer $m \in \mathbb N$, we use the symbol $[m]:=\{1,2,\dots,m \}$ for indicating the index set of the first $m$ integers. We denote by $B_1^d$ the Euclidean unit ball in $\mathbb R^d$, by $\mathbb S^{d-1}$ the Euclidean sphere, by $\textrm{Unif}(\bbS^{d-1})$ its uniform probability measure, and by $\textrm{Sym}(\bbR^{d\times d})$ the space
 of symmetric matrices in $\bbR^{d\times d}$. We denote by $\ell_q^d$ the $d$-dimensional Euclidean space endowed with the norm $\|x\|_{\ell_q^d} =\left (  \sum_{j=1}^d |x_j|^q \right )^{1/q}$. For $q=2$ we often write indifferently  $\|x\| = \|x\|_{2}= \|x\|_{\ell_2^d}$. For a matrix $M$ we denote $\sigma_k(M)$ its $k^{th}$ singular value.
 The spectral norm of a matrix is denoted $\|\cdot\|$.
 We may also denote with $\|\cdot\|_{p \to q}$ the operator norms from $\ell_p^d$ to $\ell_q^m$ spaces.
 Given a closed set $C$ we denote by $P_C(x)$ the possibly set-valued orthogonal projection $P_C(x) \in \argmin_{z \in C}\N{x-z}_2$ ($C$ may be a subspace of $\bbR^d$ or $\textrm{Sym}(\bbR^{d\times d})$, or spheres thereof, or a subspace intersected with a Euclidean ball or sphere). Whenever we use $P_C$, the projection will
 be uniquely defined.
For vectors $x_1,\dots, x_k \in  \mathbb R^d$ we denote the tensor product $x_1 \otimes \dots \otimes x_k$ as the tensor of entries $({x_1}_{i_1} \dots {x_k}_{i_k})_{i_1,\dots,i_k}$. For the case of $k=2$ the tensor product $x \otimes y$ of two vectors  $x,y \in  \mathbb R^d$ equals the matrix $x y^T =  (x_i y_j)_{ij}$. For any matrix $M \in \R^{m \times n}$
\begin{align*}
\opvec(M) := (m_{11}, m_{21}, \dots, m_{m1}, m_{12}, m_{22}, \dots, m_{mn})^T \in \R^{mn}
\end{align*}
is its vectorization, which is the vector created by  unfolding the columns of $M$. Moreover, we denote by $M^\dagger$ the Moore-Penrose pseudoinverse of the matrix $M$. The kernel and range of a matrix $M$ are denoted by $\ker(M)$ and $\range(M)$ respectively.
Furthermore, we denote by $\Id_d$ the $d\times d$ identity matrix.
For a suitably regular function $g$ we denote by $g^{(k)}$ its $k$-th derivative.
The gradient of a differentiable function $f: \bbR^n \rightarrow \bbR$ is denoted by $\nabla f$
and its Hessian by $\nabla^2 f$. Finally, we define the sub-Gaussian norm of a random variable $Z \in \bbR$ by
$\textN{Z}_{\psi_2} = \inf\{s > 0 : \bbE\exp(\SN{X/s}^2) \leq 2\}$ and of a random vector $Z \in \bbR^d$ by
$\textN{Z}_{\psi_2} = \sup_{v \in \bbS^{d-1}}\textN{v^\top Z}_{\psi_2}$.

\section{Feedforward neural networks and entangled weight vectors}
\label{sec:ffnets_and_entangled_weights}
This section defines feedforward neural networks as considered in this work and
recalls the important concept of entangled weight vectors, which has been previously
introduced and used in \cite{fornasier2019robust}. The main purpose und usage of entangled weights is a loss-free
reparametrization of the original network function by a new set of weights, which can, contrary to standard weights, be exposed by differentiating the network function, as a form of linearization. The exposure of the entangled weights allows in turn their robust identification. In the case of
shallow networks the definition coincides with standard network weights, but to extend
network identification based on derivative information to deeper networks,
the concept of entangled weights seems indispensable.

We begin with the  definition of feed forward networks considered in this work.

\begin{dfn}[Feedforward neural network]
\label{dfn:feedforward_networks}
Let $L,m_0,\ldots,m_L \in \bbN$ with $D=m_0$. For $\ell \in [L]$,
consider weight matrices
\begin{equation*}
W_\ell = \left(w^{\lind{\ell}}_1|  \ldots |w^{\lind{\ell}}_{m_\ell}\right)\in \bbR^{m_{\ell-1} \times m_\ell},
\end{equation*}
shifts $\tau_\ell \in \R^{m_\ell}$, and let $g : \bbR \rightarrow \bbR$ be an activation function. A feedforward
neural network with $m_L$ outputs is a function $f : \bbR^D \rightarrow \bbR^{m_L}$ computed via the recursive rule
$y^{\lind{0}}(x) = x$,
\begin{align*}
y^{\lind{\ell}}(x) &= g(W_{\ell}^\top y^{\lind{\ell-1}} + \tau_{\ell} ), \quad \ell \in [L]
\end{align*}
and $f(x) = y^{\lind{L}}(x),$
where $g$ is meant to be applied componentwise to non-scalar inputs. The components of $f$ are denoted by $f_p$ for $p \in [m_L]$ and we often
write $g_{\ell}(\cdot) = g(\cdot + \tau_{\ell})$. It will often be  useful to refer to the number of
neurons of the network as $m=m_1+\dots+m_L$.
\end{dfn}

Let us now introduce entangled weight vectors. For each $x \in \bbR^D$ we first define
diagonal matrices $G_\ell(x) := \opdiag\left(g_\ell^\prime(W_\ell^\top y^{[\ell-1]}(x)) \right) \in \bbR^{m_\ell \times m_\ell}$,
which depend on the shifted activation function $g_\ell(\cdot)=g(\cdot + \tau_\ell)$ and the original weight matrices $W_\ell$, for $\ell \in [L]$. Then, the
$(i,\ell)$-th {\it entangled weight} vector of a network $f : \bbR^D\rightarrow \bbR^{m_L}$
at a location $x \in \bbR^D$ is defined as
\begin{align}
\label{eqn:entangled_weights}
v^{\lind{\ell}}_i(x) &:= \left(\prod_{k=1}^{\ell-1} W_k G_k(x)\right)w_i^{\lind{\ell}}.
\end{align}
The index $\ell$ indicates the layer attribution of the entangled weight and the index $i$ represents some ordering within the layer.
We can also write the set of entangled weights at layer $\ell$ in matrix form by defining
\begin{align}
\label{eqn:entangled_weights_matrix}
V_{\ell}(x) &:= \left(\prod_{k=1}^{\ell-1} W_k G_k(x)\right)W_\ell.
\end{align}
Note that the definition of entangled weights at $\ell = 1$ simply coincides with $W_1$,
implying that weights and entangled weights coincide for shallow networks. For
deeper networks however, entangled weights for $\ell > 1$
are linear combinations of first layer  weights $W_1$
with mixture coefficients given by weight matrices $W_{2},\ldots,W_{\ell}$ and diagonal
matrices $G_{1},\ldots,G_{\ell-1}$. Thus, they generally differ substantially from $W_1,\ldots,W_{L}$,
in particular because entangled weights at layer $\ell$ are elements of $\bbR^D$ rather than $\bbR^{m_{\ell-1}}$.

On first sight the definition \eqref{eqn:entangled_weights} may seem artificial
and lack a clear connection to either derivatives of the network function $f$ or
loss-less reparametrizations of $f$ mentioned in the beginning
of this section. The next two results clarify these relations and thus motivate the definition of entangled weights. First, we show that Hessians of $f$ at $x$
can be decomposed in terms of entangled weight vectors. This is a key result, because it implies that the entangled weights are exposed
by differentiation, similarly as first layer weights in shallow nets \cite{anandkumar15,fornasier2018identification,fornasier2019robust,zhong2017recovery}.

\begin{prop} \label{prop:nn_derivatives}
Let $f$ be a feedforward network as in Definition \ref{dfn:feedforward_networks}. The Hessian of $f_p$ reads
 \begin{align*}
  \nabla^2 f_p(x) & = \sum_{i=1}^{m_1} S^{\lind{1}}_{p,i}(x) \left( w^{\lind{1}}_i \otimes w^{\lind{1}}_i \right)
    + \sum_{\ell=2}^L \sum_{i=1}^{m_\ell}  S^{\lind{\ell}}_{p,i}(x) \left( v^{\lind{\ell}}_i(x) \otimes v^{\lind{\ell}}_i(x) \right) = \sum_{\ell=1}^L V_\ell(x) S_p^{\lind{\ell}}(x) V_\ell(x)^\top,
 \end{align*}
where the scalar $S^{\lind{\ell}}_{p,i}(x)$ is the $i$-th entry of the diagonal matrix
 \begin{equation*}
  S^{\lind{\ell}}_p(x) = \diag(g_\ell^\dprime(W_\ell^\top y^{\lind{\ell-1}}(x)))
  \diag\left( \left( \prod_{k=\ell+1}^L W_k G_k(x) \right)e_p\right),
 \end{equation*}
  where $e_p$ is a vector of the canonical basis.
\end{prop}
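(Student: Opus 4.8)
The plan is to obtain the claim by a direct two-fold application of the chain rule, organized so that the entangled weight matrices $V_\ell(x)$ appear on their own. First I would record the Jacobian recursion for the layer outputs. Writing $z^{[\ell]}(x) = W_\ell^\top y^{[\ell-1]}(x) + \tau_\ell$ for the pre-activations and $J_\ell(x) = \partial y^{[\ell]}/\partial x \in \bbR^{m_\ell \times D}$ for the layerwise Jacobian, the chain rule applied to Definition \ref{dfn:feedforward_networks} gives $J_\ell = G_\ell(x)\, W_\ell^\top J_{\ell-1}$ with $J_0 = \Id_D$, and hence $J_\ell^\top = \prod_{k=1}^\ell W_k G_k(x)$ (each $G_k$ is diagonal, thus symmetric). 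Comparing with \eqref{eqn:entangled_weights_matrix} yields the key bookkeeping identity $V_\ell(x) = J_{\ell-1}(x)^\top W_\ell$, equivalently $W_\ell^\top J_{\ell-1}(x) = V_\ell(x)^\top$. Taking the $p$-th row of $J_L$ then produces the gradient in product form $\nabla f_p(x) = J_L(x)^\top e_p = W_1 G_1(x) W_2 G_2(x) \cdots W_L G_L(x)\, e_p$.

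Next I would differentiate this product a second time. Since only the diagonal factors $G_\ell(x)$ depend on $x$, the product rule gives, for each input coordinate $b \in [D]$,
\begin{equation*}
\partial_{x_b} \nabla f_p(x) = \sum_{\ell=1}^L \Big(\prod_{k=1}^{\ell-1} W_k G_k\Big) W_\ell \,\big(\partial_{x_b} G_\ell\big)\, \Big(\prod_{k=\ell+1}^L W_k G_k\Big) e_p.
\end{equation*}
The forward factor $(\prod_{k=1}^{\ell-1} W_k G_k)W_\ell$ is exactly $V_\ell(x)$, while the backward factor $b^{[\ell]}_p(x) := (\prod_{k=\ell+1}^L W_k G_k)e_p \in \bbR^{m_\ell}$ is the sensitivity of $f_p$ to the layer-$\ell$ output. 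Differentiating $G_\ell = \diag(g'(z^{[\ell]}))$ gives $\partial_{x_b} G_\ell = \diag\big(g''(z^{[\ell]}) \odot (W_\ell^\top J_{\ell-1} e_b)\big)$, where $\odot$ is the entrywise product and $g''(z^{[\ell]}) = g_\ell''(W_\ell^\top y^{[\ell-1]})$. The crux is a regrouping of Hadamard products: using $\diag(u)v = u\odot v = \diag(v)u$ together with commutativity, the $\ell$-th summand becomes
\begin{equation*}
V_\ell \,\diag\big(g''(z^{[\ell]}) \odot b^{[\ell]}_p\big)\,\big(W_\ell^\top J_{\ell-1} e_b\big) = V_\ell\, S_p^{[\ell]}(x)\, V_\ell^\top e_b,
\end{equation*}
where I recognized $\diag(g''(z^{[\ell]}))\,\diag(b^{[\ell]}_p) = S_p^{[\ell]}(x)$ and substituted $W_\ell^\top J_{\ell-1} = V_\ell^\top$ from the first step.

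Because $\partial_{x_b}\nabla f_p(x)$ is the $b$-th column of $\nabla^2 f_p(x)$, letting $b$ range over $[D]$ yields the matrix identity $\nabla^2 f_p(x) = \sum_{\ell=1}^L V_\ell(x) S_p^{[\ell]}(x) V_\ell(x)^\top$. The rank-one expansion is then immediate: writing $V_\ell = (v^{[\ell]}_1|\cdots|v^{[\ell]}_{m_\ell})$ and using that $S_p^{[\ell]}$ is diagonal with entries $S^{[\ell]}_{p,i}$, one gets $V_\ell S_p^{[\ell]} V_\ell^\top = \sum_{i=1}^{m_\ell} S^{[\ell]}_{p,i}\, v^{[\ell]}_i \otimes v^{[\ell]}_i$, and $v^{[1]}_i = w^{[1]}_i$ recovers the first displayed form. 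I expect the only genuine difficulties to be notational: keeping the left-to-right ordering of the matrix products consistent, handling the empty-product edge case $\ell=1$ (where the forward factor reduces to $W_1$, so $V_1 = W_1$ and $S_p^{[1]}$ carries the full backward sensitivity $b^{[1]}_p$), and justifying the Hadamard regrouping cleanly. The conceptual core — that differentiating the $\ell$-th activation diagonal produces precisely the symmetric factor $V_\ell S_p^{[\ell]} V_\ell^\top$ — is \emph{forced} by the identity $W_\ell^\top J_{\ell-1} = V_\ell^\top$, which is exactly where the entangled weights enter.
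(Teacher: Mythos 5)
Your proposal is correct and takes essentially the same route as the paper: both first compute $\nabla f_p(x) = W_1 G_1(x)\cdots W_L G_L(x)\,e_p$ and then differentiate this product again, so that the term in which the $\ell$-th diagonal factor $G_\ell$ is hit produces exactly $V_\ell(x)\, S_p^{[\ell]}(x)\, V_\ell(x)^\top$. The only difference is bookkeeping: you expand the multi-factor Leibniz rule coordinate-wise in one shot and regroup Hadamard products via $\diag(u)v = u\odot v$, whereas the paper organizes the same computation as a recursive layer-by-layer peeling using the matrix identity $J(G_\ell(x) r(x)) = G_\ell(x) Jr(x) + \diag(r(x))\, J\bigl(g_\ell^\prime(W_\ell^\top y^{[\ell-1]}(x))\bigr)$.
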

\begin{proof}
  Let $Jh(x)$ denote the Jacobian matrix of any vector-valued function $h$. By the chain
  rule we get
   $J y^{\lind{\ell}}(x) = G_\ell(x)W_\ell^\top G_{\ell-1}(x) \cdots W_1^\top$ for all $\ell \in [L]$,
  which implies
  \begin{equation} \label{eq.nn_deriv.grad}
   \nabla f_p(x) = W_1 G_1(x) \cdots W_L G_L(x) e_p,\quad p \in [m_L],
  \end{equation}
  where $e_p$ is a vector of the canonical basis.
  Now let us evaluate some preliminary expressions for later use in the Hessian of $f_p$.
 We recall that, whenever $g_{\ell}$ or its derivatives are applied to vectors, we assume that they act componentwise, otherwise they act as univariate scalar functions.
  Using the chain rule we obtain
\begin{equation}
  \label{eq.nn_deriv.aux1}
  \begin{aligned}
   J\left( g^\prime_\ell(W_\ell^\top y^{\lind{\ell-1}}(x)) \right)
   & = \opdiag( g_\ell^\dprime( W_\ell^\top y^{\lind{\ell-1}}(x)) W_\ell^\top  J y^{\lind{\ell-1}}(x)  \\
   & = \opdiag( g_\ell^\dprime( W_\ell^\top y^{\lind{\ell-1}}(x)) W_\ell^\top G_{\ell-1}(x) W_{\ell-1}(x)^\top \ldots G_1(x)W_1^\top.
 \end{aligned}
\end{equation}
  Furthermore, for any $\bbR^{m_\ell}$-valued differentiable function $r(x)$ we have, using the product and chain rule, that
  \begin{equation}
    \label{eq.nn_deriv.aux2}
  \begin{aligned}
   &J(G_\ell(x) r(x) )
    = J\begin{pmatrix}
		   g_\ell^\prime(\langle w^{\lind{\ell}}_i, y^{\lind{\ell-1}}(x)\rangle) r_i(x)
                   \end{pmatrix}_{i \in [m_\ell]} \\
   &\quad = \begin{pmatrix}
        g_\ell^\prime(\langle w^{\lind{\ell}}_i, y^{\lind{\ell}}(x) \rangle) J r_i(x)
       \end{pmatrix}_{i \in [m_\ell]}
       +
       \begin{pmatrix}
        J \left( g_\ell^\prime(\langle w^{\lind{\ell}}_i, y^{\lind{\ell}}(x) \rangle) \right)
        r_i(x)
       \end{pmatrix}_{i \in [m_\ell]}  \\
   &\quad = G_\ell(x) J r(x) +
   \opdiag(r(x)) J\left( g_\ell^\prime(W_\ell^\top y^{\lind{\ell-1}}(x)) \right)
 \end{aligned}
\end{equation}
  By using \eqref{eq.nn_deriv.grad} - \eqref{eq.nn_deriv.aux2},
  the linearity of $J$,  and the definition of matrices $V_1=W_1$ and $S^{[1]}(x)$,   we obtain the explicit form of the Hessian of $f_p$
  \begin{align*}
   &\nabla^2 f_p(x)  = J ( \nabla f_p(x) )  = J \left( \left( \prod_{\ell=1}^L W_\ell G_\ell(x) \right)e_p \right)  = W_1 J \left( G_1(x) \left( \prod_{\ell=2}^L W_\ell G_\ell(x) \right) e_p \right) \\
   &\quad = W_1 \opdiag\left( \left(\prod_{\ell=2}^L W_\ell G_\ell(x) \right) e_p \right) J \left( g_1^\prime(W_1^\top x) \right)
    + W_1 G_1(x) J \left( \left( \prod_{\ell=2}^L W_\ell G_\ell(x) \right) e_p \right) \\
   &\quad = W_1 \opdiag\left( \left(\prod_{\ell=2}^L W_\ell G_\ell(x) \right) e_p \right) \opdiag \left( g_1^\dprime(W_1^\top x) \right) W_1^\top
    + W_1 G_1(x) J \left( \left( \prod_{\ell=2}^L W_\ell G_\ell(x) \right) e_p \right) \\
   &\quad = V_1 S_p^{\lind{1}}(x) V_1^\top + W_1 G_1(x) W_2 J \left( G_2(x) \left( \prod_{\ell=3}^L W_\ell G_\ell(x) \right) e_p \right)
  \end{align*}
  Repeating these steps leads to
  \begin{align*}
   \nabla^2 f_p(x) & = \sum_{\ell=1}^{L-1} V_\ell(x) S^{\lind{\ell}}_p(x) V_\ell(x)^\top + \left(\prod_{\ell=1}^{L-1} W_\ell G_\ell(x) \right)
    J \left( W_L G_L(x) e_p \right) \\
   & =  \sum_{\ell=1}^{L-1} V_\ell(x) S_p^{\lind{\ell}}(x) V_\ell(x)^\top
    + V_L(x) \opdiag(g_L^\dprime(W_L^\top y^{\lind{L-1}}(x)) \opdiag(e_p) W_L^\top J y^{\lind{L-1}}(x) \\
   & =  \sum_{\ell=1}^{L} V_\ell(x)  S_p^{\lind{\ell}}(x) V_\ell(x)^\top.
  \end{align*}
\end{proof}

It still remains to clarify the role of entangled weights in the  loss-less reparametrization
of the original network $f$. Let us assume we have access to matrices of the type
\begin{align}
\label{eq:entangled_weights_lol}
\widetilde V_{\ell} = \prod\limits_{k=1}^{\ell-1}(W_k D_k)W_{\ell}\pi_{\ell}S_{\ell} \in \bbR^{D \times m_{\ell}},\qquad \ell \in [L],
\end{align}
where $D_1,\ldots,D_{L-1}, S_1,\ldots,S_L$ are arbitrary invertible diagonal matrices
and $\pi_{1},\ldots,\pi_{L}$ are permutation matrices. In particular, we have in mind
$D_i = G_i(x)$ for some $x \in \bbR^D$, where the diagonal matrices $G_1(x),\ldots,G_{L-1}(x)$ are invertible,
so that $\widetilde V_{\ell}$ equals $V_{\ell}(x)$ after rescaling of
the columns by some scaling matrix $S_\ell$ and permuting entangled
weights (columns of $V_{\ell}(x)$) by $\pi_{\ell}$. Having access to such matrices,
we can derive the following reparametrization result.
\begin{prop}
\label{prop:existence_of_equivalent_network}
Let $f$ be a feedforward network as in Definition \ref{dfn:feedforward_networks}
and assume we have access to matrices $\widetilde V_{1},\ldots, \widetilde V_{L}$
as in \eqref{eq:entangled_weights_lol} for invertible diagonal matrices $D_\ell, S_\ell$ and permutation matrices $\pi_{\ell}$.
Furthermore, assume $\rank(\widetilde V_{\ell}) = m_{\ell}$ for all $\ell \in [L]$.
Then the feedforward network $\tilde f$ defined by
weight matrices $\widetilde W_1^\top = S_1^{-1}\widetilde V_1^\top$ and
\begin{align*}
\widetilde W_{\ell+1}^\top = S_{\ell+1}^{-1} \widetilde V_{\ell+1}^\top (\widetilde V_{\ell}^\top)^{\dagger} S_{\ell} \widetilde D_{\ell}^{-1},\quad \ell \in [L-1],
\end{align*}
with $\widetilde D_{\ell} = \pi_{\ell}^\top D_{\ell}\pi_{\ell}$, shifts $\widetilde \tau_{\ell} = \pi_{\ell}^\top \tau_{\ell}$, and
activation functions $g$ satisfies $\widetilde f \equiv \pi_{L}^\top \circ f$.
\end{prop}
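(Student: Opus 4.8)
The plan is to prove the identity $\widetilde f \equiv \pi_L^\top \circ f$ by induction on the layer index, showing that the hidden activations of the reparametrized network are permuted copies of those of $f$. Writing $\widetilde y^{\lind{\ell}}(x)$ for the layer-$\ell$ activations of $\widetilde f$, so that $\widetilde y^{\lind{0}}(x)=x$ and $\widetilde y^{\lind{\ell}}(x) = g(\widetilde W_\ell^\top \widetilde y^{\lind{\ell-1}}(x) + \widetilde\tau_\ell)$, the claim to establish is
\[
\widetilde y^{\lind{\ell}}(x) = \pi_\ell^\top y^{\lind{\ell}}(x), \qquad \ell \in [L].
\]
Evaluated at $\ell = L$ this gives $\widetilde f(x) = \widetilde y^{\lind{L}}(x) = \pi_L^\top y^{\lind{L}}(x) = \pi_L^\top f(x)$, which is exactly the assertion. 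Throughout I would use the elementary fact that, since $g$ acts componentwise and $\pi$ is a permutation matrix, $g(\pi^\top z) = \pi^\top g(z)$ for every vector $z$; this is what lets permutations pass through the nonlinearity.

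For the base case $\ell=1$ the empty product in \eqref{eq:entangled_weights_lol} gives $\widetilde V_1 = W_1\pi_1 S_1$, so $\widetilde W_1^\top = S_1^{-1}\widetilde V_1^\top = S_1^{-1} S_1 \pi_1^\top W_1^\top = \pi_1^\top W_1^\top$; combined with $\widetilde\tau_1 = \pi_1^\top\tau_1$ and the commutation fact, this yields $\widetilde y^{\lind{1}}(x) = g(\pi_1^\top(W_1^\top x + \tau_1)) = \pi_1^\top y^{\lind{1}}(x)$. For the inductive step, assuming the claim at level $\ell$, it suffices to prove the purely algebraic identity $\widetilde W_{\ell+1}^\top = \pi_{\ell+1}^\top W_{\ell+1}^\top \pi_\ell$: once this is known, $\widetilde W_{\ell+1}^\top\widetilde y^{\lind{\ell}}(x) + \widetilde\tau_{\ell+1} = \pi_{\ell+1}^\top W_{\ell+1}^\top \pi_\ell\pi_\ell^\top y^{\lind{\ell}}(x) + \pi_{\ell+1}^\top\tau_{\ell+1} = \pi_{\ell+1}^\top(W_{\ell+1}^\top y^{\lind{\ell}}(x)+\tau_{\ell+1})$, and applying $g$ together with the commutation fact closes the induction.

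The heart of the argument, and the only step that requires real care, is verifying this weight identity through the pseudoinverse. Setting $P_\ell := \prod_{k=1}^{\ell-1}(W_k D_k)$ and $Q := W_\ell^\top P_\ell^\top$, and using that $S_\ell, D_\ell$ are diagonal (hence symmetric), one has $\widetilde V_\ell^\top = S_\ell\pi_\ell^\top Q$ and $\widetilde V_{\ell+1}^\top = S_{\ell+1}\pi_{\ell+1}^\top W_{\ell+1}^\top D_\ell Q$. The main obstacle is computing $(\widetilde V_\ell^\top)^\dagger$ and simplifying the product $\widetilde V_{\ell+1}^\top(\widetilde V_\ell^\top)^\dagger$. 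Here the full-rank hypothesis $\rank(\widetilde V_\ell) = m_\ell$ is essential: since $S_\ell\pi_\ell^\top$ is invertible it forces $Q$ to have full row rank $m_\ell$, so that $\widetilde V_\ell^\top$ has full row rank and $(\widetilde V_\ell^\top)^\dagger = (\widetilde V_\ell^\top)^\top(\widetilde V_\ell^\top(\widetilde V_\ell^\top)^\top)^{-1}$. Substituting $\widetilde V_\ell = Q^\top\pi_\ell S_\ell$ and inverting the Gram matrix, the invertible factors $S_\ell$ and $\pi_\ell$ collapse, leaving the clean factorization $(\widetilde V_\ell^\top)^\dagger = Q^\dagger\pi_\ell S_\ell^{-1}$. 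Using $QQ^\dagger = \Id_{m_\ell}$ (again full row rank) I then obtain
\[
\widetilde V_{\ell+1}^\top(\widetilde V_\ell^\top)^\dagger = S_{\ell+1}\pi_{\ell+1}^\top W_{\ell+1}^\top D_\ell\,(QQ^\dagger)\,\pi_\ell S_\ell^{-1} = S_{\ell+1}\pi_{\ell+1}^\top W_{\ell+1}^\top D_\ell\pi_\ell S_\ell^{-1}.
\]

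Finally, substituting this into the definition $\widetilde W_{\ell+1}^\top = S_{\ell+1}^{-1}\widetilde V_{\ell+1}^\top(\widetilde V_\ell^\top)^\dagger S_\ell\widetilde D_\ell^{-1}$, the outer factors $S_{\ell+1}^{-1}$ and $S_\ell$ cancel, yielding $\widetilde W_{\ell+1}^\top = \pi_{\ell+1}^\top W_{\ell+1}^\top D_\ell\pi_\ell\widetilde D_\ell^{-1}$. Inserting $\widetilde D_\ell^{-1} = \pi_\ell^\top D_\ell^{-1}\pi_\ell$ and using $\pi_\ell\pi_\ell^\top = \Id_{m_\ell}$ makes the diagonal factors $D_\ell$ and $D_\ell^{-1}$ cancel as well, giving precisely $\widetilde W_{\ell+1}^\top = \pi_{\ell+1}^\top W_{\ell+1}^\top\pi_\ell$ and completing the induction. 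I expect everything outside the pseudoinverse manipulation to be routine bookkeeping with permutation and diagonal matrices; the one place demanding attention is justifying the factorization of $(\widetilde V_\ell^\top)^\dagger$, where the rank assumption $\rank(\widetilde V_\ell)=m_\ell$ is used twice — to guarantee $QQ^\dagger = \Id_{m_\ell}$ and to license the reverse-order simplification of the pseudoinverse.
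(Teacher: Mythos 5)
Your proposal is correct and takes essentially the same approach as the paper's proof: the identical layerwise induction on $\widetilde y^{[\ell]} = \pi_{\ell}^\top y^{[\ell]}$, the identical base case, and the identical key identity $\widetilde V_{\ell+1}^\top (\widetilde V_{\ell}^{\top})^\dagger = S_{\ell+1}\pi_{\ell+1}^\top W_{\ell+1}^\top D_{\ell}\pi_{\ell} S_{\ell}^{-1}$, with the rank assumption used in the same way to make the pseudoinverse act as a one-sided inverse. The only (cosmetic) difference is bookkeeping: you compute the factorization $(\widetilde V_{\ell}^\top)^\dagger = Q^\dagger \pi_{\ell} S_{\ell}^{-1}$ explicitly via the Gram matrix and isolate the clean weight identity $\widetilde W_{\ell+1}^\top = \pi_{\ell+1}^\top W_{\ell+1}^\top \pi_{\ell}$ before closing the induction, whereas the paper derives the recursion $\widetilde V_{\ell+1} = \widetilde V_{\ell}S_{\ell}^{-1}\pi_{\ell}^\top D_{\ell}W_{\ell+1}\pi_{\ell+1}S_{\ell+1}$, invokes $\widetilde V_{\ell}^{\dagger}\widetilde V_{\ell} = \Id_{m_{\ell}}$ directly, and cancels the diagonal factors inside the induction step.
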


Proposition \ref{prop:existence_of_equivalent_network} clarifies that
$f$ can be expressed, without loss, by a new set of weight matrices $\widetilde W_\ell$,
which are completely defined by matrices $\widetilde V_{\ell}$ up to diagonal matrices
$S_1,\ldots, S_L$ and $\widetilde D_1,\ldots, \widetilde D_{L-1}$, and of course also by the
shift vectors $\tau_1,\ldots,\tau_L$. The degress of freedom of the latter parameters $S_\ell, D_\ell, \tau_\ell$ are $\CO(\sum_{\ell=0}^{L}m_{\ell})$, which is in fact much fewer
than the number of total parameters $\CO(\sum_{\ell=0}^{L-1}m_{\ell+1}m_{\ell})$ in the
original network. Hence, if we would dispose of matrices of the type $\widetilde V_{\ell}$, then we would obtain also a significant (loss-less) de-parametrization of the network.

\begin{rem}
It is important to notice that Proposition \ref{prop:existence_of_equivalent_network} comes with the condition
$\rank(\widetilde V_{\ell}) = m_{\ell}$ (this can be relaxed to the more technical condition $S_{\ell}\pi_{\ell}^\top D_\ell y^{\lind{\ell}}(x) \in \ker(V_{\ell})^\perp$ for all $x\in \bbR^D$),
which restricts the applicability of the result to pyramidally shaped networks with
$D = m_0 \geq m_1 \geq \ldots \geq m_{L}$. Considering the fact that we replace
weight matrices $W_{\ell} \in \bbR^{m_{\ell-1} \times w_{\ell}}$ by scaled versions of
$\widetilde V_{\ell+1}^\top \widetilde V_{\ell}^{\dagger}$, it is clear that the rank
of the modified weight matrix is limited by $\min\{m_{\ell}, m_{\ell+1}, D\}$ just by
evaluating the matrix dimensions. Therefore, we can not hope for loss-free reparametrization
in the case of general non-pyramidic networks. A fairly straight-forward extension is possible
however when the entire network $f$ is not fully connected, but consists of several pyramidically
shaped subnetworks. In that case, by considering each subnetwork separately
and then forming a modified weight matrix with blocks corresponding to each subnetwork
loss-free reparametrization is possible again.
\end{rem}

\begin{proof}[Proof of Proposition \ref{prop:existence_of_equivalent_network}]
For readability the dependence on the input  $x$ may be not written in the following.
Let $\widetilde y^{\lind{\ell}}$ be the output function at layer $\ell$ of the network
with weights $\widetilde W_{1},\ldots,\widetilde W_{L}$ and shifts $\widetilde \tau_{1},\ldots,\widetilde \tau_{L}$.
We shall prove by induction that $\widetilde y^{\lind{\ell}} \equiv \pi_{\ell}^\top y^{\lind{\ell}}$  for all $\ell \in [L]$.

\noindent
\textbf{Induction start.}
Using the definition of $\widetilde W_1$ and $S_1^{-1}\widetilde V_1^\top = \pi_1^\top W_1^\top$, we obtain for the first layer
\begin{align*}
\widetilde y^{\lind{1}} &= g(\widetilde W_1^\top y^{\lind{0}} + \widetilde\tau_{1}) =
g(S_1^{-1} \widetilde V_1^\top  y^{\lind{0}} + \pi_1^\top \tau_1) =
g(\pi_1^\top W_1^\top  y^{\lind{0}} + \pi_1^\top \tau_1)  = \pi_1^\top y^{\lind{1}} .
\end{align*}
\textbf{Induction step.}
Now, let us perform the induction step and assume $\widetilde y^{\lind{\ell}} = \pi_{\ell}^\top y^{\lind{\ell}}$. We first note that \eqref{eq:entangled_weights_lol} implies
$\widetilde V_{\ell}S_{\ell}^{-1}\pi_{\ell}^\top = \Pi_{k=1}^{\ell-1}(W_kD_k)W_\ell$ and therefore we obtain the relation
\begin{equation}
\label{eq:V_eqn}
\begin{aligned}
\widetilde V_{\ell+1}&= \Pi_{k=1}^{\ell}(W_kD_k)W_{\ell+1}\pi_{\ell+1}S_{\ell+1}\\
&= \Pi_{k=1}^{\ell-1}(W_kD_k)W_{\ell}D_{\ell}W_{\ell+1}\pi_{\ell+1}S_{\ell+1} \\
&= \widetilde V_{\ell}S_{\ell}^{-1}\pi_{\ell}^\top D_{\ell}W_{\ell+1}\pi_{\ell+1}S_{\ell+1}.
\end{aligned}
\end{equation}
Furthermore, since $\widetilde V_{\ell}$ has independent columns we have $\widetilde V_{\ell}^{\dagger} \widetilde V_{\ell} = \Id_{m_{\ell}}$ and from \eqref{eq:V_eqn} we derive the identity
$\widetilde V_{\ell+1}^\top (\widetilde V_{\ell}^{\top})^\dagger = S_{\ell+1}\pi_{\ell+1}^\top W_{\ell+1}^\top D_{\ell}\pi_{\ell} S_{\ell}^{-1}$. Then,
using the definition $\widetilde W_{\ell+1}$ and the identity
for $\widetilde V_{\ell+1}^\top (\widetilde V_{\ell}^{\top})^{\dagger}$, the output of layer $\ell+1$ can be written as
\begin{align*}
\widetilde y^{\lind{\ell+1}} &= g(\widetilde W_{\ell+1}^\top\widetilde y^{\lind{\ell}} + \widetilde \tau_{\ell+1}) = g(S_{\ell+1}^{-1}\widetilde V_{\ell+1}^\top (\widetilde V_{\ell}^{\top})^\dagger S_{\ell}\widetilde D_{\ell}^{-1} \widetilde y^{\lind{\ell}} + \widetilde \tau_{\ell+1})\\
&=g(S_{\ell+1}^{-1}S_{\ell+1}\pi_{\ell+1}^\top W_{\ell+1}^\top D_{\ell}\pi_{\ell} S_{\ell}^{-1}S_{\ell}\widetilde D_{\ell}^{-1} \widetilde y^{\lind{\ell}} + \widetilde \tau_{\ell+1})
=g(\pi_{\ell+1}^\top W_{\ell+1}^\top D_{\ell}\pi_{\ell} \widetilde D_{\ell}^{-1} \widetilde y^{\lind{\ell}} + \widetilde \tau_{\ell+1}).
\end{align*}
Furthermore, by the induction hypothesis we have
$\widetilde y^{\lind{\ell}} = \pi_{\ell}^\top y^{\lind{\ell}}$. Using additionally
$\pi_{\ell}\widetilde D_{\ell}^{-1}\pi_{\ell}^\top = D_{\ell}^{-1}$,
it follows that
\begin{align*}
\widetilde y^{\lind{\ell+1}} &= g(\pi_{\ell+1}^\top W_{\ell+1}^\top D_{\ell}\pi_{\ell} \widetilde D_{\ell}^{-1} \pi_{\ell}^\top y^{\lind{\ell}} + \widetilde \tau_{\ell+1}) = g(\pi_{\ell+1}^\top W_{\ell+1}^\top y^{\lind{\ell}} + \pi_{\ell+1}\tau_{\ell+1})
= \pi_{\ell+1}^\top y^{\lind{\ell+1}}.
\end{align*}
\end{proof}

\section{Reconstruction pipeline based on entangled weight vectors}
\label{sec:algorithm}
In this section we present the numerical approach for reconstructing a network
function $f : \bbR^D \rightarrow \bbR^{m_L}$ of pyramidal shape, i.e., which satisfies
$D = m_0 \geq \ldots \geq m_L$. Based on Proposition \ref{prop:existence_of_equivalent_network}, the
function $f$ can be reparametrized using few parameters when
having access to scaled and possibly column-permuted entangled weight matrices $V_{1}(x^{\ast}),\ldots,V_{L}(x^\ast)$
for some $x^\ast \in \bbR^D$. This motivates the splitting of the problem into two phases,
where the first phase aims at identifying entangled weight matrices  $V_{1}(x^{\ast}),\ldots,V_{L}(x^\ast)$
(or rather scaled and permuted versions thereof as in Eqn. \eqref{eq:entangled_weights_lol}),
and the second phase recovers the remaining scale information and shift parameters
using vanilla gradient descent for a least squares fit. Although gradient descent
is still employed as in standard backpropagation, we stress that the least squares problem
involves much fewer free parameters and, more importantly, will be fed with a priori
computed entangled weight information. As shown in the experiments, the subsequent gradient
descent step converges fairly quickly to a network which is uniformly close to $f$,
contrary to using backpropagation for learning the network entirely from scratch.

Recovering $V_{1}(x^{\ast}),\ldots,V_{L}(x^\ast)$ in the first phase of the pipeline
is done via a three-step algorithm. The first step, called \textbf{Building the context},
approximately identifies the subspace
\begin{equation} \label{eq:approx_subspace_W}
 \CW := \Span{v^{[\ell]}_{i}(x^\ast) \otimes v^{[\ell]}_i(x^\ast) \mid i=1,\ldots,m_\ell, \: \ell=1,\ldots,L, }
\end{equation}
spanned by outer products of entangled weight vectors by leveraging that Hessians
expose these vectors, see Proposition \ref{prop:nn_derivatives}. The second step,
called \textbf{Weight recovery}, recovers the spanning elements $\{v^{[\ell]}_{i}(x^\ast)  \mid i \in [m_\ell], \: \ell \in [L]\}$
within $\CW$ using a suitably defined nonconvex program. Finally, since layer assignment
information for each recovered weight is lost when working with the subspace $\CW$,
the last step, \textbf{Weight assignment}, assigns each recovered weight to the corresponding layer.

Let us provide additional details for each of the four algorithmic  steps in the following.
\subsection{Building the context}
\label{subsec:building_the_context}
The starting point for approximating the subspace $\CW$ is the structure of the Hessians as in
Proposition \ref{prop:nn_derivatives}
\begin{equation}
\label{eq:hessian_formulation_alg}
 \nabla^2 f_p(x) = \sum_{\ell=1}^L V_\ell(x) S_p^{\lind{\ell}}(x) V_\ell(x)^\top,
\end{equation}
which makes explicit that entangled weights $V_1(x),\ldots,V_L(x)$ are naturally exposed by differentiation. 
In the case of shallow networks, i.e., for
$L = 1$, we have only $V_{1}(x) = W_1$ and
we can exactly recover $\CW$ by taking the span of ca. $m_1$ linearly
independent Hessian matrices of $f$: more precisely, when sampling
$\numsamples \approx m_1/m_L=1$ locations $X_1,\ldots,X_{\numsamples}$ at random, e.g., uniformly from the unit sphere, for generic networks
the resulting Hessians $\nabla^2 f_1(X_1),\ldots, \nabla^2 f_{m_L}(X_{\numsamples})$ are linearly independent with high probability. 
Furthermore, if we only have access to point queries of the network,
Hessians can be instead approximated by finite difference approximations $\Delta_{\epsilon}^2 f_p(x) \approx  \nabla^2 f_p(x)$,
thus incurring a small error while approximating $\CW$, depending on the finite difference stepsize $\epsilon$.
We refer to \cite{fornasier2018identification} for further details on the shallow network case.

\begin{algorithm}[t]

	\KwIn{Neural network $f:\R^D \to \R^{m_L}$ with layers of width $m_1, \dots, m_L$, number of locations $N_H$, probability distribution $\mu_X$ with $\bbE_{X\sim \mu_X}X  = x^{\ast}$}

		Draw $x_1,\ldots,x_{N_H}$ independently from $\mu_X$ \\
		Build submatrices $\widehat M_p=
			(\opvec ( \Delta_{\epsilon}^2 f_p(x_1) ) | \ldots |    \opvec(\Delta_{\epsilon}^2  f_p(x_{N_H})))$ for $p \in [m_L]$  \\
		Combine submatrices by joining them along their columns $M = \left( \widehat M_1 | \dots | \widehat M_{m_L}\right)$ \\
		Perform SVD $\widehat M= \widehat U\widehat\Sigma \widehat V^\top$ with $\widehat\Sigma$ in descending order.\\
    Let $\widehat U_1$ store the first $m_1+\ldots + m_L$ columns of $\widehat U$ and set $P_{\widehat \CW} := \widehat U_1 \widehat U_1^\top$\\

	\KwOut{$P_{\widehat \CW}$}

	\caption{\textbf{Building the context}}

	\label{alg:subspace_approx}
\end{algorithm}
Judging by the structure of entangled weights for $\ell \geq 2$ and their dependence
on the location $x$ in Proposition \ref{prop:nn_derivatives}, for $L\geq 2$ we cannot expect the Hessians to lie precisely on
a space spanned by  the same rank-$1$ matrices.
As a consequence, the Hessians can not be expected to be contained in $\CW$ anymore,
except of course for the Hessians $\nabla f_1(x^\ast),\ldots,\nabla f_{m_L}(x^\ast)$ whose corresponding
entangled weights span $\CW$. However, by examining the entangled weights more closely,
we can derive the Lipschitz continuity of the matrix valued functions $x\mapsto V_{\ell}(x)$, $\ell \in [L]$,
with Lipschitz constant depending on the network's complexity (Lemma \ref{lem:aux_bounds}), so that Hessians
still concentrate around the subspace $\CW$ for $x$'s concentrating around $x^\ast$.
This motivates to first sample $\numsamples\approx (m_1+\ldots+m_L)/m_L$ locations $X_1,\ldots,X_{\numsamples}$
from a measure $X\sim \mu_X$ with $\bbE X = x^{\ast}$ and which also concentrates tightly around $x^{\ast}$.
Then one approximates the subspace $\CW$ as the span of the leading
$m_1+\ldots+m_L$ right singular vectors of the matrices $M \in \bbR^{D^2 \times m_L\numsamples}$ or $\widehat M \in \bbR^{D^2 \times m_L\numsamples}$ given by
\begin{align*}
M &:= (M_1|\ldots| M_{m_L})\quad \textrm{with}\quad M_p = (\opvec(\nabla^2 f_p(X_1))|\ldots|\opvec(\nabla^2 f_p(X_{\numsamples}))) \in \bbR^{D^2\times m_L \numsamples},\\
\textrm{or}\quad \widehat M &:= (\widehat M_1|\ldots| \widehat M_{m_L})\quad  \textrm{with}\quad \widehat M_p = (\opvec(\Delta_{\epsilon}^2 f_p(X_1))|\ldots|\opvec(\Delta_{\epsilon}^2 f_p(X_{\numsamples}))) \in \bbR^{D^2\times m_L \numsamples},
\end{align*}
depending on whether we have access to derivatives or just to point evaluations of the network $f$.
Instead of using ordinary singular vectors, related to the ordinary principal component analysis, one may also
consider more robust procedures  \cite{10.1145/1970392.1970395,NIPS2010_4005,NIPS2014_5430}.
A summary of the procedure is given in Algorithm \ref{alg:subspace_approx}
whose theoretical analysis is postponed to Section \ref{sec:subspace_approx} (see Theorem \ref{thm:approx_bound}
for the main result).

Notice that we have thus far not specified the distribution $\mu_X$, in particular the choice of
$x^{\ast}$ and the level of concentration. The location $x^{\ast}$ is best chosen as a point,
where the network $f$ has much second order information, i.e., where $\nabla^{2}f(x)$ for
$x \approx x^{\ast}$ is sufficiently rich to allow for recovering the subspace $\CW$. For instance, if the network
has fairly small first layer shifts $\tau_1\approx 0$, $x^{\ast} = 0$ seems to be a suitable choice. Furthermore,
the recentred distribution $X-x^{\ast}$ should have small sub-Gaussian norm of order $\approx 1/\sqrt{D}$, motivating
for instance the generic choice $\mu_X = R\cdot\textrm{Unif}(\bbS^{D-1})$ for small $R > 0$. Furthermore,
if $m_1 \ll D$, which implies the network is active only on a small subspace of $\bbR^D$,
we can first identify the active subspace, see eg. \cite{fornasier2018identification,fornasier2019robust},
and then view the network as a function $f : \bbR^{m_1}\rightarrow \bbR^{m_L}$ instead.

\subsection{Weight recovery}
\label{subsec:weight_recovery}
In the second step we use the output of \textbf{Building the context}, which is the subspace  $\widehat \CW\approx \CW$ with
corresponding orthoprojector $P_{\widehat \CW}\approx P_{\CW}$, to approximately recover
the spanning rank-one matrices of the subspace $\CW$. Recalling the definition of $\CW$
in \eqref{eq:approx_subspace_W}, these are precisely outer products of entangled $v^{[\ell]}_{i}(x^\ast)$,
thus the recovery of
the spanning rank-one matrices yields the  entangled weights. We stress that
we lose sign and scale information by working only with the subspace information $\CW$,
which is however unproblematic in light of the reparametrization result in Proposition \ref{prop:existence_of_equivalent_network}.

To describe how we approach the recovery problem, let us first assume we have access to the exact subspace $\CW$. The
key property, which allows for recovering its
 spanning elements, 
 is that they are \emph{uniquely identifiable} by being matrices of rank one within $\CW$.
More precisely, we can first observe that any global maximizer of the nonconvex program
\begin{align}
\label{eq:optimal_program_no_perturbation}
\max_{\N{u} \leq 1} \Phi(u) := \N{P_{\CW}(u\otimes u)}_F^2,
\end{align}
must be a rank-one matrix in the subspace $\CW$. The program \eqref{eq:optimal_program_no_perturbation}
has been considered implicitly in \cite[Lemma 16 and Lemma 17]{fornasier2019robust} and it has been proposed independently in \cite{fiedlerlearning, kileel2019subspace} in the context of tensor decompositions and neural networks. The extensive analysis in \cite{kileel2019subspace}
has shown that under fairly general conditions on the spanning rank-one elements, i.e., the
entangled weight vectors, there are no more rank-one matrices in the subspace $\CW$
other than original spanning elements. Hence, we can discover in this way the entangled weights up to scale and sign
information within $\CW$
by seeking for global maximizers of the problem \eqref{eq:optimal_program_no_perturbation}.

Returning to the case of disposing of an approximating space $\widehat \CW\approx \CW$  only,
 we thus aim at identifying the entangled weights
by searching global maximizers of the perturbed problem
\begin{align}
\label{eq:optimal_program_perturbation}
\max_{\N{u} \leq 1} \widehat \Phi(u) := \N{P_{\widehat \CW}(u\otimes u)}_F^2.
\end{align}
As we will show in Section \ref{sec:rank_one_recov}, the program exhibits a certain
degree of robustness  such that it is still possible to approximately
recover the entangled network weights under genericity conditions and even in the presence of a perturbation.
However, since the perturbation between $\widehat \CW$ and $\CW$ is highly structured and
not of random type, the recovery guarantees are naturally weaker than in the clean case.

So far we have not described how to find global or even local maximizers of either
program \eqref{eq:optimal_program_no_perturbation} or \eqref{eq:optimal_program_perturbation}.
We actually can use a simple projected gradient ascent algorithm, which has been recently proposed
in \cite{fiedlerlearning, kileel2019subspace}, and starts from randomly sampled
$u_0 \sim \textrm{Unif}(\bbS^{D-1})$ and then iterates
\begin{align}
\label{eq:weight_recovery_iteration}
u_{j} = P_{\bbS^{D-1}}(u_{j-1} + 2\gamma P_{\widehat\CW}(u_{j-1} \otimes u_{j-1}) u_{j-1}),
\end{align}
until convergence (or a convergence criteria is met). Here, $\gamma > 0$ is a gradient step-size. The analysis in
\cite{kileel2019subspace} shows that the algorithm almost surely
avoids saddle points when being randomly initialized, and thus always converges
to a local maximizer of \eqref{eq:optimal_program_perturbation}.
In Section \ref{sec:rank_one_recov} we complement the analysis in \cite{kileel2019subspace} by providing a stability analysis about local maximizers of the
functional $\Phi$ and $\widehat \Phi$, which justifies our use of the method even in case
of a perturbed subspace.

\begin{algorithm}[t]

	\KwIn{$P_{\widehat{\CW}}$, stepsize $\gamma > 0$, repetitions $n$, steps $K$}

		\For{$i = 1\ldots n$}{
			Sample $\widetilde u_0^i \sim\textrm{Unif}(\bbS^{D-1})$\\

			\For{$j = 1 \ldots K$}{
				$\widetilde u_{j}^i =  P_{\bbS^{D-1}}(\widetilde u_{j-1}^i + 2\gamma P_{\widehat \CW}(\widetilde u_{j-1}^i \otimes \widetilde u_{j-1}^i) \widetilde u_{j-1}^i)$\\
			}
			$u^i \leftarrow \textrm{sign}((\widetilde u_{K}^i)_1) \widetilde u_{K}^i$ (sign chosen so that first entry is nonnegative).
		}

	\KwOut{$u^1,\dots,u^n$}

	\caption{\textbf{Weight recovery}}

	\label{alg:recover_weights}

\end{algorithm}

Running iteration \eqref{eq:weight_recovery_iteration} ideally returns a single entangled weight
up to sign and scale (or an approximation thereof). To recover all entangled weights,
we run the iteration several times, say $n=m \ln m + \gamma m + \frac{1}{2} + \mathcal{O}(1 / m)$ times, see, e.g., \cite[Section 8.4]{isaac2013pleasures}, for $m=\sum_{\ell=1}^{L}m_{\ell}$, with randomly sampled initializations $u_0^i \sim \textrm{Unif}(\bbS^{D-1})$. We describe this multiple iteration in Algorithm \ref{alg:recover_weights}.
As clarified in the next section the resulting set $\{u^1,\dots, u^n\}$ returned by Algorithm \ref{alg:recover_weights} is  clustered into $m=\sum_{\ell=1}^{L}m_{\ell}$ groups using the kMeans++ algorithm. The corresponding cluster centers are then used as approximations to the recovered entangled weight vectors.
\\

This concludes the description of the provable recovery of the entangled weights, see Section \ref{sec:subspace_approx}  and Section \ref{sec:rank_one_recov} for the corresponding theoretical analysis. The sections to follow are
included in order to offer a context to our theoretical results and to show the relevance of entangled weights
for the problem of the robust identification of networks. However, for some of the observed numerical
evidences reported below we do not dispose yet of theoretical justifications, which are in the course
of investigation. We will list them in Section \ref{sec:conclusion} as open problems.

\subsection{Weight assignment}
\label{subsec:weight_assignment}

The previous step yields a number of potential entangled weights denoted by $u^1,\dots, u^n$. In the assignment step we need to build out of $u^1,\dots, u^n$  an approximating representative for each entangled weight and we need to assign  to each approximated entangled weight the corresponding layer inside the network.\\
The method sketched below is designed to work for networks with sigmoidal activations, incoherent weights of similar length, and, quite importantly, {several output neurons}. It is devised to distinguish the first and last layer from the inner layers, which means the assignment is only complete for networks up to three layers.
In what follows, it is necessary to know the number of neurons in each layer to perform the assignment, which implicitly reveals the number of layers as well. Inferring the correct network architecture from network queries is an interesting topic for the future. \\\\
As already mentioned, our starting point is a set of approximations of entangled weights. From this point the assignment can be separated into three distinct steps:
\begin{enumerate}
	\item \textbf{Clustering} the vectors $u^1,\dots, u^n$ so that we distill one approximation per entangled weight, which we will call $\tilde v^1, \dots, \tilde v^m$ where $m=\sum_{\ell=1}^{L} m_\ell$ is the number of neurons in the network.
	\item \textbf{Detecting the first layer} by comparing the squared outer product of the entangled weights, $\tilde{v}^1 \otimes \tilde{v}^1, \dots, \tilde{v}^m \otimes \tilde{v}^m$, to Hessians of the network function.
	\item \textbf{Detecting the last layer} can be done by re-applying the idea that was already used during the recovery step. A neural network with $m_L$ outputs can be viewed as $m_L$ networks with $m_L-1$ outputs, by omitting a single output at a time. We use the fact that the entangled weights corresponding to the last layer do not appear in $m_L-1$ of those networks, as opposed to the entangled weights from previous layers, which in principle are shared between all sub-networks.
\end{enumerate}
Let us now detail how these steps are in fact practically realized.

\paragraph{Clustering.}
Assuming the approximations $u^1,\dots, u^n$ are close to the actual entangled weights and every entangled weight was found at least once, then any classic clustering algorithm like kMeans will solve this task. Primarily, because the entangled weights are well separated from each other in our setting, in view of incoherence assumptions. The sign ambiguity can be dealt with by projecting all approximations on one part of the half sphere, as described already in Algorithm \ref{alg:recover_weights}. This clustering step is summarized in Algorithm \ref{alg:clustering}.
\begin{algorithm}[t]

	\KwIn{Approximations of entangled weights $u^1,\dots, u^n$, number of neurons $m$}

	\Begin{
		Run kMeans on $\{{u}^1,\dots, {u}^n\}$ with $m$ clusters centers\\
		Denote those clusters centers by $\tilde{v}^1, \dots, \tilde{v}^m$
	}

	\KwOut{ $\tilde{v}^1, \dots, \tilde{v}^m$ }

	\caption{\textbf{Clustering}}

	\label{alg:clustering}

\end{algorithm}
\paragraph{Detecting the first layer.}
\begin{algorithm}

	\KwIn{Network $f$, approximations of entangled weights $\tilde{v}^1, \dots, \tilde{v}^m$, number of Hessians $N$, input distribution $\mu_X$, number of neurons in first layer $m_1$}

	\Begin{
			Draw $x_1, \dots x_N$ i.i.d from $\mu_X$\\
			\For{$i = 1\ldots m$}{
				Compute $\mathrm{Sim}_1(\tilde{v}^i, \mathcal{H}(\mu_X, N))$\\
			}
      Let $i_1,\ldots,i_{m_1}$ denote the $m_1$ indices maximizing $i\mapsto \mathrm{Sim}_1(\tilde{v}^i, \mathcal{H}(\mu_X, N))$
	}

	\KwOut{Return $\tilde{v}^{i_1}, \dots, \tilde{v}^{i_{m_1}}$ as the set of entangled weights to the first layer}

	\caption{\textbf{Detecting the first layer}}

	\label{alg:detect_first_layer}
	\end{algorithm}
The Hessians of the $p-$th output neuron is of the type
\begin{equation*}
\nabla^2 f_p(x) = V_1(x) S_p^{\lind{1}}(x) V_1(x)^\top + \sum_{\ell=2}^L V_\ell(x) S_p^{\lind{\ell}}(x) V_\ell(x)^\top.
\end{equation*}
The weights of the first layer $V_1(x)= V_1=W_1$ do not actually depend on $x$ and will contribute to any subspace $\CW$ independently of the choice of $x^\ast$.
 All remaining weights appearing in the sums depend on the input $x$ and it is only due to concentration of measure when combining several Hessians that we are able to recover the entangled weights of deeper layers. Hence, we can choose $x$, so that we can sample Hessians in a way which uncorrelates $v^{[\ell]}_i(x)$  from the corresponding entangled weight $v^{[\ell]}_i(x^\ast)$. This uncorrelation can be obtained  by drawing independently the inputs $x_1, \dots, x_N$ according to a suitably chosen distribution $\mu_X$, for example the uniform distribution on the sphere, this time  with large radius $R\gg 0$. Then, in order to detect $V_1=W_1$, we look for those approximated entangled weights, which have the largest correlation to the Hessians of the output neurons at the points $x_1, \dots, x_N$.
Specifically, for the specified distribution $\mu_X$ and chosen number $N$ of Hessians, we use the similarity measure
\begin{align}\label{eq:def_sim}
 \mathrm{Sim}_1(u, \mathcal{H}(\mu_X,N)) := \max_{H\in \cal H} \left|\left\langle u\otimes u,
   \frac{H}{\|H\|_F} \right\rangle_F \right|,
 \end{align}
where $\mathcal{H}:=\mathcal{H}(\mu_X, N) := \{\nabla^2 f_p(x_i) : p \in [m_L],\ x_1, \ldots, x_{N} \sim_{\textrm{iid}} \mu_X\}\subset \bbR^{D\times D}$
contains the $N \times m_L$ Hessian matrices.

\paragraph{Detecting the last layer.}
Given a network $f : \bbR^D \rightarrow \bbR^{m_L}$, let $f_{-p}:\bbR^D\rightarrow \bbR^{m_L-1}$ denote the
subnetwork consisting of all but the $p$-th output. Furthermore, define the corresponding subspace $\CW_{-p}$ as
in \eqref{eq:approx_subspace_W} for $f_{-p}$ (i.e. without the outer product $v_{p}^{\lind{L}}(x^\ast)\otimes v_{p}^{\lind{L}}(x^\ast)$), and
let $P_{-p} := P_{\widehat \CW_{-p}}$ be the orthoprojector constructed according to Algorithm \ref{alg:subspace_approx}
for the network $f_{-p}$.

Under the assumption that all shared weights in layers $1$ to $L-1$ can be recovered reasonably well
from any subnetwork $f_{-p}$, $p \in [m_L]$, we expect all but one outer product
$\tilde{v}^1\otimes \tilde{v}^1, \dots, \tilde{v}^{m}\otimes \tilde{v}^{m}$
to be highly correlated with the subspace $\widehat \CW_{-p}$. The remaining candidate,
which is not correlated with $\widehat \CW_{-p}$, corresponds to the matrix
$v_{p}^{\lind{L}}(x^\ast)\otimes v_{p}^{\lind{L}}(x^\ast)$, i.e., to the $p$-th
entangled weight relative to the last layer.

Algorithmically, we leverage the observation by first computing a score matrix $S \in \bbR^{m_L \times m}$,
where $S_{p,i} := \textN{P_{-p}(\tilde{v}^i\otimes \tilde{v}^i)}_F$. Then, we normalize
the columns of the matrix to norm one (this improves the robustness of the approach if some
weights have not been identified well in the recovery step) and identify the $m_L$ columns of $S$, which have the smallest correlation
with the all-ones vector $\mathbbm{1}_{m_L}$, see Algorithm \ref{alg:detect_last_layer}.

We note that the proposed approach can re-use the Hessian matrices sampled in the Step
`Building the context', so no resampling and additional hyperparameter tuning is required.
Furthermore, if the first layer is detected prior to the last layer, we can remove the corresponding
weights from candidate list $\tilde{v}^1, \dots, \tilde{v}^{m}$, leaving $m-m_1$ potential candidates.
Lastly we add that the approach allows identification of the ordering of weights in the last layer,
which will be an important asset in the `Network completion' step described next.

\begin{algorithm}[t]

	\KwIn{Projection matrices $P_{-1},\dots, P_{-m_L}$, approximations of entangled weights $\tilde{v}^1, \dots, \tilde{v}^{m}$, number of neurons in last layer $m_L$}

	\Begin{
  Compute matrix $S_{p,i} := \textN{P_{-p}(\tilde{v}^i\otimes \tilde{v}^i)}_F \in \bbR^{m_L\times m}$\\
  \For{$i = 1\ldots m$}{
  Compute $\textrm{Sim}_L(i) := \langle \mathbbm{1}_{m_L}, S_{i}/\N{S_{i}}_2\rangle$, where $S_{i}$ is the $i$-th column of $S$
  }
  Let $i_1,\ldots,i_{m_L}$ denote the $m_L$ indices  minimizing $i\mapsto \textrm{Sim}_L(i)$
	}

	\KwOut{Return $\tilde{v}^{i_1}, \dots, \tilde{v}^{i_{m_L}}$ as the set of entangled weights of the last layer}

	\caption{\textbf{Detecting the last layer}}

	\label{alg:detect_last_layer}

\end{algorithm}

\subsection{Network completion}
\label{subsec:network_completion}
Assume now that we have recovered approximations $\widetilde V_1,\ldots, \widetilde V_L$ to the matrices
\begin{align*}
 V_{\ell} = \prod\limits_{k=1}^{\ell-1}(W_k G_k(x^\ast))W_{\ell}\pi_{\ell}S_{\ell} \in \bbR^{D \times m_{\ell}},\qquad \ell \in [L],
\end{align*}
where the diagonal matrix $S_{\ell}$ accounts for missing sign and scaling information in the entangled
weight recovery, and $\pi_{\ell}$ is a permutation matrix accounting for missing order information
in each layer $\ell \in [L]$. According to  Proposition \ref{prop:existence_of_equivalent_network} there remain only $\CO(m_1+\ldots+m_L)$ unidentified parameters  for
reverting the ``entanglement'' of the weights and identifying the network $f$.
\\

Let $\CD_{m}$ denote the set of $m\times m$ diagonal matrices and define
the parameter space $\Omega := \Pi_{\ell = 1}^{L-1} (\R^{m_\ell} \times \CD_{m_\ell}\times \CD_{m_\ell} )\times \CD_{m_L}\times \bbR_{m_L}$.
Any choice of parameters $\omega = ((\widehat \tau_{\ell}, T_{\ell}, R_{\ell})_{\ell \in [L-1]}, \widehat \tau_{L},T_L) \in \Omega$
induces an $L$-layer network $\widehat f: \bbR^{D}\rightarrow \bbR^{m_L}$ defined by  weight matrices $\widehat W_1 = T_1 \widetilde V_{1}$,
\begin{align*}
\widehat W_{\ell+1} = T_{\ell+1}\widetilde V_{\ell+1}^\top (\widetilde V_{\ell})^{\dagger}R_{\ell},\quad \ell \in [L-1],
\end{align*}
and shifts $\widehat \tau_{1},\ldots,\widehat \tau_{L}$. Furthermore, by Proposition \ref{prop:existence_of_equivalent_network}
we know that the parameter choice
\begin{align*}
T_{\ell} = S_{\ell}^{-1},\quad R_{\ell} = S_{\ell}\pi_{\ell}^\top G_\ell(x^\ast)\pi_{\ell},\ \textrm{ and},\ \widehat \tau_\ell = \pi_{\ell}^\top \tau_{\ell}
\end{align*}
recovers the function $\pi_{L}^\top \circ f$, i.e.,
a permutation of the output of the original network $f$. This motivates to reconstruct the network $f$
by fitting the parameters $\omega$, respectively the induced function $\widehat f$,
to a set of input-output queries $\{(X_i,Y_i) : i \in [N_f]\}$  of the network $f$,
where $Y_i = f(X_i)$ and $X_i \sim \CN(0,\frac{1}{D}\Id_{D})$, by using a standard mean-squared error objective.
The parameter fitting can be formulated
as solving the least squares problem
\begin{equation}
\label{eq:new_GD_recover_network_functional}
\min_{\omega \in \Omega}
J(\omega):=\sum_{i=1}^{N_f}\left\| \pi_L^\top Y_i - \hat f(X_i;\omega)\right \|^2.
\end{equation}
We note that the unknown correct permutation $\pi_{L}$ of the last layer can be computed using
the information gathered in the detection of the last layer, see Algorithm \ref{alg:detect_last_layer} and
the accompanying discussion.
Alternatively, we can also add an unknown permutation for the last layer as an additional optimization variable.
We note that due to the identification of the entangled weights and deparametrization of the problem, $
\dim(\Omega) = 3\sum^L_{\ell = 2}m_\ell + 2m_1$, which implies
that the least squares has significantly fewer free parameters compared to
the number
$$
m_1\cdot D + m_1 + \sum_{\ell=2}^L \left( m_{\ell-1}\cdot m_\ell+m_\ell \right)
$$
of original parameters of the entire network. Hence, the preliminary work in this algorithmic pipeline greatly reduces the complexity of the problem with respect to
the usual effort of fitting all parameters at once by means of a backpropagation. More importantly,
we empirically observe that this last optimization yields overwhelmingly accurate uniform approximation of $f$, as soon as the  matrices $V_{\ell}$
are sufficiently well approximated, see Figure \ref{fig:gd_errors}.
This surprising phenomenon may be perceived in contrast to recent results, which state that overparameterization is beneficial for training by gradient descent \cite{du2019gradient}.
We do not present yet here theoretical guarantees of this last empirical risk minimization. Nevertheless its success can be explained - at least locally - by (nested) linearizations of the mean-squared error \eqref{eq:new_GD_recover_network_functional} around the correct parameters and by showing that the linearization is actually uniquely and stably solvable for some probabilistic models of the parameters, see \cite{nguyen2020global,oymak2019moderate,DBLP:journals/corr/SoltanolkotabiJ17} for related techniques. We postpone the detailed analysis to a follow up paper.

\section{Numerical validation of the pipeline}
\label{sec:experiments}
In this section we numerically verify the proposed identification pipeline by extensive
experiments.
We cover all substeps of the pipeline, allowing us to demonstrate and discuss the precise practical realization of the approach.
Let us first briefly introduce the considered network architectures and discuss some hyperparameter
choices in the experiments. A summary of all
hyperparameters with default choices involved in the pipeline can also be found in Table \ref{tab:experiment_notation}.

\begin{figure}[t]
	\centering
	\includegraphics[trim={7.5cm 7.5cm 7.5cm 6.5cm},clip, width=0.95\textwidth]{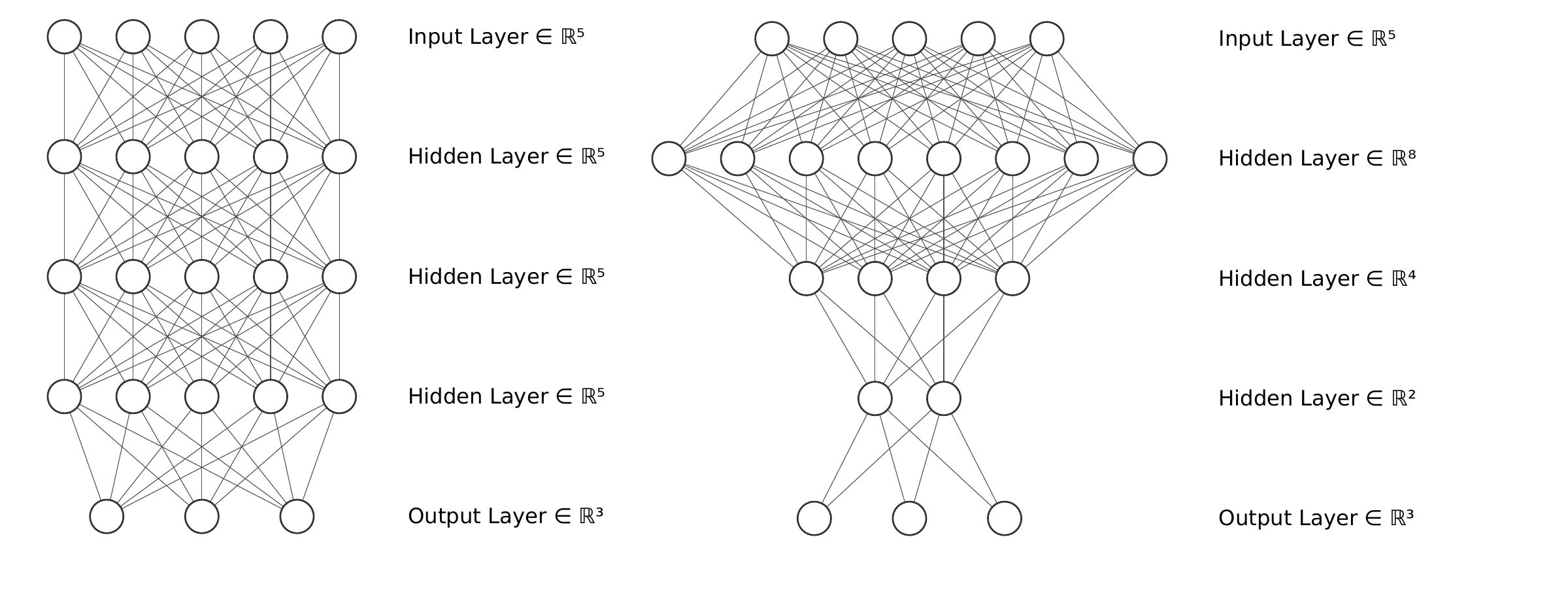}
	\caption{Two exemplary networks specified by the parameters $(D=5, L=4, m_L=3, m=15,c=1)$ on the left and $(D=5, L=4, m_L=3, m=14,c=0.5)$ on the right.}\label{fig:example_networks}
\end{figure}

\begin{table}[t]
\begin{center}
\footnotesize
\begin{tabular}{@{}lllll@{}}
      Network architectures & Step 1 (see Algorithm \ref{alg:subspace_approx}) \\ \toprule
      $D$: ambient dimension (in $\bbN \cap [50,150]$)& $\mu_X$: distribution for Hessians $(R\textrm{Uni}(\bbS^{D-1}))$ \\
      $L$: number of layers (in $\{2,3,4,5\}$) & $R$: radius in $\mu_X$ ($0.01$ unless indicated in Figure)  \\
      $m$: total number of neurons (in $\bbN \cap [50,1500]$) & $N_h$: number of sampled Hessians (max. $20\lceil m/m_L\rceil$) \\
      $m_L$: number of output neurons (in $\bbN \cap [4,10]$) & $\epsilon$: step size finite difference approximation ($0.001$)\\
      $c$: network contraction factor (in $\{0.25,0.5,0.75,1.0\}$) & $x^\ast$: expectation of $\mu_X$ ($0$)\\ \\

      Step 2 (see Algorithm  \ref{alg:recover_weights}) & Step 3 (see Algorithms  \ref{alg:clustering} - \ref{alg:detect_last_layer})\\
      \toprule
       $\gamma:$ gradient ascent step size ($1.5$) & Alg. \ref{alg:clustering} number of cluster centers ($m$)\\
        $K:$ number of steps ($15000$) & Alg. \ref{alg:detect_first_layer} $\mu_X$: distribution for Hessians $(20\sqrt{D}\textrm{Uni}(\bbS^{D-1}))$  \\
        $n:$ number of random initializations $(10000)$ & Alg. \ref{alg:detect_first_layer} $N$: number of Hessians ($400$)\\ \\

      Step 4 (network completion if using GD)& Step 4 (network completion or comparison with SGD)  \\
      \toprule
      learning rate (0.025) & learning rate (0.01)\\
      $N_f$: number of samples in \eqref{eq:new_GD_recover_network_functional} ($\{1000,10000\}$) & number of samples ($D^2 m$)\\
      batch size: all  & batch size ($10^4$)
      \\\bottomrule
\end{tabular}
\end{center}
\caption{Summary of all hyperparameters for the numerical experiments. `Network architectures'
refers to parameters related to the network architecture, Step 1-4 refer to the four steps of the algorithmic pipeline,
and `Comparison with backprop training via SGD' lists parameters for standard backpropagation
via stochastic gradient descent (SGD). A single value in $(\cdot)$-brackets is the default hyperparameter
that is used in all experiments. Otherwise, we list ranges which are tested in experiments.}
\label{tab:experiment_notation}
\end{table}

\paragraph{Network architectures.}
Throughout our experiments we consider network architectures with depths $L \in \{2,\ldots,5\}$,
$\tanh$-activation function, and varying number of total neurons $m=m_1+\ldots+m_L$.
We introduce a shape parameter called \emph{contraction factor} $c \in (0,1]$,
which describes the decrease in the number of neurons from layer to layer, except for the last layer, so that
each network is specified by parameters $(D,L,m_L,m,c)$. Some exemplary networks with different contraction factors are shown in Figure \ref{fig:example_networks}.
In all experiments we sample parameters of the network at random, using $w^{\lind{\ell}}_i \sim \textrm{Unif}(\bbS^{m_{\ell-1}})$
for weights and $(\tau_{\ell})_i \sim \CN(0,0.05)$ for shifts. Analyzing random networks
ensures that all network weights are actually being used in the output, respectively in its Hessian matrices, so that there is a reasonable
hope to fully recover entangled weights and the original network function. Furthermore, randomly sampled
weights are generically incoherent, which is a key requirement of successful rank-one recovery
in $\CW$, according to the theory in Section \ref{sec:rank_one_recov}. We note that our results
empirically further improve if random  weights are replaced by orthogonal weight matrices $W_1,\ldots W_L$
(the latter case is not shown for the sake of brevity), indicating that an increased level of weight incoherence is
beneficial to the accuracy of the proposed algorithms.

\paragraph{Hessian computation.}
The error induced by using finite difference approximations of Hessians
instead of exact Hessians is, to our experience, negligible for the subspace approximation, weight recovery,
and weight assignment step.
Since symbolic differentiation is computationally faster and less memory-intensive, we
therefore conducted the experiments presented in Section \ref{subsec:recovering_entangled_weights} - \ref{subsec:distinction-of-the-entangled-weights}
using symbolic differentation in order to show a wide range of scenarios and parameter choices.

The accuracy of the last pipeline step, `Network completion',
is influenced by the approximation error of entangled weights. Therefore, the experiments in
Section \ref{subsec:network_completion_experiments} are performed using finite difference approximations of Hessians. Specifically,
for a network $f : \bbR^{D}\rightarrow \bbR^{m_L}$ we use
\begin{align}\label{eq:fd_scheme}
\Delta^2_\epsilon f_p(x)_{ij} = \frac{f_p(x + \epsilon e_i +\epsilon e_j) - f_p(x + \epsilon e_i - \epsilon e_j) - f_p(x - \epsilon e_i + \epsilon e_j) + f_p(x - \epsilon e_i - \epsilon e_j)}{4\epsilon^2}
\end{align}
for $i,j \in [D]$, $p\in [m_L]$ and step-size $\epsilon = 0.001$. Here, $e_i$ denotes the $i$-th standard basis vector.

\paragraph{Hyperparameters.}
Each step of the pipeline requires to specify a few hyperparameters, some of which are important
to achieve a good performance. Let us briefly discuss those in the following.

For approximating the subspace $\CW$ we have to specify
a distribution $\mu_X$, so that the corresponding Hessian matrices $\nabla^2 f(X),\ X \sim \mu_X$, carry sufficient information
to detect $\CW$. In all experiments we use the generic choice $\mu_X = R\cdot \textrm{Unif}(\bbS^{D-1})$
for small radius $R>0$, so that $X$ tightly concentrates around the origin $x^\ast = 0$.
This choice is reasonable because the networks considered in our experiments have first
layer shifts $\tau_{1}$ concentrating around zero, and thus we expect the Hessians $\nabla^2 f(X)$ sampled around the origin
to carry much information. The second parameter involved in approximating the subspace $\CW$ is
the number of Hessians $\numsamples$, which we choose in the order of $\CO(m/m_L)$ per output node,
so that we generically have at least $m$ linearly independent Hessians. We note that
choosing $\numsamples$ larger than that does not further benefit the performance, see
also Theorem \ref{thm:approx_bound}.

The performance of recovering the entangled weights within the approximated subspace $\widehat \CW$ improves
if we increase the number of random initializations $n$ and gradient ascent steps $K$ in Algorithm \ref{alg:recover_weights}.
Since the computational burden of this step is relatively
small, we choose  $n = 10000$ and $K = 15000$ throughout. The performance seems
robust with respect to moderate choices of the step size $\gamma$, as long as very small $\gamma$
is compensated with a perhaps larger number of steps $K$. We use $\gamma = 1.5$ in all experiments.

For the layer assignment of the entangled weights, we again have to specify a distribution for detecting the first layer. Recalling that the analyzed networks
have small first layer shifts $\tau_1 \approx 0$, the choice $\mu_X = 20\sqrt{D}\cdot \textrm{Unif}(\bbS^{D-1})$
works reasonably well. Furthermore, we used $N = 400$ Hessians
to compute the similarity measure defined in \eqref{eq:def_sim}.

The remaining parameters are either discussed in the respective subsections below,
or have default values that can be found in Table \ref{tab:experiment_notation}.

\subsection{Recovering entangled weights}
\label{subsec:recovering_entangled_weights}
In this part we focus on the first two steps of the pipeline, which deal with approximating the subspace
$\CW$ as in \eqref{eq:approx_subspace_W}, and recovering the entangled weights at $x^\ast = 0$ from $\CW$ by
their rank-one property. Since we can not recover scale information by the proposed algorithm, we define
the normalized weights by $\tilde v_{i}^{[\ell]}(0) := v_{i}^{[\ell]}(0)/\textN{v_{i}^{[\ell]}(0)}_2$
for $i \in [m_\ell]$, $\ell \in [L]$.
To evaluate our results, we report the following metrics.
\begin{enumerate}
	\def\labelenumi{\arabic{enumi}.}
	\item The subspace distance $\textN{P_{\CW}-P_{\widehat \CW}}_F/\sqrt{m}$, where we scale the error by $\sqrt{m}$ since $\dim(\CW) = m$.
	\item
Denoting  $B_{r}^D(x) = \{x'\in \bbR^D : \N{x-x'}_2 \leq r\}$, we define the  recovery rate of the entangled weights as
	$$ \mathrm{Recov}(u^1,\ldots, u^{n}):= \frac{\#\left(\left\{\tilde v_{i}^{\lind{\ell}}(0) \middle|  i \in [m_{\ell}],\ \ell \in [L]\right\} \cap \left( \bigcup_{k \in [n], s \in \{-1,1\}}B_{0.05}^D(s u^k)\right)\right)}{m}. $$
	\item The rate of false positives, i.e., the fraction of $u^1\ldots u^{n}$ which is not contained in a
  Euclidean ball of radius $0.05$ around any $\pm v_{i}^{\lind{\ell}}$. Denoting $A^C$ as the complement of a set $A$, we define
  $$
  \mathrm{FalsePos}(u^1,\ldots, u^{n}) := \frac{\#\left(\{u_1,\ldots,u^n\} \cap \left( \bigcup_{i \in [m_L],\ \ell \in [L], s \in \{-1,1\}}B_{0.05}^D\left(s\tilde v_{i}^{\lind{\ell}}(0)\right)\right)^C\right)}{n}.
  $$
\end{enumerate}

\paragraph{Exploring different architectures by varying $L$, $c$, and $m$.}
We fix $D = 100$, $m_L = 10$ and consider combinations of number of neuros $m \in \{200,300, \ldots, 1500\}$, contraction factor $c \in \{0.25,0.5,0.75,1\}$, and network depth $L\in \{2,3,4\}$.
Furthermore, we consider different distributions $\mu_X$ by varying $R \in \{0.001,\ldots,10\}$.
Figures \ref{fig:recovery_space_error} - \ref{fig:recovery_ratio_fp} report the recovery results using
the above introduced metrics and show an overall convincing performance of the method for $R \in (0.01,1.0)$.
Specifically, Figures \ref{fig:recovery_ratio_recovery} - \ref{fig:recovery_ratio_fp} show
that we recover all $m$ neurons with high empirical probability if $m < 1000$ without suffering any false positives,
 while approximating the subspace $\CW$ well. Considering the
experiments with $R = 10$, a mistuned radius still allows for recovering many entangled weights
while suffering a rather small fraction of false positives, at least in cases $L \in \{2,3\}$. This is rather
surprising and indicates a degree of robustness of our approach, because we are
not expecting much concentration of entangled weights $v_i^{\lind{\ell}}(x)$ around
$v_i^{\lind{\ell}}(x^\ast)$ for such a choice of $\mu_X$.

A phase transition of all tracked metrics can be observed at around $m = 1000$ neurons. It seems that
fewer layers and low contraction factor $c$ are beneficial for the recovery, but the effect is almost negligible.
As we discuss extensively in the next part, the tipping point $m = 1000$ is related to input and output dimension (here $D = 100$ and $m_L = 10$).

\begin{figure}
	\centering
	\includegraphics[width=\textwidth]{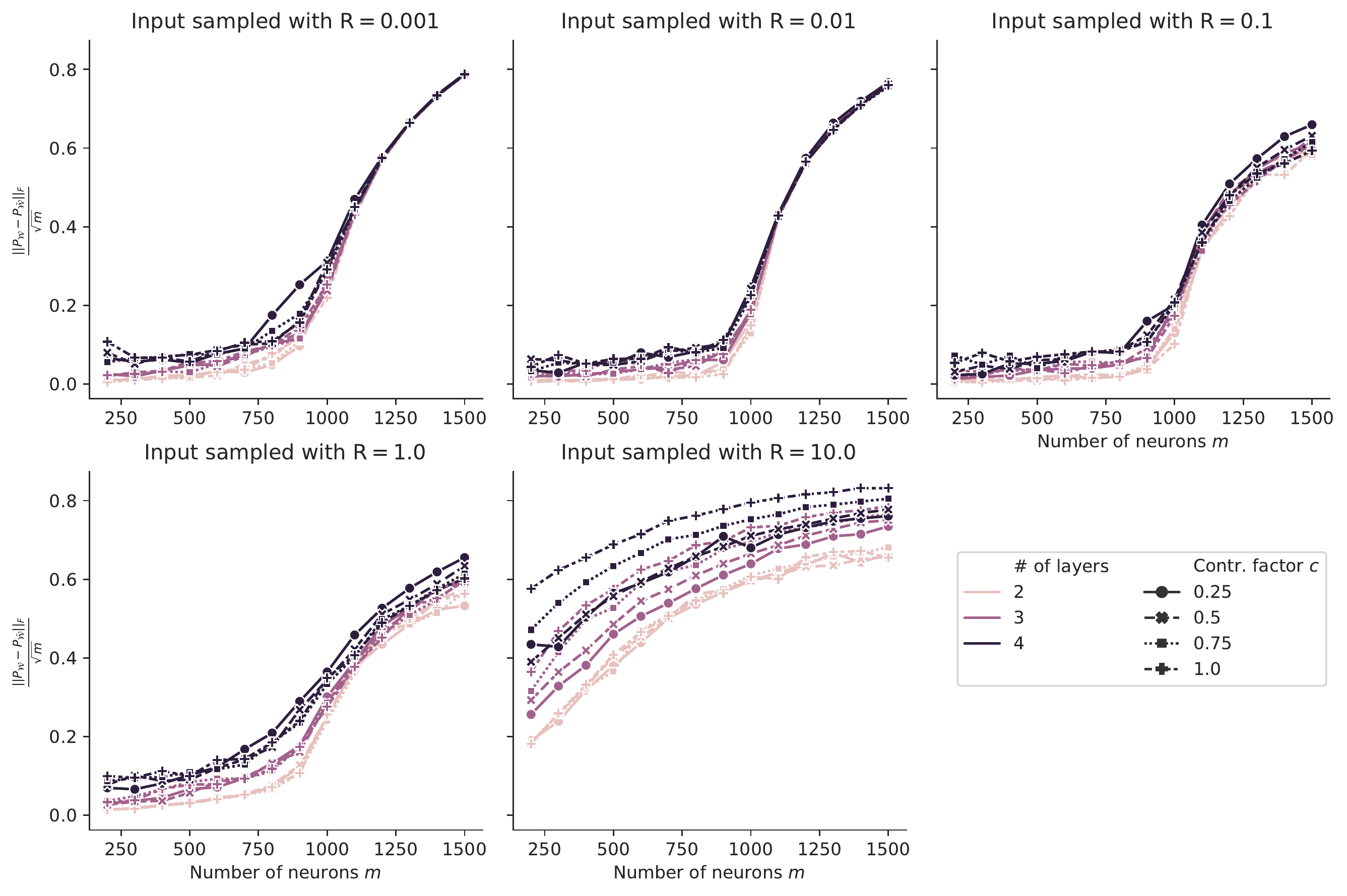}
	\caption{Approximation error of the matrix space measured by the distance of the projections for different architectures and input distributions.}
	\label{fig:recovery_space_error}
\end{figure}
\begin{figure}
	\centering
	\includegraphics[width=\textwidth]{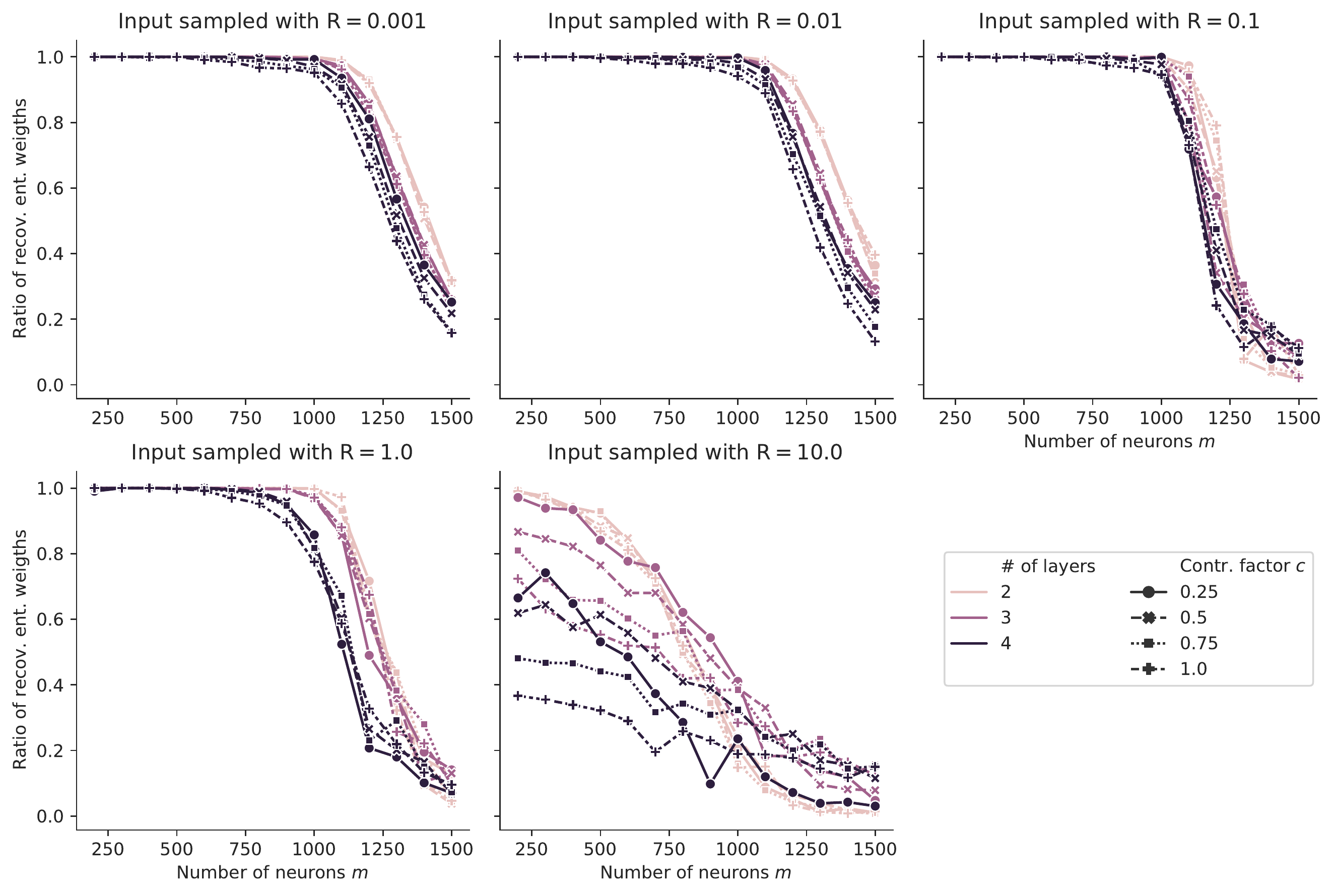}
	\caption{The percentage of overall recovered entangled weights for different architectures and input distributions.}
	\label{fig:recovery_ratio_recovery}
\end{figure}

\begin{figure}
	\centering
	\includegraphics[width=\textwidth]{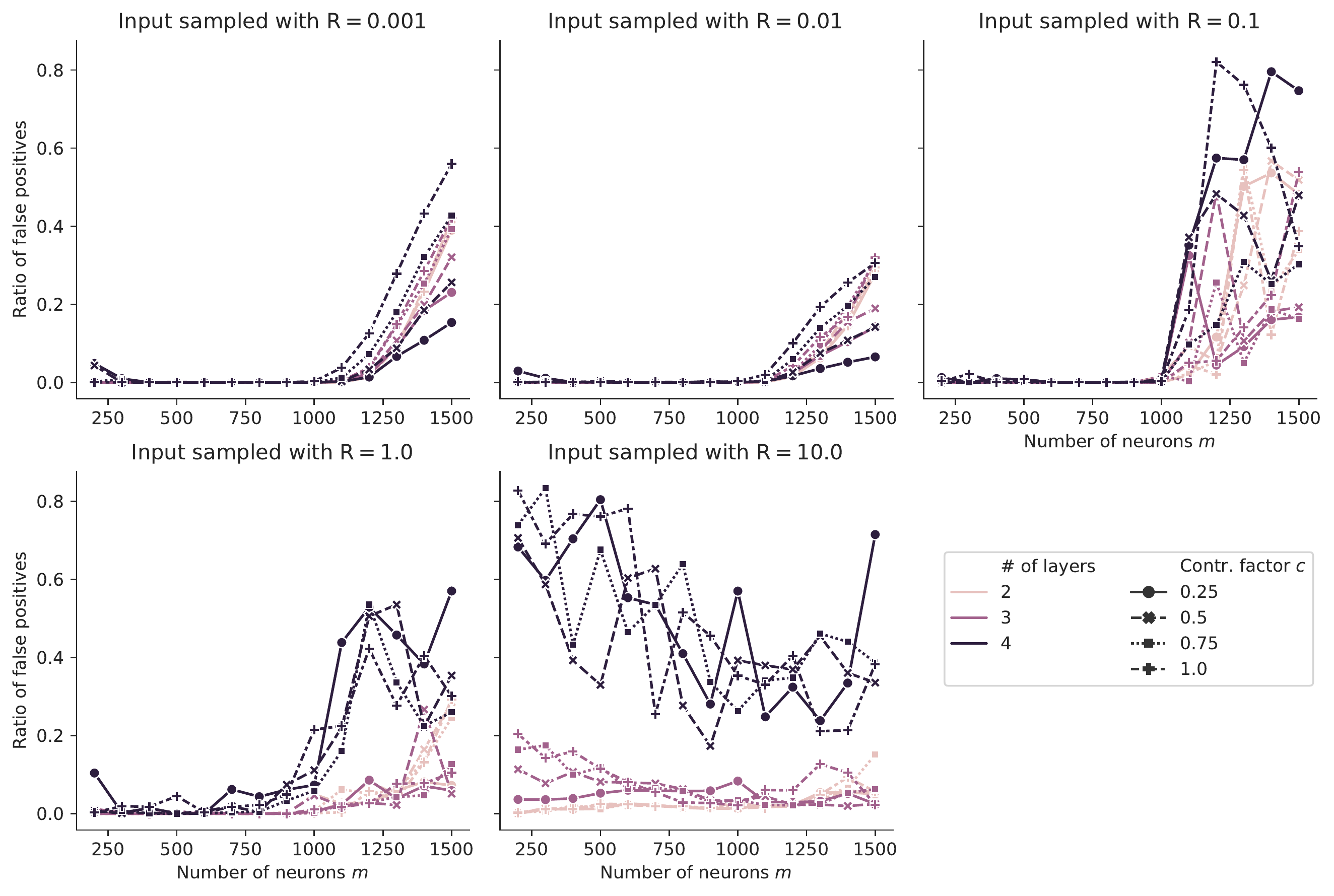}
	\caption{The false positive rate of our recovery for different architectures and input distributions. }
	\label{fig:recovery_ratio_fp}
\end{figure}

\paragraph{Varying input and output dimensions $D$ and $m_L$.}
We now fix $R = 0.01$, $c = 0.5$, and test different input and output dimensions $D$
and $m_L$, as well as different numbers of neurons $m$.
The results, presented in Figures \ref{fig:dim_recoverror} - \ref{fig:dim_spaceerror},
show that $D$ and $m_L$ have a much stronger influence on the recovery results compared
to the number of layers $L$ or the shape parameter $c$ as discussed in the previous paragraph. Namely,
by increasing $D$ and $m_L$, we observe a significant increase of the empirical probability of perfect recovery up to
a number of neurons prescribed roughly by the threshold $m \approx D\times m_L$.

Based on our theoretical understanding from Section \ref{sec:subspace_approx}, the strong influence of $D$ and $m_L$ can be attributed mainly to two factors.
First, for increasing $D$, the same number of entangled weights become increasingly  incoherent, which
plays beneficial towards both the subspace approximation and  for performing
weight recovery within $\CW$ by the subspace power method.
Second, increasing $m_L$ effectively adds additional channels to the network output
and thus increases the information contained in the Hessian distribution $\nabla^2 f(X),\ X \sim \mu_X$.
Intuitively, we can think that each weight may or may not contribute to an output $f_p$
at points sampled from distribution $\mu_X$, but the likelihood that it does not contribute
(or contribute a small amount) to all $m_L$ outputs decreases rapidly when increasing $m_L$.
Therefore, adding network outputs while sharing many of the network weights (up to layer $L$), greatly benefits the weight identification.

\begin{figure}
	\centering
	\includegraphics[width=\textwidth]{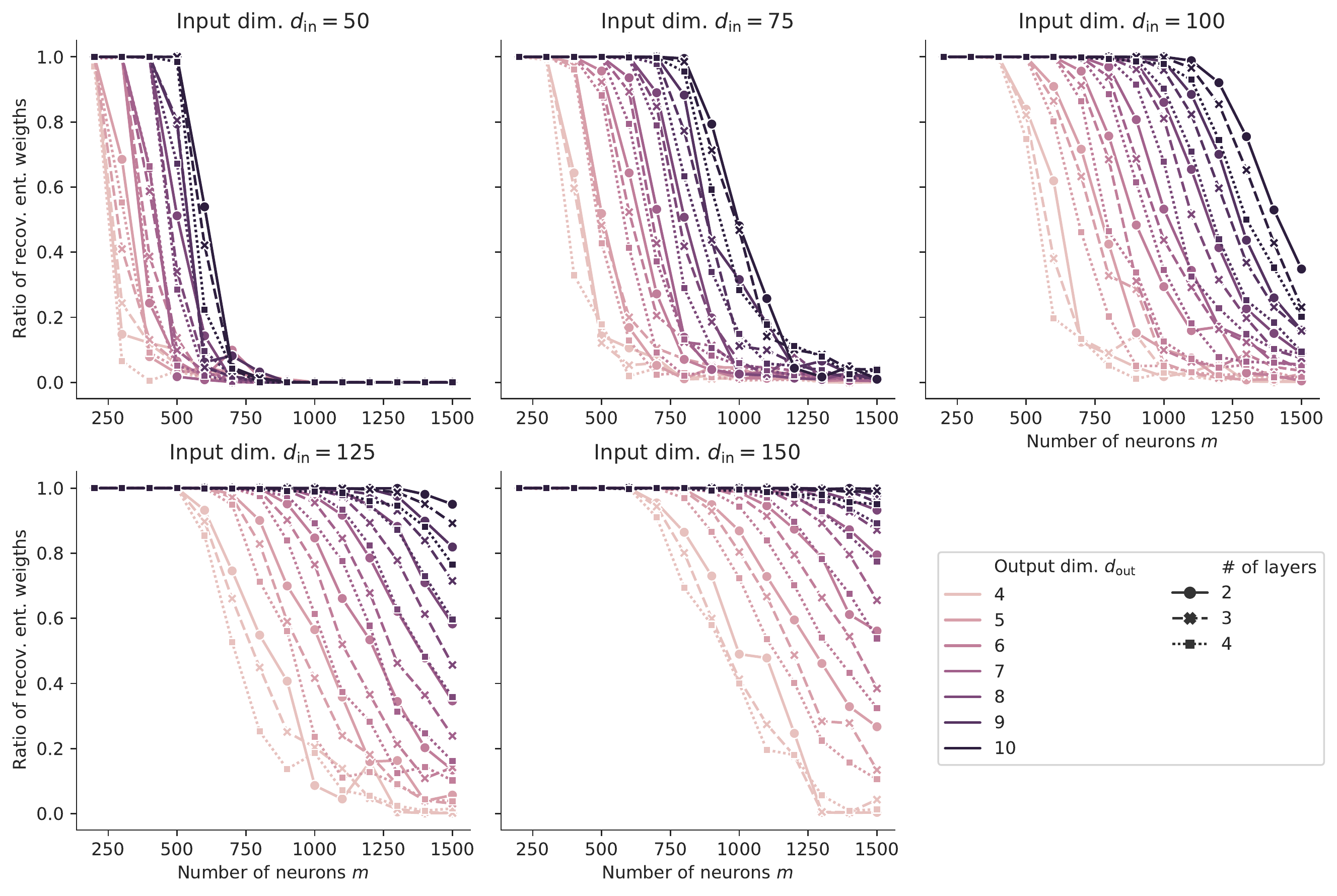}
	\caption{Ratio of recovered weights for different input/output dimensions.}
	\label{fig:dim_recoverror}
\end{figure}

\begin{figure}
	\centering
	\includegraphics[width=\textwidth]{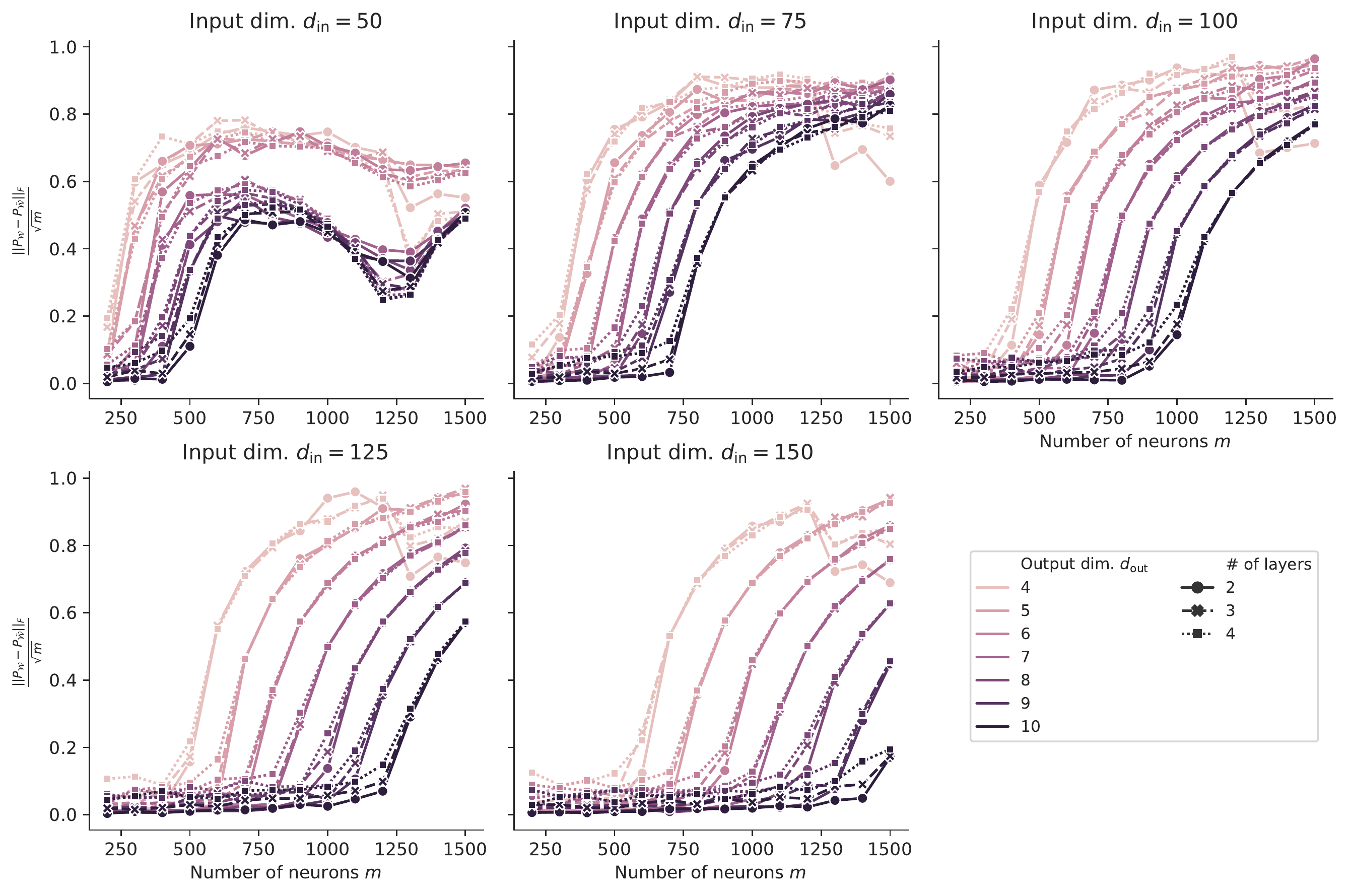}
	\caption{Error of the approximated matrix space for different input/output dimensions.}
	\label{fig:dim_spaceerror}
\end{figure}

\hypertarget{distinction-of-the-entangled-weights}{%
\subsection{Weight assignment}\label{subsec:distinction-of-the-entangled-weights}}
We now test the Algorithms \ref{alg:clustering} - \ref{alg:detect_last_layer} to assign the output of Algorithm \ref{alg:recover_weights}
to three different classes, representing first layer weights, last layer weights, and inner layer weights.
Recall that the clustering step in Algorithm \ref{alg:clustering} reduces the candidates $\{u^1,\ldots,u^n\}$,
obtained in the previous step,
to $m$ vectors $\tilde v^1\ldots \tilde v^{m}$, which ideally correspond to the entangled network weights. We denote
by $\CV_1$ a subset of $\{\tilde v^1\ldots \tilde v^{m}\}$
containing $m_1$ weights assigned to the the first layer by Algorithm \ref{alg:detect_first_layer}, and by $\CV_L$
a subset of $\{\tilde v^1\ldots \tilde v^{m}\}\setminus \CV_1$ containing $m_L$ weights
assigned to the last layer by Algorithm \ref{alg:detect_last_layer}. For a fixed
network $f$ with length normalized entangled weights $\tilde v_{j}^{\ell}(x^\ast)$, we report the error measures
\begin{align*}
E_1 := \max_{\tilde v \in \CV_1} \min_{j \in [m_1], s \in \{-1,1\}}\N{s \tilde v_{j}^{\lind{1}}(x^\ast)-\tilde v}_2,\quad \textrm{ and }\quad
E_L := \max_{\tilde v \in \CV_L} \min_{j \in [m_L], s \in \{-1,1\}}\N{s \tilde v_{j}^{\lind{L}}(x^\ast)-\tilde v}_2,
\end{align*}
corresponding to the worst-case $\ell_2$-error of any of the assigned weights to a corresponding true
normalized entangled weights.  For all considered networks we have
$m_L = 10$ outputs and input dimension $D = 50$. Furthermore, we analyze a variety of different architectures
such as $m_1 \in \{100,\ldots,200\}$ and $m_2 = m_L = 10$
for two layer networks (note that the contraction factor does not matter in this case), over to $m_1 = 50,\  m_2 = 50,\  m_3 = m_L = 10$ (number of neurons $100$ and
$c = 1$) or $m_1 = 125,\  m_2 = 75,\  m_3 = m_L = 10$ (number of neurons 200, $c = 0.6$) for
three layer nets,
or $m_1 = 68,\  m_2 = m_3 = 66,\  m_4 = m_L = 10$ (number of neurons 200, $c = 1.0$)
for four layer nets.

All results reported in Table \ref{tab:accclustering1} show near zero $\ell_2$-error and thus indicate that
assignment to the first, respectively, last layer based on the proposed procedures is effectively optimal
given the output of the the first two steps of the pipeline. The remaining small error
is not caused within the assignment step, but is rather due to inaccuracies in the first two steps of the algorithm.
According to our experience, the proposed assignment routines typically perform well whenever
the first two steps of the pipeline work well, thus implying that assignment to the first and last layer
does not present a critical bottleneck of the approach.
We also note that the results in Table \ref{tab:accclustering1} for $2$-layer networks are a substantial improvement
over \cite{fornasier2019robust}, where we  introduced other heuristics for weight assignment
in $2$-layer nets and near orthonormal weight matrices $W_1$, $W_2$. The difference between
the procedures is that Algorithm \ref{alg:detect_first_layer} is based on Hessian information,
whereas the procedure in \cite{fornasier2019robust} only relies on gradient information of the
network and saturation effects of the sigmoidal activation function, see \eqref{hierarchy}.

\begin{table}
	\footnotesize
	\begin{tabular}{c|cc|cccccc|cccccc}
		& \multicolumn{2}{|c|}{2 layers} & \multicolumn{6}{|c|}{3 layers} & \multicolumn{6}{|c}{4 layers} \\
		Contraction $c$ & \multicolumn{2}{|c|}{1.0}  & \multicolumn{2}{|c}{0.6} & \multicolumn{2}{c}{0.8} & \multicolumn{2}{c|}{1.0} & \multicolumn{2}{|c}{0.6} & \multicolumn{2}{c}{0.8} & \multicolumn{2}{c}{1.0} \\
     \#Neurons $m$  & & & & & & & & & & & & & & \\
		 $\downarrow$ & $E_1$ & $E_L$ & $E_1$ & $E_L$ & $E_1$ & $E_L$ & $E_1$ & $E_L$ & $E_1$ & $E_L$ & $E_1$ & $E_L$& $E_1$ & $E_L$ \\
		\midrule
	 100 &  0.02 &  0.01 &  0.01 & 0.01 &  0.03 & 0.01 &  0.02 & 0.01 &  0.02 & 0.05 &  0.03 & 0.02 &  0.03 & 0.02 \\
		120 & 0.01 &  0.00 &  0.02 & 0.01 &  0.02 & 0.01 &  0.02 & 0.01 &  0.03 & 0.02 &  0.03 & 0.03 &  0.02 & 0.02\\
     140 &  0.01 & 0.00 &  0.02 & 0.07 &  0.02 & 0.02 &  0.02 & 0.01 &  0.02 & 0.02 &  0.04 & 0.04 &  0.03 & 0.02 \\
     160 &  0.01 & 0.00 &  0.02 & 0.01 &  0.02 & 0.01 &  0.02 & 0.01 &  0.03 & 0.02 &  0.03 & 0.03 &  0.04 & 0.03 \\
     180 &  0.01 & 0.00 &  0.02 & 0.01 &  0.02 & 0.02 &  0.02 & 0.01 &  0.03 & 0.02 &  0.03 & 0.02 &  0.03 & 0.04 \\
     200 &  0.01 & 0.00 &  0.02 & 0.01 &  0.02 & 0.01 &  0.02 & 0.02 &  0.03 & 0.03 &  0.03 & 0.02 &  0.03 & 0.02 \\
		\bottomrule
	\end{tabular}
	\caption{Results for the first and last layer weight assignment using Algorithms \ref{alg:clustering} - \ref{alg:detect_last_layer}.
  Each number equals the worst observed $\ell_2$ error between an approximated weight assigned to the first, respectively, last layer
  and a true weight in the first, respectively, last layer. The results are averaged over 5 trials. The input dimension equals $50$,
	and the architectures are described by the number of neurons $m$ and the contraction factor $c$. The results consistently show
  perfect assignment of weights to the first and last layer over various different architectures.}
	\label{tab:accclustering1}
\end{table}

\subsection{Network completion}
\label{subsec:network_completion_experiments}
\begin{figure}
	\centering
	\includegraphics[width=\textwidth]{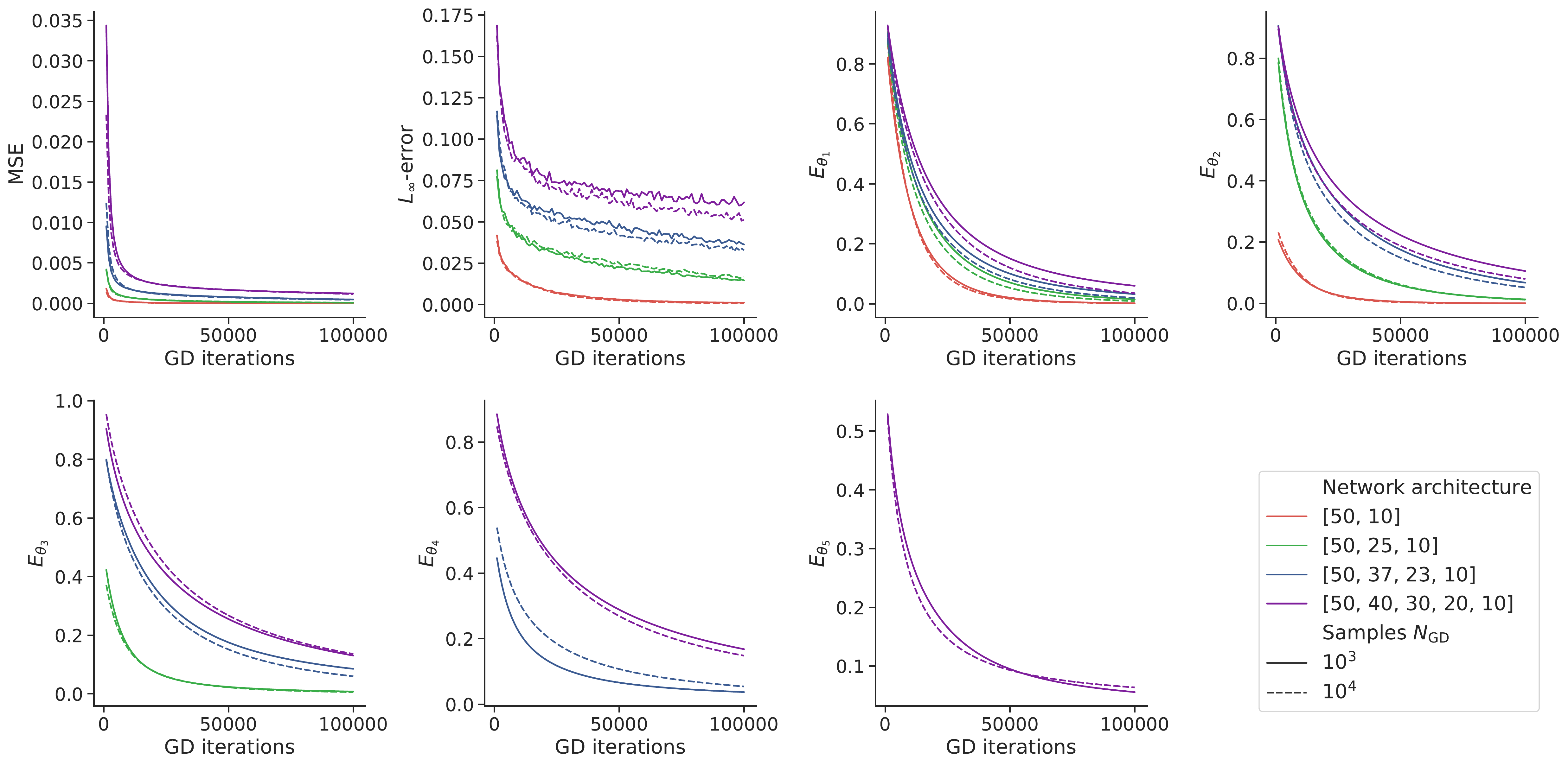}
	\caption{Minimizing the modified objective \eqref{eq:new_GD_recover_network_functional} with ordinary gradient descent to generate the final network approximation from exact entangled weights. The resulting network converges in parameter space, leading to a uniformly low test (generalization) error.}
	\label{fig:gd_errors}
\end{figure}
In this final part we illustrate the benefit of enriching a standard least squares problem,
designed to learn
the original network function and possibly network parameters, with entangled
weight information by means of the reparametrized objective
\eqref{eq:new_GD_recover_network_functional}. Furthermore, we report full pipeline experiments, where each algorithmic step is
performed successively, to assess whether the pipeline allows for identifying the original network
function uniformly well. We concentrate on network architectures with input dimension $D = 50$,
output dimension $m_L = 10$, and a pyramidal shape $m_1\geq \ldots \geq m_L$ as specified under `Network architecture'
in each figure.

We consider two types of experiments. The first experiment is designed to test the efficiency of
the modified objective \eqref{eq:new_GD_recover_network_functional} in isolation, so we assume access to exact entangled weights
of a network and we check whether minimazing the objective \eqref{eq:new_GD_recover_network_functional}
over parameters $\omega \in \Omega$ allows for uniformly learning the network and its parameters.
We use $N_f \in \{10^3,10^4\}$ training samples, initialize shifts $(\hat{\tau}_\ell)_{\ell \in [L]}$ by $0$,
and diagonal matrices ${(T_\ell)_{\ell\in [L]}}, (R_\ell)_{\ell\in [L]\setminus\left\lbrace 1\right\rbrace}$ by identity matrices.
We report the relative mean squared error (MSE) and the relative $L_{\infty}$-error, given by
\[
\textrm{MSE} = \frac{\sum_{i=1}^{N_{\textrm{test}}} (\hat f(Z_i;\omega^*) - Y_i)^2}{\sum_{i=1}^{N_{\textrm{test}}}Y_i^2},\quad
\textrm{E}_{\infty} = \frac{\max_{i \in [N_{\textrm{test}}]}\SN{\hat f(Z_i;\omega^*) - Y_i}}{\max_{i \in [N_{\textrm{test}}]}\SN{Y_i}},
\]
on a randomly sampled test set $\{(Z_i, Y_i:=f(Z_i)) : i \in [N_{\mathrm{test}}]\}$ of size $N_{\mathrm{test}} = 10^5$ as measures of the generalization or extrapolation error. Furthermore,
we report relative  shift errors
\[
E_{\theta_\ell} = \frac{\N{\tilde\tau_\ell - \tau_\ell}^2}{\N{\tau_\ell}^2}, \quad \ell \in [L],
\]
to evaluate whether we also learn the remaining network parameters well.

The second experiment is designed to analyze the efficiency of the proposed pipeline in comparison
with other methods such as stochastic gradient descent for a standard least squares problem. Specifically,
we report $\textrm{MSE}$ and $\textrm{E}_{\infty}$ for the following methods:
\begin{enumerate}[label = \textbf{M\arabic*}]
  \item \label{enum:standard_method} stochastic gradient descent with random initialization on standard least squares, where the network architecture matches
  the architecture of the data generating network;
  \item \label{enum:standard_initial_entangled} as \ref{enum:standard_method}, but the weight matrices $W_1,\ldots,W_L$ of the network are initialized by approximated entangled
	weights, i.e., by $W_\ell = (\widehat V_{\ell-1})^{\dagger} \widehat V_{\ell}^\top \in \bbR^{m_{\ell-1}\times m_{\ell}}$ with $\widehat V_{\ell} \in \bbR^{m_{\ell}\times D}$ storing columnwise
	the approximated entangled weights associated to layer $\ell$;
  \item \label{enum:full_pipeline} ordinary gradient descent applied to the modified objective \eqref{eq:new_GD_recover_network_functional}
  with approximated entangled weights (this method actually corresponds to our full pipeline);
  \item \label{enum:exact_entangled} like \ref{enum:full_pipeline} but using exact entangled weights to test \eqref{eq:new_GD_recover_network_functional} in isolation.
\end{enumerate}
The methods \ref{enum:standard_method} and \ref{enum:standard_initial_entangled} have access to $D^2\times m$ randomly sampled training data points,
which is comparable to the number of queries that is used within our pipeline.
\ref{enum:full_pipeline} and \ref{enum:exact_entangled} use $N_f = 10^4$ randomly sampled training data points
for the network completion step. In methods \ref{enum:standard_initial_entangled} and \ref{enum:full_pipeline},
the weight assignment
is done via an oracle assignment for networks that have more than $L > 3$ layers. This is because we currently
do not dispose of a suitable method for assigning weights for networks with depth  $L > 3$. The remaining steps
are conducted as explained in Section \ref{sec:algorithm}, by using finite difference
approximations to Hessians.

\paragraph{Results: isolated test of modified objective \eqref{eq:new_GD_recover_network_functional}.}
The results of the first experiment, averaged over 10 repititions, are shown in Figure \ref{fig:gd_errors}.
We can clearly see that running ordinary gradient descent on the modified objective \eqref{eq:new_GD_recover_network_functional}
greatly benefits the learning, because all error measures quickly converge
 to $0$. The observed convergence in parameter space and function $L_{\infty}$-error in Figure \ref{fig:gd_errors}
imply that the original network parameters are recovered. As we will see in the next experiment,
this is not the case when trying to fit a neural network via method \ref{enum:standard_method}, i.e., a standard backpropagation from scratch.
We also note that the number of
training samples, ranging from $10^3$ to $10^4$, has negligible effect on the accuracy, which indicates
that a number of training samples $N_f$ equal or slightly above  the number of remaining free parameters
suffices for the network completion step.

\begin{figure}
  \begin{subfigure}{.5\textwidth}
    \centering
    \includegraphics[trim=0 0 0cm 0,width=1\linewidth]{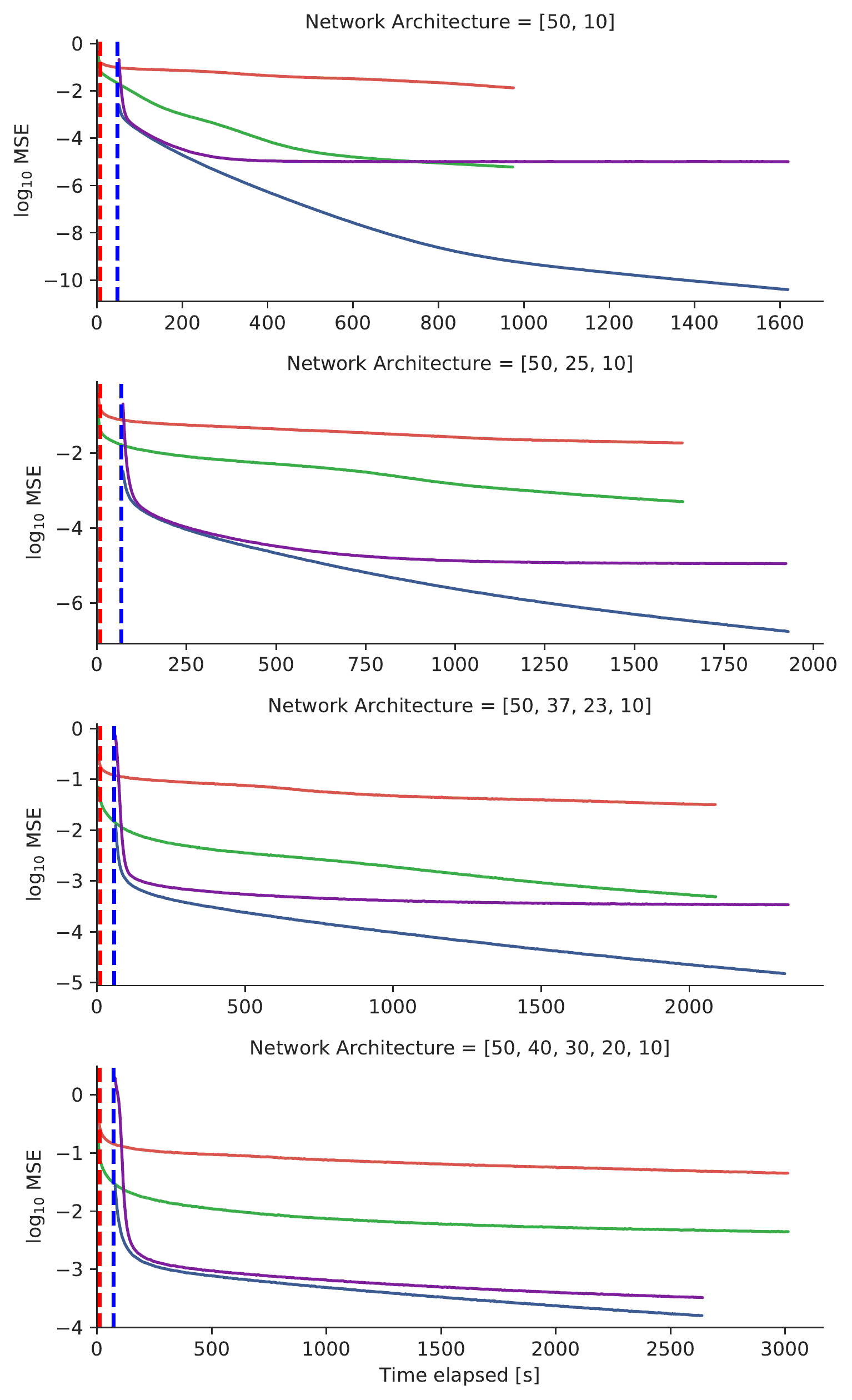}
    \caption{Relative MSE}
    \label{fig:motivation_1}
  \end{subfigure}
  \begin{subfigure}{.5\textwidth}
    \centering
    \includegraphics[trim=0 0 10.215cm 0, clip, width=1\linewidth]{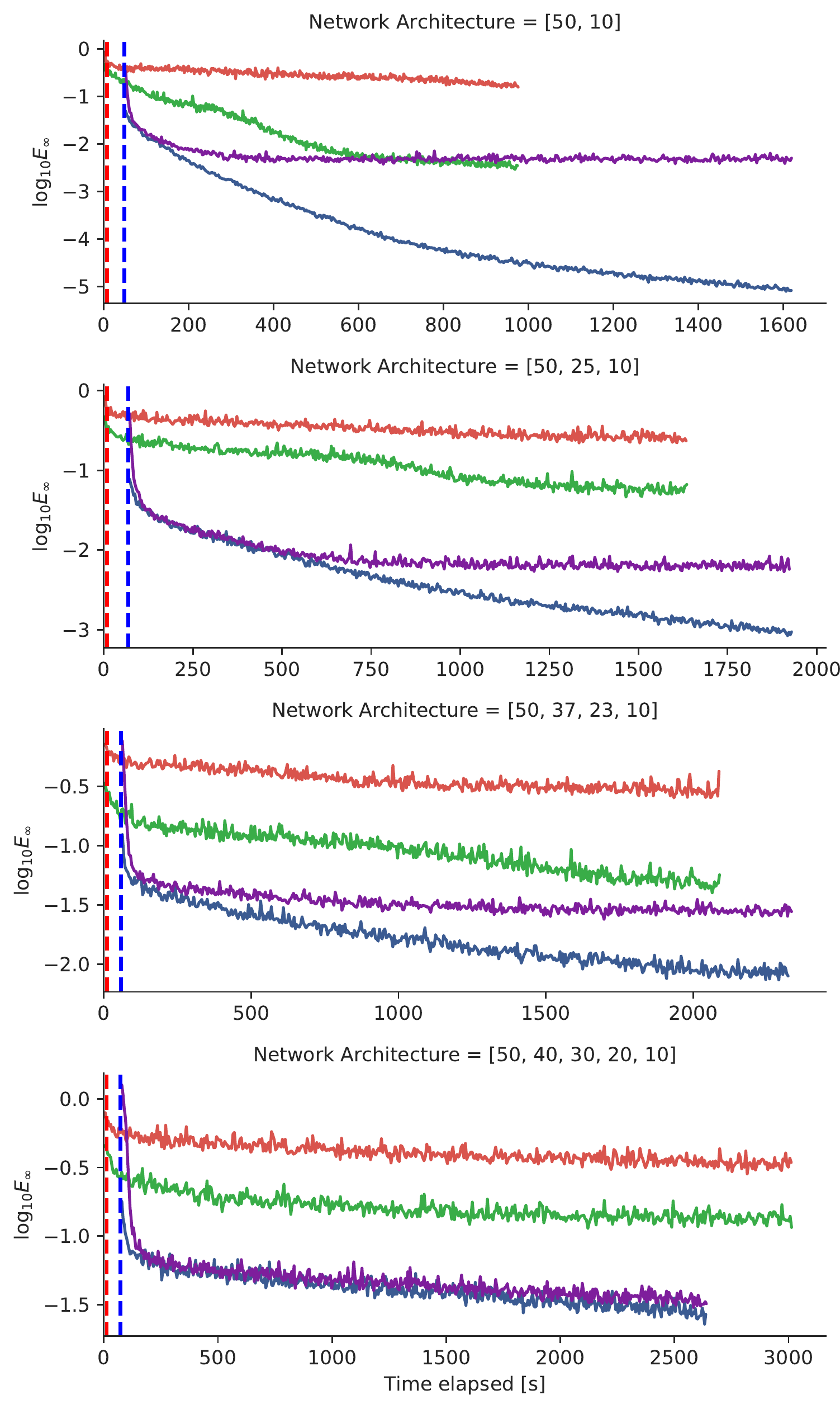}
    \caption{Relative $L_{\infty}$-error}
    \label{fig:motivation_21}
  \end{subfigure}
  \begin{subfigure}{1.0\textwidth}
    \centering
    \includegraphics[trim=15.25cm 10.75cm 0cm 10cm, clip, width=0.4\linewidth]{GD_error_comparison_Linf_redone.pdf}
  \end{subfigure}
\caption{Approximation of the original network function with methods \ref{enum:standard_method} - \ref{enum:exact_entangled}. (Red): Stochastic gradient descent on the standard least squares, randomly initialized.
(Green): Stochastic gradient descent with initialization by approximated entangled weights. (Purple): Gradient descent on the
modified objective \eqref{eq:new_GD_recover_network_functional} using approximated entangled weights. (Blue): Gradient descent
on the modified objective \eqref{eq:new_GD_recover_network_functional} using exact entangled weights. The $x$-axis depicts
the training time in seconds, and the blue and red vertical bars indicates the time consumed by steps 1 and 2 of the pipeline.}\label{fig:gd_comparison}
\end{figure}

\paragraph{Results: full pipeline tests and comparison with standard training \eqref{eq:new_GD_recover_network_functional}.}
Figure \ref{fig:motivation_1} and \ref{fig:motivation_21} report relative function mean-squared error and function $L_{\infty}$-error for the
four different learning methods described in \ref{enum:standard_method} - \ref{enum:exact_entangled}.
First, note that stochastic gradient descent
on the standard objective (red line) fails at achieving both small function mean-squared error or function $L_{\infty}$-error
for all network architectures. These results improve dramatically if we enrich the learning procedure by
initializing the weight matrices as approximated entangled weights, recovered through the proposed pipeline,
instead of using a random initialization (green line). The results for the full pipeline, i.e., for the modified
objective \eqref{eq:new_GD_recover_network_functional} with fixed approximated entangled
weights, are shown in purple. Compared to stochastic gradient descent on the standard least squares,
the results of method  \ref{enum:full_pipeline} are superior, as it achieves substantially smaller error in both
reported measures. Instead, the full pipeline does not quite reach the same accuracy as \ref{enum:exact_entangled}, which runs
gradient descent on  \eqref{eq:new_GD_recover_network_functional} initialized by using exact entangled weights (blue line). This is due to small irreparable errors, accumulated in the first two steps of the full pipeline in \ref{enum:full_pipeline}.

\section{The approximation of entangled weight subspaces}
\label{sec:subspace_approx}
In this section we provide a theoretical analysis of the performance of Algorithm \ref{alg:subspace_approx},
which aims at approximating the matrix subspace
$$
\mathcal{W} = \Span{v^{[\ell]}_{i}(x^\ast) \otimes v^{[\ell]}_i(x^\ast) \mid i=1,\ldots,m_\ell, \: \ell=1,\ldots,L, }
$$
by the leading singular subspace of the matrix
\begin{align}
\label{eq:def_auxiliary_matrix_0}
\widehat M = (\widehat M_1|\ldots|\widehat M_{m_L}) \in \bbR^{D^2\times N_H m_L}, \textrm{ where } \widehat M_p = (\opvec(\Delta_{\varepsilon}^2 f_p(X_1))|\ldots|\opvec(\Delta_{\varepsilon}^2 f_p(X_{\numsamples}))).
\end{align}
As before, we let $m = \sum_{\ell=1}^{L}m_{\ell}$ denote the number of neurons in the network
under consideration.
As a learnability condition, we assume that the $m^{th}$ singular value is strictly positive
\begin{equation}
\label{eq:richness_assumption_initial}
\alpha := \sigma_{m}\left(\sum_{p=1}^{m_L}  \int \opvec(\nabla^2 f_{p}(X)) \otimes \opvec(\nabla^2 f_{p}(X)) \mathrm{d}\mu_X \right)>0.
\end{equation}
This assumption asserts that we have chosen a distribution $\mu_X$,
which offers sufficient second order information for recovering the subspace $\CW$. It should be clear from this condition that all networks whose Hessians $\nabla^2 f_p(X)$ vanish almost everywhere, for instance networks with piecewise linear activation functions, will not be identifiable by this method, see, e.g., \cite{rolnick2020reverseengineering} for an alternative approach in this case.
Moreover,
\eqref{eq:richness_assumption_initial} excludes reducible networks that contain subgroups of neurons
whose contributions cancel out, i.e., do not eventually contribute to the network output, and can thus not be recovered using
any number of network output queries, see the related example in \cite[Section 2]{fornasier2019robust}.
The main result about the approximation of $\CW$ follows now.

\begin{thm} \label{thm:approx_bound}
Consider a neural network $f: \bbR^D \rightarrow \bbR^{m_L}$ as in Definition \ref{dfn:feedforward_networks} with at least $3$-times differentiable activation function
$g$. Let $X\sim \mu_X$ be sub-Gaussian with mean $\bbE[X] = x^{\ast}$ and let
$X_1,\ldots,X_{\numsamples}$ be independent copies of $X$. Define the shorthands
\begin{align}
\nonumber
\hat{C}_\epsilon &:= \max_{p \in [m_L]}\sup_{x \in \supp(\mu_X)} \| \Delta_\epsilon^2 f_p(x) - \nabla^2 f_p(x) \|_F,\\
\label{eq:assumption_kappa_der_act}
\kappa &:= \max_{k \in [3]} \textN{g^{(k)}}_{L_{\infty}(\bbR)},\\
\nonumber
\bar{C} &:= C\kappa^L  \left(\prod_{k=1}^{L} \| W_k \|\right)\sum_{\ell=2}^{L}\left(\prod_{k=1}^{\ell} \| W_k \|\right)\sum_{k=1}^{\ell-1}\kappa^{\ell+k-2}\N{W_k}_{2\rightarrow \infty} \left(\prod_{j=1}^{k-1} \| W_k \|\right),
\end{align}
where $\hat{C}_\epsilon$ describes the accuracy of the employed Hessian approximation scheme, $\kappa$ and
$\bar{C}$ describe the network complexity, and $C > 0$ is a universal numeric constant.
If $\alpha$ in \eqref{eq:richness_assumption_initial} satisfies $\alpha >  \max\{2\bar C^2 m_L D \textN{X-x^\ast}_{\psi_2}^2, 2m_L\hat{C}_\epsilon^2\}$,
the subspace $\widehat{\auxspace}$ associated to the $m$ largest singular values of $\widehat M$ satisfies
\begin{equation}
\label{eq:subspace_error_W}
\| P_{\widehat \CW} - P_{\CW} \|_F \leq 2\sqrt{m_L}
\frac{\hat{C}_\epsilon + \bar{C}\sqrt{D}\N{X-x^{\ast}}_{\psi_2}}{\sqrt{\frac{\alpha}{2} - m_L\hat{C}_\epsilon^2}}
\end{equation}
with probability at least
\begin{equation*}
 1 - 2\exp\left(-C\numsamples\right)
   -\bar{m}\exp\left(-C\frac{\numsamples\alpha}{m_L \Maf^{2L} \left( \prod_{\ell=1}^L \| W_\ell \| \right)^2
     \left( \sum_{\ell=1}^L  \Maf^{\ell-1} \prod_{k=1}^\ell \| W_k \| \right)^2 }\right).
\end{equation*}
\end{thm}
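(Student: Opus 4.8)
The plan is to recast the statement as a singular-subspace perturbation estimate and to close it with a Wedin-type $\sin\Theta$ bound. I introduce the exact-Hessian data matrix $M$ with columns $\opvec(\nabla^2 f_p(X_i))$ and the reference matrix $M^\star$ whose columns are the projections $P_{\CW}\opvec(\nabla^2 f_p(X_i))$. By Proposition \ref{prop:nn_derivatives} the Hessian at the base point, $\nabla^2 f_p(x^\ast)$, already lies in $\CW$, hence $\range(M^\star)\subseteq\CW$; the concentration step below shows $\sigma_m(M^\star)\geq\sqrt{N_H\alpha/2}>0$, so $M^\star$ has rank exactly $m$, its top-$m$ left singular subspace equals $\CW$, and $\sigma_{m+1}(M^\star)=0$. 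Since $\widehat\CW$ is by construction the top-$m$ left singular subspace of $\widehat M$, the one-gap Wedin inequality yields
\[
\|P_{\widehat \CW}-P_{\CW}\|_F=\sqrt2\,\|\sin\Theta(\widehat\CW,\CW)\|_F\leq \sqrt2\,\frac{\|\widehat M-M^\star\|_F}{\sigma_m(\widehat M)-\sigma_{m+1}(M^\star)}=\sqrt2\,\frac{\|\widehat M-M^\star\|_F}{\sigma_m(\widehat M)}.
\]
It remains to bound the numerator from above and $\sigma_m(\widehat M)$ from below.

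For the numerator I split $\widehat M-M^\star$ column-wise into a finite-difference part, with columns $\opvec(\Delta_\epsilon^2 f_p(X_i))-\opvec(\nabla^2 f_p(X_i))$, and a drift part $P_{\CW}^\perp M$, with columns $P_{\CW}^\perp\opvec(\nabla^2 f_p(X_i))$. The first obeys $\|\cdot\|_F^2\leq N_H m_L\hat C_\epsilon^2$ by definition of $\hat C_\epsilon$. For the second, $\nabla^2 f_p(x^\ast)\in\CW$ gives $P_{\CW}^\perp\opvec(\nabla^2 f_p(X_i))=P_{\CW}^\perp\opvec(\nabla^2 f_p(X_i)-\nabla^2 f_p(x^\ast))$, so each column is controlled by the local Lipschitz modulus of $x\mapsto\nabla^2 f_p(x)$; feeding in the Lipschitz bounds for the entangled-weight maps $x\mapsto V_\ell(x)$ from Lemma \ref{lem:aux_bounds} bounds it by $\bar C\|X_i-x^\ast\|_2$. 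A sub-Gaussian tail estimate for $\sum_i\|X_i-x^\ast\|_2^2\lesssim N_H D\,\|X-x^\ast\|_{\psi_2}^2$---the origin of both the $\exp(-CN_H)$ probability term and the factor $\sqrt D$---then gives $\|\widehat M-M^\star\|_F\leq\sqrt{N_H m_L}\,(\hat C_\epsilon+\bar C\sqrt D\,\|X-x^\ast\|_{\psi_2})$.

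For the denominator I lower-bound $\sigma_m(\widehat M)$ via the Courant--Fischer characterization restricted to the test subspace $\CW$, which reduces the task to the smallest eigenvalue of $P_{\CW}\widehat M\widehat M^\top P_{\CW}$ acting on $\CW$. Its leading contribution is $P_{\CW}MM^\top P_{\CW}$, whose expectation is $N_H P_{\CW}\Sigma P_{\CW}$ for the population second moment $\Sigma$ of \eqref{eq:richness_assumption_initial}; a matrix Bernstein bound for the independent rank-one summands $\opvec(\nabla^2 f_p(X_i))\otimes\opvec(\nabla^2 f_p(X_i))$ shows this empirical Gram matrix concentrates, so on the stated event its $m$-th eigenvalue is at least $N_H\alpha/2$ (this is the source of the $\bar m\exp(-CN_H\alpha/(\cdots))$ term). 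Subtracting the finite-difference contribution, itself bounded by $N_H m_L\hat C_\epsilon^2$, yields $\sigma_m(\widehat M)^2\geq N_H(\alpha/2-m_L\hat C_\epsilon^2)$, which is strictly positive precisely under the hypotheses on $\alpha$. Inserting the two bounds into the Wedin estimate cancels the $\sqrt{N_H}$ factors and, after absorbing $\sqrt2$ into the constant, reproduces \eqref{eq:subspace_error_W}.

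I expect the signal lower bound to be the crux. The perturbation separating $\widehat M$ from the clean object is not mean-zero random noise but a structured deterministic bias---the finite-difference residual together with the systematic drift of $V_\ell(X_i)$ away from $V_\ell(x^\ast)$---so off-the-shelf random-matrix singular-value bounds do not apply. One must cleanly disentangle the genuine signal supported on $\CW$ from the $\CW^\perp$ drift and the finite-difference error, and check that the cross terms in $P_{\CW}\widehat M\widehat M^\top P_{\CW}$ do not erode the $N_H\alpha/2$ signal below the threshold fixed by the assumption $\alpha>2\bar C^2 m_L D\|X-x^\ast\|_{\psi_2}^2$. The quantitative engine throughout is the Lipschitz estimate for $x\mapsto V_\ell(x)$ behind the constant $\bar C$ (Lemma \ref{lem:aux_bounds}); making its dependence on the layer quantities $\|W_k\|$ and $\|W_k\|_{2\rightarrow\infty}$ sharp is the most laborious ingredient and dictates the exact form of $\bar C$ and of the admissibility condition on $\alpha$.
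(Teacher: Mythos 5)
Your high-level architecture — the matrices $M$ and $M^\star$, the split of $\widehat M-M^\star$ into a finite-difference part and a $\CW^\perp$-drift part, sub-Gaussian concentration of $\sum_i\N{X_i-x^\ast}^2$ for the drift, matrix concentration for the signal, and a Wedin bound to finish — is exactly the paper's (Lemmas \ref{lem:bound_diff}, \ref{lem:bound_min_sval} and Proposition \ref{prop:approx_bound}). However, two of your steps, as sketched, do not go through.

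First, the drift bound. You control $\N{P_{\CW^\perp}\opvec(\nabla^2 f_p(X_i))}$ by the Lipschitz modulus of the full Hessian map $x\mapsto\nabla^2 f_p(x)$ and assert that Lemma \ref{lem:aux_bounds} supplies it. It does not: expanding $\nabla^2 f_p(X_i)-\nabla^2 f_p(x^\ast)=\sum_\ell\bigl(V_\ell(X_i)S_p^{[\ell]}(X_i)V_\ell(X_i)^\top-V_\ell(x^\ast)S_p^{[\ell]}(x^\ast)V_\ell(x^\ast)^\top\bigr)$ by the triangle inequality leaves a term $\N{S_p^{[\ell]}(X_i)-S_p^{[\ell]}(x^\ast)}$, whose control needs Lipschitz estimates for the diagonal factors $S_p^{[\ell]}$ (hence for $g''$ composed with layer outputs and for the $G_k$, $k>\ell$); these are not in Lemma \ref{lem:aux_bounds}, and including them yields a network-complexity constant of a different form than the stated $\bar C$. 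The paper avoids this entirely: by the minimizing property of the projection, $\N{P_{\CW^\perp}\nabla^2 f_p(X_i)}_F\leq\N{\nabla^2 f_p(X_i)-Z}_F$ for the mixed comparison matrix $Z=\sum_\ell V_\ell(x^\ast)S_p^{[\ell]}(X_i)V_\ell(x^\ast)^\top\in\CW$, in which only the $V_\ell$'s are moved to $x^\ast$ while $S_p^{[\ell]}$ stays at $X_i$; then only \eqref{eq:subspace_approx:bound_V}--\eqref{eq:subspace_approx:bound_S} are needed and $\bar C$ comes out exactly as stated.

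Second, and more seriously, the denominator. Your Courant--Fischer reduction requires a lower bound on $\lambda_{\min}\bigl(P_{\CW}\widehat M\widehat M^\top P_{\CW}|_{\CW}\bigr)$, which you want from concentration of the projected Gram matrix around $\numsamples P_{\CW}\Sigma P_{\CW}$. But assumption \eqref{eq:richness_assumption_initial} controls $\sigma_m(\Sigma)$, not the restricted eigenvalue of $P_{\CW}\Sigma P_{\CW}$, and Courant--Fischer gives the inequality in the unfavorable direction: taking $\CW$ as a test space shows $\sigma_m(\Sigma)\geq\lambda_{\min}\bigl(P_{\CW}\Sigma P_{\CW}|_{\CW}\bigr)$, so the hypothesis does not transfer to the projected population matrix. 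The cross terms you defer are exactly the obstruction, and they are first order in the drift (products of the drift with the Hessian norm), whereas the hypothesis $\alpha>2\bar C^2m_LD\textN{X-x^\ast}_{\psi_2}^2$ only dominates quantities that are second order in the drift; so this route cannot be closed at the population level. The paper's resolution is to never project in this step: matrix Chernoff is applied to the unprojected $MM^\top$, whose population $m$-th eigenvalue equals $\numsamples\alpha$ by assumption, giving $\sigma_m(M)\geq\sqrt{\numsamples\alpha/2}$ with high probability; Weyl transfers this to $\sigma_{\bar m}(\widehat M)$ with a penalty depending only on $\hat C_\epsilon$ (the drift never enters the denominator), and Wedin is applied against $\CW^\ast=\range(M^\star)$, whose singular values vanish beyond $\bar m=\dim\CW^\ast$, so the spectral gap is automatic. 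The drift and the hypothesis on $\alpha$ are used once more, only to certify $\bar m=m$, i.e.\ $\CW^\ast=\CW$, via $\sigma_m(M^\star)\geq\sigma_m(M)-\N{P_{\CW^\perp}M}_F>0$. If you wish to keep your restricted route, the identity $P_{\CW}\widehat M=M^\star+P_{\CW}(\widehat M-M)$ (the projection annihilates the drift) combined with the unprojected Chernoff bound and $\sigma_m(M^\star)\geq\sigma_m(M)-\N{P_{\CW^\perp}M}_F$ closes it --- but the concentration must be performed before projecting, not after.
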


Before we prove Theorem \ref{thm:approx_bound}, a couple of comments are in order.
The error \eqref{eq:subspace_error_W} consists of two terms. The first term
accounts for approximating the Hessians by $\Delta_\epsilon^2 f_p(x) \approx \nabla^2 f_p(x)$
and the second term bounds the mismatch of the Hessians $\nabla^2 f_p(x)$ with $\CW$, since $\nabla^2 f_p(x)$ is not
contained exactly in $\CW$, but is just close it.
We do not further characterize $\hat{C}_\epsilon$, as it depends on the mechanism to
construct approximate Hessians $\Delta_\epsilon^2 f_p(x)$ (e.g., finite differences), and a corresponding
error bound can be readily inserted.
Most of the proof instead focuses
on bounding $\N{P_{\CW^\perp}\nabla^2 f_p(X)}_F$ for $X \sim \mu_X$, leading to the
second term in \eqref{eq:subspace_error_W}.

In the numerical experiments reported in the previous section , we used $\mu_X = R\cdot \textrm{Unif}(\bbS^{D-1})$ for
some small radius $R>0$. The corresponding sub-Gaussian norm is $\textN{X}_{\psi_2}\leq R/\sqrt{D}$ and
inserting it into \eqref{eq:subspace_error_W} gives
\begin{equation*}
 \| P_{\widehat \CW} - P_{\CW} \|_F \leq 2\sqrt{m_L}
 \frac{ \hat{C}_\epsilon + \bar{C}R}{\sqrt{\frac{\alpha}{2} - m_L\hat{C}_\epsilon^2}}.
\end{equation*}
The bound suggests that taking $R$ as small as possible is beneficial for the subspace approximation
accuracy, which is in agreement with the numerical experiments. We should note however
that parameter $\alpha$ in \eqref{eq:richness_assumption_initial}
also depends on $R$ in a nontrivial way, and choosing $R$ too small
may lead to a distribution $\mu_X$ that does not carry enough information for a stable recovery.

Lastly, $\bar C$ encodes the complexity of the network function $f$
in terms of the norms of the weight matrices. We note that $\N{W_\ell}_{2\rightarrow \infty} = 1$
if the weights are unit norm, and that $\N{W_\ell}_2 \approx 1$ if $W_\ell$ are  appropriately scaled random matrices. Such kinds of weights are, for instance, used
as a typical initialization of backpropagation for the training of  neural networks.

We prove Theorem \ref{thm:approx_bound} as a result of a more general proposition that compares
the leading singular subspace of $\widehat M$ with a certain auxiliary space $\CW^\ast\subseteq \CW$. Namely,
we introduce the matrix $M^* := (M_1^*|\ldots| M_{m_L}^*) \in \bbR^{D^2\times N_H m_L}$,
where the blocks are given by
\begin{align}
\label{eq:def_auxiliary_matrix_1}
M_p^* = (\opvec(P_{\CW}(\nabla^2 f_p(X_1)))|\ldots |\opvec(P_{\CW}(\nabla^2 f_p(X_{\numsamples}))))\in \bbR^{D^2\times N_H},
\end{align}
and denote the auxiliary space $\CW^\ast = \range(M^\ast) \subseteq \CW$ of dimension $\bar m := \dim(\CW^\ast)$.
By construction we always have $\bar m \leq m$, and under the assumptions in Theorem \ref{thm:approx_bound},
we can show $\bar m = m$ and
thus $\CW^\ast = \CW$. However,
we can still recover the subspace $\CW^\ast \subseteq \CW$
by using the leading $\bar m$ eigenvectors of $\widehat M$ in the case $\bar m < m$ as described
in the following result.

\begin{prop} \label{prop:approx_bound}
Consider a neural network $f: \bbR^D \rightarrow \bbR^{m_L}$ as in Definition \ref{dfn:feedforward_networks} with at least $3$-times differentiable activation function
$g$ and denote $\kappa > 0$ as in \eqref{eq:assumption_kappa_der_act}. Let $X\sim \mu_X$ with mean $\bbE[X] = x^{\ast}$ be sub-Gaussian and let
$X_1,\ldots,X_{\numsamples}$ be independent copies of $X$. Furthermore,
let $\CW^* = \range(M^*) \subseteq \CW$ with $\bar m := \dim(\CW^\ast)$. Assume
\begin{equation}
\label{eq:richness_assumption_auxiliary}
\alpha^\ast := \sigma_{\bar m}\left(\sum_{p=1}^{m_L}  \int \opvec(\nabla^2 f_{p}(X)) \otimes \opvec(\nabla^2 f_{p}(X)) \mathrm{d}\mu_X \right) > 2m_L\hat{C}_\epsilon^2.
\end{equation}
The subspace $\widehat{\auxspace}$ associated to the $\bar m$ largest singular values  of $\widehat M$ satisfies
 \begin{equation*}
  \label{eq:subspace_error}
  \| P_{\widehat \CW} - P_{\auxspace^\ast} \|_F \leq 2\sqrt{m_L}
  \frac{ \hat{C}_\epsilon + \bar{C}\sqrt{D}\N{X-x^{\ast}}_{\psi_2}}{\sqrt{\frac{\alpha^\ast}{2} - m_L\hat{C}_\epsilon^2}}
\end{equation*}
with probability at least
\begin{equation*}
 1 - 2\exp\left(-C\numsamples\right)
   -\bar{m}\exp\left(-C\frac{\numsamples\alpha^\ast}{m_L \Maf^{2L} \left( \prod_{\ell=1}^L \| W_\ell \| \right)^2
     \left( \sum_{\ell=1}^L  \Maf^{\ell-1} \prod_{k=1}^\ell \| W_k \| \right)^2 }\right).
\end{equation*}
The constants $C, \hat{C}_\epsilon, \bar{C}$ are as in Theorem \ref{thm:approx_bound}.
\end{prop}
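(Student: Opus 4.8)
The plan is to read this as a subspace perturbation problem and apply a Davis--Kahan/Wedin $\sin\Theta$ bound. By construction the target subspace $\CW^\ast = \range(M^\ast)$ is exactly the leading $\bar m$-dimensional left singular subspace of the idealized matrix $M^\ast$, whose columns $\opvec(P_{\CW}(\nabla^2 f_p(X_i)))$ lie in $\CW$ and whose rank is $\bar m$, so that $\sigma_{\bar m+1}(M^\ast)=0$. The recovered $\widehat\CW$ is the leading $\bar m$-dimensional left singular subspace of the observed $\widehat M$. Since $\| P_{\widehat\CW} - P_{\CW^\ast}\|_F = \sqrt 2\,\| \sin\Theta\|_F$ and Wedin gives $\|\sin\Theta\|_F \le \|\widehat M - M^\ast\|_F / (\sigma_{\bar m}(\widehat M) - \sigma_{\bar m+1}(M^\ast)) = \|\widehat M - M^\ast\|_F/\sigma_{\bar m}(\widehat M)$, the proof reduces to (i) an upper bound on the perturbation $\|\widehat M - M^\ast\|_F$ and (ii) a lower bound on $\sigma_{\bar m}(\widehat M)$, after which the numerical constants assemble into the stated factor $2\sqrt{m_L}$.

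For (i) I split $\widehat M - M^\ast = (\widehat M - M) + (M - M^\ast)$, where $M$ collects the exact Hessians. The columns of $\widehat M - M$ are $\opvec(\Delta_\epsilon^2 f_p(X_i) - \nabla^2 f_p(X_i))$, each of Frobenius norm at most $\hat C_\epsilon$, so $\|\widehat M - M\|_F \le \sqrt{\numsamples m_L}\,\hat C_\epsilon$. The columns of $M - M^\ast$ are $\opvec(P_{\CW^\perp}\nabla^2 f_p(X_i))$, which is the genuinely analytic contribution. Here I invoke the decomposition $\nabla^2 f_p(x) = \sum_\ell V_\ell(x) S_p^{[\ell]}(x) V_\ell(x)^\top$ of Proposition \ref{prop:nn_derivatives} together with the Lipschitz continuity of $x\mapsto V_\ell(x)$ from Lemma \ref{lem:aux_bounds}: since $v_i^{[\ell]}(x^\ast)\otimes v_i^{[\ell]}(x^\ast)\in\CW$, the rank-one increment obeys $\| v_i^{[\ell]}(x)\otimes v_i^{[\ell]}(x) - v_i^{[\ell]}(x^\ast)\otimes v_i^{[\ell]}(x^\ast)\|_F \lesssim \| v_i^{[\ell]}(x) - v_i^{[\ell]}(x^\ast)\|$, which is linear in $\|x-x^\ast\|$, and summing against the bounded diagonal coefficients $S_p^{[\ell]}(x)$ yields $\| P_{\CW^\perp}\nabla^2 f_p(x)\|_F \le \bar C \|x-x^\ast\|$ with $\bar C$ the weight-norm product of the theorem. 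Hence $\| M - M^\ast\|_F^2 \le \bar C^2 m_L \sum_{i=1}^{\numsamples}\|X_i - x^\ast\|^2$, and a Bernstein bound for the sub-exponential sum $\sum_i \|X_i-x^\ast\|^2$ around $\numsamples\,\bbE\|X-x^\ast\|^2 \lesssim \numsamples D\|X-x^\ast\|_{\psi_2}^2$ (the source of the $2\exp(-C\numsamples)$ failure term) gives $\|M-M^\ast\|_F \lesssim \bar C\sqrt{m_L \numsamples D}\,\|X-x^\ast\|_{\psi_2}$. Combining, $\|\widehat M - M^\ast\|_F \lesssim \sqrt{\numsamples m_L}\,(\hat C_\epsilon + \bar C\sqrt D\,\|X-x^\ast\|_{\psi_2})$, the numerator of the claim.

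For (ii) I lower bound $\sigma_{\bar m}(\widehat M)$ through the exact-Hessian Gram matrix $MM^\top = \sum_{p,i}\opvec(\nabla^2 f_p(X_i))\opvec(\nabla^2 f_p(X_i))^\top$, whose rescaled expectation equals $\sum_p\int \opvec(\nabla^2 f_p(X))\otimes\opvec(\nabla^2 f_p(X))\,d\mu_X$ and therefore has $\bar m$-th singular value $\alpha^\ast$. A matrix Chernoff inequality for the independent positive semidefinite summands, whose operator norm is controlled by $\max_p\sup_x\|\nabla^2 f_p(x)\|_F^2 \le \Maf^{2L}(\prod_\ell\|W_\ell\|)^2(\sum_\ell\Maf^{\ell-1}\prod_{k\le\ell}\|W_k\|)^2$ via Lemma \ref{lem:aux_bounds}, yields $\sigma_{\bar m}(M)^2 = \lambda_{\bar m}(MM^\top) \ge \tfrac12 \numsamples\alpha^\ast$ outside an event of probability $\bar m\exp(-C\numsamples\alpha^\ast/(\cdots))$, exactly the second failure term. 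Finally, transferring from $M$ to $\widehat M = M + (\widehat M - M)$ costs only the finite-difference error: a singular-value perturbation estimate absorbs $\|\widehat M - M\|\le\sqrt{\numsamples m_L}\,\hat C_\epsilon$ into the spectrum as the additive threshold correction, producing $\sigma_{\bar m}(\widehat M)^2 \ge \numsamples(\tfrac{\alpha^\ast}{2} - m_L\hat C_\epsilon^2)$, positive precisely under the hypothesis $\alpha^\ast > 2m_L\hat C_\epsilon^2$. Inserting (i) and (ii) into the Wedin bound and cancelling the common factor $\sqrt{\numsamples}$ gives the stated estimate.

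The step I expect to be the main obstacle is the clean singular-value lower bound in (ii): obtaining the \emph{additive} threshold $\tfrac{\alpha^\ast}{2} - m_L\hat C_\epsilon^2$, rather than the looser form one gets from a naive triangle-inequality subtraction of $\sqrt{\numsamples m_L}\,\hat C_\epsilon$ from $\sqrt{\numsamples\alpha^\ast/2}$, requires separating the random fluctuation of $MM^\top$ about its mean from the structured finite-difference perturbation without generating a cross term against the (potentially large) leading singular values. The analytic off-subspace estimate $\|P_{\CW^\perp}\nabla^2 f_p(x)\|_F \le \bar C\|x-x^\ast\|$ is the other technically loaded ingredient, but it is essentially a corollary of Lemma \ref{lem:aux_bounds} and amounts to careful bookkeeping of the weight-norm products entering $\bar C$.
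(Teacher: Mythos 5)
Your proposal is correct and follows the paper's own proof essentially step for step: Wedin's bound with $\beta=\sigma_{\bar m}(\widehat M)$ (using $\sigma_j(M^\ast)=0$ for $j>\bar m$), the split $\|\widehat M - M^\ast\|_F \le \|\widehat M - M\|_F + \|M-M^\ast\|_F$ handled via the Lipschitz continuity of $x\mapsto V_\ell(x)$ from Lemma \ref{lem:aux_bounds} plus a Bernstein bound for $\sum_i\|X_i-x^\ast\|^2$, and a matrix Chernoff bound for $\lambda_{\bar m}(MM^\top)$ with the summands' norms controlled exactly as you indicate. The one step you flagged as the main obstacle is resolved in the paper by applying Weyl's inequality at the level of the Gram matrices rather than of the singular values, namely $\lambda_{\bar m}(\widehat M \widehat M^\top) \ge \lambda_{\bar m}(MM^\top) - \|MM^\top - \widehat M\widehat M^\top\|$ combined with the estimate $\|MM^\top - \widehat M\widehat M^\top\| \le \|M-\widehat M\|_F^2 \le \numsamples m_L \hat{C}_\epsilon^2$, which produces precisely the additive threshold $\frac{\alpha^\ast}{2}-m_L\hat{C}_\epsilon^2$ without a cross term.
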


\subsection{Proof of Proposition \ref{prop:approx_bound}}
Let us give a quick proof sketch before going into the details.
To show that the leading singular space of $\widehat M$ approximates the subspace $\CW^\ast$, the main step is to compare $\widehat M$
with the auxiliary matrix $M^*$.
The comparison of $M^*$ and $\widehat M$ goes by first introducing a second
auxiliary matrix, denoted by
\begin{align}
\label{eq:def_M_matrix}
  M := (M_1|\ldots|M_{m_L}) \in \bbR^{D^2 \times N_H m_L},\ \textrm{where } M_p := (\opvec(\nabla^2 f_p(X_1))|\ldots|\opvec(\nabla^2 f_p(X_{\numsamples}))),
\end{align}
and then decomposing the error into
\begin{align}
\label{eq:aux_decomposition_initial}
\N{M^* - \widehat M}_F \leq \N{M^* - M}_F  + \N{M-\widehat M}_F.
\end{align}
The second term can be easily bounded by the accuracy of our Hessian approximations measured by
the discrepancy $\hat{C}_\epsilon$ in Theorem \ref{thm:approx_bound} or Proposition \ref{prop:approx_bound}.
The first term in \eqref{eq:aux_decomposition_initial} is more challenging
and uses the specific form of the Hessians of $f$, derived in Proposition \ref{prop:nn_derivatives}, and given by
$$
\nabla^2 f(x) = \sum_{\ell=1}^L V_\ell(x) S_p^{\lind{\ell}}(x) V_\ell(x)^\top.
$$
Namely, we will first establish the Lipschitz-continuity of the entangled weight
matrix $V_{\ell}(x)$ as a function of $x$ (see Lemma \ref{lem:aux_bounds}) and then
show proximity between $M$ and $M^*$ using standard concentration arguments.
The combined bound for $\textN{M^* - \widehat M}_F$ is presented in Lemma \ref{lem:bound_diff}.
In the final step of the proof, we use a Wedin-bound argument to compare
the leading singular space of $M^*$ and $\widehat M$, which is bounded by $\textN{M^* - \widehat M}_F$
divided by  the $\bar{m}$-th largest singular value of $\widehat M$. The required
lower bound on $\sigma_{\bar m}(\widehat M)$ is derived by first rewriting $\alpha^*$ in
assumption \eqref{eq:richness_assumption_auxiliary} as $\alpha^* = \sigma_{\bar m}(\bbE[MM^\top])$,
then carrying this bound with high probability to $\sigma_{\bar m}(MM^\top)$,
and finally to $\sigma_{\bar m}(\widehat M \widehat M^\top)$.

\paragraph{Proof details}
We begin by establishing the Lipschitz continuity of the entangled weights $V_{\ell}(x)$
as a function of $x$, as well as some uniform matrix bounds that we require in the
following.
\begin{lem} \label{lem:aux_bounds}
Consider a neural network as in Definition \ref{dfn:feedforward_networks} with at least $3$-times differentiable activation function
$g$ and assume there exists $\kappa > 0$ so that \eqref{eq:assumption_kappa_der_act} holds.
For any $x,x' \in \textrm{supp}(\mu_X)$ we have
\begin{align}
\label{eq:subspace_approx:bound_V}
     \| V_\ell(x) \| &\leq \Maf^{\ell-1} \prod_{k=1}^\ell \| W_k \|,\\
\label{eq:subspace_approx:bound_Vdiff}
     \| V_\ell(x) - V_\ell(x^\prime) \|
     &\leq  \left(\prod_{k=1}^{\ell} \| W_k \| \right)\sum_{k=1}^{\ell-1}\kappa^{\ell+k-2}\N{W_k}_{2\rightarrow \infty}\left(\prod_{j=1}^{k-1} \| W_j \| \right) \N{x-x'},\\
\label{eq:subspace_approx:bound_S}
      \| S_p^{\lind{\ell}}(x) \|_F &\leq \Maf^{L-\ell + 1}  \prod_{k=\ell+1}^L \| W_k \|.
\end{align}
\end{lem}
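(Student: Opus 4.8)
The plan is to handle the three estimates separately, observing that \eqref{eq:subspace_approx:bound_V} and \eqref{eq:subspace_approx:bound_S} follow immediately from submultiplicativity of the operator norm, whereas \eqref{eq:subspace_approx:bound_Vdiff} carries the real work. For \eqref{eq:subspace_approx:bound_V} I would start from $V_\ell(x) = \left(\prod_{k=1}^{\ell-1}W_k G_k(x)\right)W_\ell$ and note that each $G_k(x) = \opdiag(g'(\cdots))$ is diagonal with entries bounded in modulus by $\|g'\|_{L_\infty(\bbR)} \le \kappa$, so $\|G_k(x)\| \le \kappa$; the $\ell-1$ such factors give $\kappa^{\ell-1}\prod_{k=1}^\ell\|W_k\|$. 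For \eqref{eq:subspace_approx:bound_S} I would use that $S_p^{[\ell]}(x)$ from Proposition \ref{prop:nn_derivatives} is a product of two diagonal matrices, hence diagonal, so its Frobenius norm equals the Euclidean norm of its diagonal; factoring out the $g''$-entries (each $\le\kappa$) and bounding the rest by $\big\|\big(\prod_{k=\ell+1}^L W_k G_k(x)\big)e_p\big\|_2 \le \kappa^{L-\ell}\prod_{k=\ell+1}^L\|W_k\|$ yields the claimed $\kappa^{L-\ell+1}\prod_{k=\ell+1}^L\|W_k\|$.

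The core is \eqref{eq:subspace_approx:bound_Vdiff}. I would first peel off the rightmost factor, writing $\|V_\ell(x) - V_\ell(x')\| \le \|W_\ell\|\,\|P_\ell(x) - P_\ell(x')\|$ with $P_\ell(x) := \prod_{k=1}^{\ell-1}W_k G_k(x)$, and then expand the difference of the two ordered products by the telescoping identity: setting $A_k = W_k G_k(x)$ and $A_k' = W_k G_k(x')$, one has $\prod_k A_k - \prod_k A_k' = \sum_{k}\big(\prod_{j<k}A_j'\big)(A_k - A_k')\big(\prod_{j>k}A_j\big)$. Each intact factor obeys $\|A_j\|,\|A_j'\| \le \kappa\|W_j\|$, while $\|A_k - A_k'\| = \|W_k(G_k(x)-G_k(x'))\| \le \|W_k\|\,\|G_k(x)-G_k(x')\|$. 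Thus the whole estimate reduces to controlling $\|G_k(x)-G_k(x')\|$ and then collecting powers of $\kappa$ and weight norms across a single summand.

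To bound $\|G_k(x)-G_k(x')\|$ I would use that it is the largest modulus of differences $g'((W_k^\top y^{[k-1]}(x))_i) - g'((W_k^\top y^{[k-1]}(x'))_i)$; since $g'$ is $\kappa$-Lipschitz (because $|g''|\le\kappa$), this is at most $\kappa\,\|W_k^\top(y^{[k-1]}(x)-y^{[k-1]}(x'))\|_\infty \le \kappa\,\|W_k\|_{2\rightarrow\infty}\,\|y^{[k-1]}(x)-y^{[k-1]}(x')\|_2$. The remaining ingredient, requiring a short induction, is the Lipschitz constant of the layer map $x\mapsto y^{[\ell]}(x)$: from $y^{[\ell]}(x) = g(W_\ell^\top y^{[\ell-1]}(x)+\tau_\ell)$ and the $\kappa$-Lipschitzness of $g$ one gets $\|y^{[\ell]}(x)-y^{[\ell]}(x')\| \le \kappa\|W_\ell\|\,\|y^{[\ell-1]}(x)-y^{[\ell-1]}(x')\|$, which unrolls (with $y^{[0]}=x$) to $\kappa^\ell\big(\prod_{j=1}^\ell\|W_j\|\big)\|x-x'\|$.

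Substituting the layer bound into the $G_k$-estimate and then into the telescoped sum, the $\kappa$-exponent of the $k$-th summand becomes $(k-1)+k+(\ell-1-k) = \ell+k-2$, and the weight factors rearrange into $\big(\prod_{j=1}^{\ell}\|W_j\|\big)\,\|W_k\|_{2\rightarrow\infty}\,\prod_{j=1}^{k-1}\|W_j\|$, matching \eqref{eq:subspace_approx:bound_Vdiff} term by term. I expect the main obstacle to be purely bookkeeping: keeping the index ranges in the telescoping sum and the nested products aligned so that the $\kappa$-exponents and the two copies of $\prod_{j<k}\|W_j\|$ collapse into the stated form. The analytic content (Lipschitzness of $g$ and $g'$, and submultiplicativity) is elementary.
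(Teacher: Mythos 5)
Your proposal is correct and takes essentially the same approach as the paper's proof: the same three ingredients appear in the same roles, namely $\|G_k(x)\|\le\kappa$ plus submultiplicativity for \eqref{eq:subspace_approx:bound_V}, the inductive Lipschitz bound on $x\mapsto y^{[k]}(x)$ combined with the $\kappa$-Lipschitzness of $g'$ and the $\|W_k\|_{2\rightarrow\infty}$-estimate on $x\mapsto G_k(x)$ for \eqref{eq:subspace_approx:bound_Vdiff}, and the diagonal structure of $S_p^{[\ell]}(x)$ for \eqref{eq:subspace_approx:bound_S}. The only cosmetic difference is that you expand the difference of products in one shot via the telescoping identity, whereas the paper unrolls the recursion $V_\ell(x)=V_{\ell-1}(x)G_{\ell-1}(x)W_\ell$ term by term; these are the same computation, and your bookkeeping (exponent $\kappa^{\ell+k-2}$ and the duplicated factor $\prod_{j=1}^{k-1}\|W_j\|$) matches the paper's bound exactly.
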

\begin{proof}
Let us first note that for any $\ell \in [L]$ and arbitrary $x$ we have
by definition of $G_{\ell}$
\begin{align}
\label{eqn:bound_spectral_G}
\N{G_{\ell}(x)} = \max_{i \in [m_\ell]}\SN{g_{\ell}'\left(\langle w^{\lind{\ell}}_i, y^{\lind{\ell-1}}(x)\rangle\right)}
=\max_{i \in [m_\ell]}\SN{g'\left(\langle w^{\lind{\ell}}_i, y^{\lind{\ell-1}}(x)\rangle) - \tau^{\lind{\ell}}_i\right)}
\leq \kappa,
\end{align}
where $\tau^{\lind{\ell}}_i$ denotes the $i$-th entry of the $\ell$-th shift vector $\tau_{\ell}$.
Then, for the statement \eqref{eq:subspace_approx:bound_V} note that $\| V_1(x) \| = \| W_1 \|$
by definition, and for $\ell \geq 2$, by matrix norm submultiplicativity, we have
\begin{align*}
 \| V_\ell(x) \| & = \left\| W_1 \prod_{k=2}^{\ell} G_{k-1}(x) W_k \right\| \leq \| W_1 \| \left( \prod_{k=1}^{\ell-1} \| G_k(x) \| \right) \left( \prod_{k=2}^\ell \| W_k \| \right) \leq \Maf^{\ell-1} \prod_{k=1}^\ell \| W_k \|.
\end{align*}
For \eqref{eq:subspace_approx:bound_Vdiff} we first notice that the Lipschitz continuity of the activation function in each layer, given by \eqref{eq:assumption_kappa_der_act},  implies
\begin{align} \label{eq:subspace_approx:aux0}
  \| y^{\lind{\ell}}(x) - y^{\lind{\ell}}(x^\prime) \|
 &\leq \| g(W_{\ell}^\top y^{\lind{\ell-1}}(x) + \tau_{\ell}) - g(W_{\ell}^\top y^{\lind{\ell-1}}(x^\prime) + \tau_{\ell}) \|\\
 &\leq \kappa \N{W_{\ell}}\| y^{\lind{\ell-1}}(x) - y^{\lind{\ell-1}}(x^\prime) \| \leq \ldots \leq \kappa^{\ell}\left(\prod_{k=1}^\ell \| W_k \| \right)\N{x-x^\prime},
\end{align}
where we repeated the argument in the first step $\ell$-times and then used $y^{\lind{0}}(x) = x$, $y^{\lind{0}}(x^\prime) = x^\prime$.
This means the $\ell$-th output layer is still Lipschitz continuous, with Lipschitz constant given by spectral norms
of weight matrices and $\kappa^{\ell}$.
Since $g'$ is also Lipschitz continuous with Lipschitz constant $\kappa$ as in \eqref{eq:assumption_kappa_der_act},
a similar argument applies to the matrix function $x\mapsto G_\ell(x)$, i.e., we have
\begin{equation}
\label{eqn:aux_G_bound_lipschitz}
\begin{aligned}
 &\| G_\ell(x) - G_\ell(x^\prime) \|
  = \| g_\ell^\prime(W_\ell^\top y^{\lind{\ell-1}}(x)) - g_\ell^\prime(W_\ell^\top y^{\lind{\ell-1}}(x^\prime)) \|_\infty \\
 &\qquad\qquad \leq \kappa \N{W_\ell}_{2\rightarrow \infty}\| y^{\lind{\ell-1}}(x) - y^{\lind{\ell-1}}(x^\prime) \| \leq \kappa^{\ell} \N{W_\ell}_{2\rightarrow \infty}\left(\prod_{k=1}^{\ell -1} \| W_k \| \right)\N{x-x^\prime}.
\end{aligned}
\end{equation}
Now let us address the Lipschitz continuity of entangled weights $V_{\ell}(x)$ as a function of $x$. For $\ell = 0$
we simply have $ \| V_1(x) - V_1(x^\prime) \| = \| W_1 - W_1 \| = 0$, so nothing needs to be done. For
$\ell \geq 2$, we first use the triangle inequality, norm submultiplicativity, and the relation
$V_{\ell}(x) = V_{\ell-1}(x)G_{\ell-1}(x)W_\ell$ to get
\begin{align*}
 \| V_\ell(x) - V_\ell(x^\prime) \|
 & = \left\| V_{\ell-1}(x)G_{\ell-1}(x)W_\ell - V_{\ell-1}(x^\prime )G_{\ell-1}(x^\prime)W_\ell \right\| \\
 &\leq \N{W_{\ell}}\left(\left\| V_{\ell-1}(x)- V_{\ell-1}(x^\prime)\right\|\N{G_{\ell-1}(x)} + \N{V_{\ell-1}(x^\prime )}\N{G_{\ell-1}(x^\prime) - G_{\ell-1}(x) }\right)\\
&\leq \kappa  \| W_\ell \| \left( \| V_{\ell-1}(x) - V_{\ell-1}(x^\prime) \|  +
   \kappa^{\ell-3} \left(\prod_{k=1}^{\ell - 1} \| W_k \| \right) \| G_{\ell-1}(x) - G_{\ell-1}(x^\prime) \|\right),
\end{align*}
where we used \eqref{eq:subspace_approx:bound_V} and \eqref{eqn:bound_spectral_G} to bound $\N{G_{\ell-1}(x)}$ and $\N{V_{\ell-1}(x')}$
in the last step.
Repeating the computation for $\| V_{\ell-1}(x) - V_{\ell-1}(x^\prime) \|$ until we reach $\ell = 1$
where $\| V_1(x) - V_1(x^\prime) \| = 0$, we obtain
\begin{align*}
 \| V_\ell(x) - V_\ell(x^\prime) \| \leq \kappa^{\ell-2} \left(\prod_{k=1}^{\ell} \| W_k \| \right) \ \sum_{k=1}^{\ell-1} \| G_{k}(x) - G_{k}(x^\prime) \|,
\end{align*}
and the result \eqref{eq:subspace_approx:bound_Vdiff} follows by using Lipschitz property of $G_{k}$ as shown in \eqref{eqn:aux_G_bound_lipschitz}.
Finally, for the third result \eqref{eq:subspace_approx:bound_S} we first note
that \eqref{eq:assumption_kappa_der_act} implies
$\| \opdiag(g_\ell^\dprime(W_\ell^\top y^{\lind{\ell-1}}(x) )\| \leq \kappa$. Then, inserting
the definition of $ S^{\lind{\ell}}_p(x)$ and using norm submultiplicativity combined with the bound on $\N{G_\ell(x)}$ in \eqref{eqn:bound_spectral_G}
we get
\begin{align*}
 \N{S^{\lind{\ell}}_p(x)}_F & \leq \N{ \opdiag(g_\ell^\dprime(W_\ell^\top y^{\lind{\ell-1}}(x) )}
    \left\| \opdiag\left( \left(\prod_{k=\ell+1}^L W_k G_k(x)\right)e_p \right)\right\|_F \\
    & \leq \Maf \left\| \left( \prod_{k=\ell+1}^L W_k G_k(x) \right) e_p \right\| _2  \leq \Maf \left( \prod_{k=\ell+1}^L \| W_k\| \| G_k(x) \| \right)
     \leq \Maf^{L- \ell + 1} \prod_{k=\ell+1}^L \| W_k \|.
\end{align*}
\end{proof}

\noindent
Having these preliminary bounds, next we can  bound the error $\textN{\widehat M - M^{\ast}}_F$.
\begin{lem} \label{lem:bound_diff}
Consider a neural network $f : \bbR^D\rightarrow \bbR^{m_L}$ as in Definition \ref{dfn:feedforward_networks} with at least $3$-times differentiable activation function
$g$ and assume there exists $\kappa > 0$ satisfying the derivative bound \eqref{eq:assumption_kappa_der_act}. Let $X \sim \mu_X$
be a sub-Gaussian distribution with mean $\bbE[X] = x^{\ast}$ and let $X_1,\ldots,X_{\numsamples}$ be independent copies of $X$. There exists a uniform
constant $C$ so that with probability at least $1-\exp(-C\numsamples)$, the matrices $\widehat M$, $M^{\ast}$,
and $M$ as defined in \eqref{eq:def_auxiliary_matrix_0}, \eqref{eq:def_auxiliary_matrix_1}, \eqref{eq:def_M_matrix} satisfy
\begin{align*}
\|M - M^\ast \|_F &\leq \sqrt{\numsamples m_L}\bar{C}\sqrt{D}\N{X-x^{\ast}}_{\psi_2},\\
 \|\widehat{M} - M^\ast \|_F &\leq \sqrt{\numsamples m_L}\left(\hat{C}_\epsilon + \bar{C}\sqrt{D}\N{X-x^{\ast}}_{\psi_2}\right),
\end{align*}
where the constants $\hat{C}_\epsilon$ and $\bar{C}$ are as in Theorem \ref{thm:approx_bound}.%
\end{lem}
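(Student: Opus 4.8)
\textbf{Proof plan for Lemma \ref{lem:bound_diff}.}

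The plan is to reduce everything to a deterministic pointwise estimate on $\|P_{\CW^\perp}\nabla^2 f_p(X)\|_F$ together with a trivial bound on the finite-difference discrepancy, and then to promote the pointwise estimate to a Frobenius-norm bound on the full data matrices by summing over columns and invoking a concentration inequality. First I would record the column structure: the $(p,i)$-th column of $M - M^\ast$ is $\opvec((\Id - P_{\CW})\nabla^2 f_p(X_i)) = \opvec(P_{\CW^\perp}\nabla^2 f_p(X_i))$, while the corresponding column of $\widehat M - M$ is $\opvec(\Delta_\epsilon^2 f_p(X_i) - \nabla^2 f_p(X_i))$. Hence $\|\widehat M - M\|_F^2 = \sum_{p,i}\|\Delta_\epsilon^2 f_p(X_i) - \nabla^2 f_p(X_i)\|_F^2 \leq N_H m_L \hat C_\epsilon^2$ directly from the definition of $\hat C_\epsilon$, while $\|M - M^\ast\|_F^2 = \sum_{p,i}\|P_{\CW^\perp}\nabla^2 f_p(X_i)\|_F^2$. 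The triangle inequality $\|\widehat M - M^\ast\|_F \leq \|\widehat M - M\|_F + \|M - M^\ast\|_F$ then reduces the second claim to the first.

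The heart of the argument is the pointwise bound $\|P_{\CW^\perp}\nabla^2 f_p(X)\|_F \leq \bar C\|X - x^\ast\|$ for every $X \in \supp(\mu_X)$. Using the Hessian decomposition of Proposition \ref{prop:nn_derivatives}, the crucial observation is that for any fixed diagonal coefficients the matrix $\sum_{\ell=1}^L V_\ell(x^\ast) S_p^{[\ell]}(X) V_\ell(x^\ast)^\top$ already lies in $\CW$, since each summand expands as $\sum_i (S_p^{[\ell]}(X))_{ii}\, v_i^{[\ell]}(x^\ast)\otimes v_i^{[\ell]}(x^\ast)$. Subtracting this element before projecting, and using that $P_{\CW^\perp}$ is a contraction, I would obtain $\|P_{\CW^\perp}\nabla^2 f_p(X)\|_F \leq \sum_{\ell=1}^L \|V_\ell(X) S_p^{[\ell]}(X) V_\ell(X)^\top - V_\ell(x^\ast) S_p^{[\ell]}(X) V_\ell(x^\ast)^\top\|_F$. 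The point of keeping the coefficient matrix $S_p^{[\ell]}(X)$ identical on both sides is that no Lipschitz estimate for $S_p^{[\ell]}$ is needed; the $\ell=1$ term drops out entirely because $V_1(X) = V_1(x^\ast) = W_1$; and each remaining summand telescopes as $(V_\ell(X) - V_\ell(x^\ast)) S_p^{[\ell]}(X) V_\ell(X)^\top + V_\ell(x^\ast) S_p^{[\ell]}(X)(V_\ell(X) - V_\ell(x^\ast))^\top$. Bounding each factor via $\|ABC\|_F \leq \|A\|\|B\|_F\|C\|$ and inserting the three estimates of Lemma \ref{lem:aux_bounds}, the powers of $\kappa$ and the products of $\|W_k\|$ combine (since $\kappa^{\ell-1}\kappa^{L-\ell+1} = \kappa^L$ and $\prod_{k\leq\ell}\|W_k\|\prod_{k>\ell}\|W_k\| = \prod_{k=1}^L\|W_k\|$) to reproduce exactly the constant $\bar C$ after summing over $\ell \in \{2,\ldots,L\}$ and absorbing the factor $2$ into the universal constant.

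With the pointwise bound in hand, $\|M - M^\ast\|_F^2 \leq \bar C^2 m_L \sum_{i=1}^{N_H}\|X_i - x^\ast\|^2$, so it remains to control $\sum_i \|X_i - x^\ast\|^2$. Since $X - x^\ast$ is sub-Gaussian, $\|X_i - x^\ast\|$ has sub-Gaussian norm of order $\sqrt{D}\|X - x^\ast\|_{\psi_2}$, hence $\|X_i - x^\ast\|^2$ is sub-exponential with $\psi_1$-norm and expectation both of order $D\|X - x^\ast\|_{\psi_2}^2$. A Bernstein inequality for the independent sub-exponential summands then gives $\sum_{i=1}^{N_H}\|X_i - x^\ast\|^2 \leq C N_H D\|X - x^\ast\|_{\psi_2}^2$ with probability at least $1 - \exp(-cN_H)$, from which $\|M - M^\ast\|_F \leq \sqrt{N_H m_L}\,\bar C\sqrt{D}\|X - x^\ast\|_{\psi_2}$ follows after rescaling constants; combining with the deterministic finite-difference bound yields the second inequality on the same event. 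The only genuinely delicate step is the pointwise estimate — specifically the choice of the in-$\CW$ comparison matrix with matching coefficients, which is what lets me avoid a Lipschitz bound on $S_p^{[\ell]}$ and keeps the resulting constant aligned with $\bar C$; the concentration step is then routine.
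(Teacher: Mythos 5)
Your proposal is correct and takes essentially the same route as the paper's own proof: the identical triangle-inequality split and trivial bound $\|\widehat M - M\|_F \leq \sqrt{N_H m_L}\,\hat C_\epsilon$, the identical choice of in-$\CW$ comparison matrix $V_\ell(x^\ast)S_p^{[\ell]}(X_i)V_\ell(x^\ast)^\top$ with coefficients frozen at $X_i$ (so that no Lipschitz bound on $S_p^{[\ell]}$ is needed and the $\ell=1$ term vanishes), the same telescoping estimate combined with the three bounds of Lemma \ref{lem:aux_bounds} to produce $\bar C$, and the same sub-Gaussian-to-sub-exponential Bernstein argument for $\sum_i \|X_i - x^\ast\|^2$. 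No gaps; the constant bookkeeping you describe matches the paper's computation exactly.
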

\begin{proof}
As described in the proof sketch, we first use the triangle inequality to get  $\| \widehat{M} - M^\ast \|_F \leq \| \hat{M} - M \|_F + \| M - M^\ast \|_F$
and from the definition of the Frobenius norm, we immediately get
$\| \widehat{M} - M \|_F \leq \sqrt{m_L\numsamples} \hat{C}_\epsilon$. Hence, we can
focus on $ \| M - M^\ast \|_F$ in the remainder of the proof.
Using the definition of the Frobenius norm and the Hessian formulation in Proposition
\ref{prop:nn_derivatives} we first obtain
\begin{align*}
 \| M - M^\ast \|_F^2 & = \sum_{p=1}^{m_L} \sum_{i=1}^{\numsamples} \| \nabla^2 f_p(X_i) - P_{\auxspace}\nabla^2 f_p(X_i) \|_F^2 \\
 & \leq \sum_{p=1}^{m_L} \sum_{i=1}^{\numsamples} \left( \sum_{\ell=2}^L \left\| V_\ell(X_i)  S_p^{\lind{\ell}}(X_i) V_\ell(X_i)^\top
      - P_{\auxspace}\left(  V_\ell(X_i) S_p^{\lind{\ell}}(X_i) V_\ell(X_i)^\top \right) \right\|_F \right)^2,
\end{align*}
where we can forget the term $\ell = 1$ in the sum because of $P_{\CW^\perp }(V_1(X_i)) = P_{\CW^\perp}(W_1) = 0$. Then,
by the minimizing property of the orthogonal projection, we can replace $P_{\auxspace}(V_\ell(X_i) S_p^{\lind{\ell}}(X_i) V_\ell(X_i)^\top)$
by an arbitary matrix in $\auxspace$ as the difference will  only increase. Specifically, we can choose the matrix
$V_\ell(x^\ast) S_p^{\lind{\ell}}(X_i) V_\ell(x^\ast)^\top\in \CW$,
and then decompose the error to get for all $i \in [\numsamples]$
\begin{align*}
& \left\| V_\ell(X_i)  S_p^{\lind{\ell}}(X_i) V_\ell(X_i)^\top
     - P_{\auxspace}\left(  V_\ell(X_i) S_p^{\lind{\ell}}(X_i) V_\ell(X_i)^\top \right) \right\|_F\\
&\quad\leq \left\| V_\ell(X_i)  S_p^{\lind{\ell}}(X_i) V_\ell(X_i)^\top
          - V_\ell(x^\ast) S_p^{\lind{\ell}}(X_i) V_\ell(x^\ast)^\top  \right\|_F\\
&\quad \leq \left\| V_\ell(X_i) - V_\ell(x^\ast) \right\| \|   S_p^{\lind{\ell}}(X_i)  \|_F
 (\|V_\ell(X_i) \| + \| V_\ell(x^\ast)  \|)\\
&\quad \leq 2\kappa^L  \left(\prod_{k=1}^{L} \| W_k \|\right)\left(\prod_{k=1}^{\ell} \| W_k \|\right)\sum_{k=1}^{\ell-1}\kappa^{\ell+k-2}\N{W_k}_{2\rightarrow \infty} \left(\prod_{j=1}^{k-1} \| W_k \|\right) \N{X_i-x^{\ast}}.
\end{align*}
Here, the last step follows from the bounds derived in Lemma \ref{lem:bound_diff}.
Summing the leading factor from $\ell = 2$ to $L$ gives precisely $\bar C$ (up to a universal constant) so that we obtain in total
\begin{align*}
\| M - M^\ast \|_F^2 \leq \bar C^2 m_L \sum_{i=1}^{\numsamples}\N{X_i - x^{\ast}}^2.
\end{align*}
Note now that the random variable $Z_i := X_i - x^{\ast}$ is sub-Gaussian (since $X\sim \mu_X$ is sub-Gaussian).
It is then straight-forward to deduce that $\N{Z_i}$ is sub-Gaussian with sub-Gaussian
norm $\sqrt{D}\N{Z_i}_{\psi_2}$, and that $\N{Z_i}^2$ is sub-exponential
with sub-exponential norm given by $D\N{Z_i}_{\psi_2}^2$. Using elementary properties of the
subexponential norm, see, e.g., \cite[Proposition 2.7.1]{HDP18}, it follows that
\begin{align*}
\bbE \sum_{i=1}^{\numsamples}\N{Z_i}^2 = \sum_{i=1}^{\numsamples}\bbE \N{Z_i}^2 \leq C\numsamples D\N{Z_i}_{\psi_2}^2
\end{align*}
for some universal constant $C$. Furthermore, using Bernstein's inequality \cite[Theorem 2.8.1]{HDP18}
with $t=C\numsamples \textN{\N{Z_i}^2}_{\psi_1} = C\numsamples D \N{Z_i}_{\psi_2}^2$, we get the
concentration result
\begin{align*}
\sum_{i=1}^{\numsamples}\N{Z_i}^2 \leq C\numsamples D\N{Z_i}_{\psi_2}^2 + \SN{\sum_{i=1}^{\numsamples}\N{Z_i}^2 - \bbE \N{Z_i}^2}
\leq 2C \numsamples D\N{Z_i}_{\psi_2}^2
\end{align*}
with probability at least $1-2\exp\left(-C'\numsamples\right)$ for some other universal constant $C' > 0$.
Combining this with the bound on $\| M - M^\ast \|_F^2$ yields on the same probability event
\begin{align*}
\| M - M^\ast\|_F^2 \leq \bar C^2 m_L \numsamples D\N{Z_i}_{\psi_2}^2.
\end{align*}
\end{proof}
\noindent
Before completing the proof of Theorem \ref{thm:approx_bound}, we derive the lower bound
for $\sigma_{\bar m}(\widehat M)$.
\begin{lem} \label{lem:bound_min_sval}
Assume the setting of Lemma \ref{lem:bound_diff} and the learnability condition \eqref{eq:richness_assumption_auxiliary}.
 Then there exists a universal constant $C > 0$ so that
 \begin{equation*}
\sigma_{\bar{m}}(M)\geq \sqrt{\frac{\alpha^\ast N_H}{2}}\quad \textrm{ and }\quad     \sigma_{\bar{m}}(\widehat{M}) \geq \sqrt{\numsamples}\left( \sqrt{\frac{\alpha^\ast}{2} - m_L\hat{C}_\epsilon^2}\right)
 \end{equation*}
 with probability at least
 \begin{equation*}
  1 - \bar{m}\exp\left(-C\frac{\numsamples\alpha^\ast}{m_L \Maf^{2L} \left( \prod_{\ell=1}^L \| W_\ell \| \right)^2
      \left( \sum_{\ell=1}^L  \Maf^{\ell-1} \prod_{k=1}^\ell \| W_k \| \right)^2 }\right)
 \end{equation*}
\end{lem}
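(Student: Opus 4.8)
The plan is to first establish the bound for the exact matrix $M$ and then transfer it to the finite-difference matrix $\widehat M$. Since $\sigma_{\bar m}(M)^2=\lambda_{\bar m}(MM^\top)$, I would start from the identity
\[
\bbE[MM^\top]=\numsamples\,\Sigma,\qquad \Sigma:=\sum_{p=1}^{m_L}\int \opvec(\nabla^2 f_p(X))\otimes\opvec(\nabla^2 f_p(X))\,\mathrm{d}\mu_X,
\]
so that the learnability assumption \eqref{eq:richness_assumption_auxiliary} is exactly $\sigma_{\bar m}(\Sigma)=\alpha^\ast$. Writing $MM^\top=\sum_{i=1}^{\numsamples}Y_i$ with i.i.d.\ positive semidefinite summands $Y_i:=\sum_{p=1}^{m_L}\opvec(\nabla^2 f_p(X_i))\opvec(\nabla^2 f_p(X_i))^\top$ and $\bbE[Y_i]=\Sigma$, the target becomes $\lambda_{\bar m}(\sum_i Y_i)\ge \numsamples\alpha^\ast/2$. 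Because only the top $\bar m$ eigenvalues matter, I would fix an orthonormal $U\in\bbR^{D^2\times\bar m}$ spanning the leading $\bar m$-dimensional eigenspace of $\Sigma$ and invoke Courant--Fischer to reduce to the compressed matrix: $\lambda_{\bar m}(MM^\top)\ge\lambda_{\min}(U^\top MM^\top U)=\lambda_{\min}(\sum_i U^\top Y_iU)$, whose mean $\numsamples\,U^\top\Sigma U$ has minimal eigenvalue exactly $\numsamples\alpha^\ast$. This compression is what replaces the ambient dimension $D^2$ by the intrinsic dimension $\bar m$ in the final probability.

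The concentration is then a lower-tail matrix Chernoff inequality applied to the $\bar m\times\bar m$ sum $\sum_i U^\top Y_iU$. Its scale is the uniform per-sample bound $\lambda_{\max}(U^\top Y_iU)\le\N{Y_i}\le\sum_{p=1}^{m_L}\N{\nabla^2 f_p(X_i)}_F^2$, which Lemma~\ref{lem:aux_bounds} controls deterministically: from $\N{\nabla^2 f_p(x)}_F\le\sum_{\ell=1}^{L}\N{V_\ell(x)}^2\N{S_p^{\lind{\ell}}(x)}_F$ together with \eqref{eq:subspace_approx:bound_V} and \eqref{eq:subspace_approx:bound_S} one gets $\N{\nabla^2 f_p(x)}_F\le \kappa^{L}\big(\prod_{k=1}^L\N{W_k}\big)\sum_{\ell=1}^L\kappa^{\ell-1}\prod_{k=1}^{\ell}\N{W_k}$, so $\N{Y_i}$ is at most $m_L$ times the square of this quantity, which is precisely the denominator in the stated failure probability. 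Taking deviation $\theta=\tfrac12$ in matrix Chernoff (e.g.\ \cite{HDP18}) then yields $\lambda_{\min}(\sum_i U^\top Y_iU)\ge\numsamples\alpha^\ast/2$, hence $\sigma_{\bar m}(M)\ge\sqrt{\numsamples\alpha^\ast/2}$, on an event of probability at least $1-\bar m\exp(-C\numsamples\alpha^\ast/(m_L\kappa^{2L}(\prod_{\ell}\N{W_\ell})^2(\sum_{\ell}\kappa^{\ell-1}\prod_{k}\N{W_k})^2))$, matching the claim.

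For the second bound I would transfer the estimate to $\widehat M$ using that the finite-difference error is deterministic: by definition of $\hat{C}_\epsilon$ every column of $\widehat M-M$ has Frobenius norm at most $\hat{C}_\epsilon$, whence $\N{\widehat M-M}\le\N{\widehat M-M}_F\le\sqrt{\numsamples m_L}\,\hat{C}_\epsilon$, the same cheap estimate already isolated at the start of the proof of Lemma~\ref{lem:bound_diff}. Restricting once more to the fixed $\bar m$-dimensional compression subspace and comparing the quadratic forms $v^\top\widehat M\widehat M^\top v$ and $v^\top MM^\top v$, the goal is to have the aggregate finite-difference energy $\numsamples m_L\hat{C}_\epsilon^2$ enter only as an additive correction to $\lambda_{\bar m}(MM^\top)$, producing $\lambda_{\bar m}(\widehat M\widehat M^\top)\ge\numsamples(\alpha^\ast/2-m_L\hat{C}_\epsilon^2)$ and therefore the second displayed inequality on the same probability event.

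The step that needs the most care is this last transfer. A plain Weyl inequality for singular values only gives the subtractive form $\sigma_{\bar m}(\widehat M)\ge\sqrt{\numsamples}(\sqrt{\alpha^\ast/2}-\sqrt{m_L}\,\hat{C}_\epsilon)$, which is weaker than the advertised $\sqrt{\numsamples}\sqrt{\alpha^\ast/2-m_L\hat{C}_\epsilon^2}$; reaching the stated expression requires working at the level of the restricted quadratic forms and tracking how the $\theta=\tfrac12$ loss in the Chernoff step combines with the finite-difference correction, rather than perturbing singular values one at a time. Everything else --- Courant--Fischer, the deterministic norm bounds of Lemma~\ref{lem:aux_bounds}, and the matrix Chernoff tail --- is routine, so the proof is essentially an exercise in assembling these pieces with the correct constants.
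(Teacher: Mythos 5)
Your proof of the first inequality is correct and coincides with the paper's own argument: where you compress $MM^\top=\sum_i Y_i$ onto the leading $\bar m$-dimensional eigenspace of $\bbE Y_1$ via Courant--Fischer and then apply the standard minimum-eigenvalue matrix Chernoff bound to the resulting $\bar m\times\bar m$ sum, the paper simply cites the interior-eigenvalue Chernoff bound of Gittens and Tropp \cite[Theorem~4.1]{tropp_gittens}, whose proof is exactly this compression argument. The prefactor $\bar m$, the choice $t=1/2$, and the deterministic scale $m_L\kappa^{2L}\bigl(\prod_{\ell=1}^L\|W_\ell\|\bigr)^2\bigl(\sum_{\ell=1}^L\kappa^{\ell-1}\prod_{k=1}^\ell\|W_k\|\bigr)^2$ obtained from Lemma~\ref{lem:aux_bounds} all come out the same as in the paper.

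The genuine gap is the second inequality, and you essentially concede it: you state $\lambda_{\bar m}(\widehat M\widehat M^\top)\ge N_H(\alpha^\ast/2-m_L\hat C_\epsilon^2)$ as a ``goal'', observe that singular-value Weyl yields only the weaker subtractive form, and defer to ``working at the level of the restricted quadratic forms'' without exhibiting the mechanism. That idea does not close the gap by itself: writing $E=\widehat M-M$, for any unit vector $v$ one has $\|\widehat M^\top v\|_2^2=\|M^\top v\|_2^2+2\langle M^\top v,E^\top v\rangle+\|E^\top v\|_2^2$, and the cross term is first order in $E$, hence not controlled by the aggregate energy $N_Hm_L\hat C_\epsilon^2$ alone, no matter which fixed $\bar m$-dimensional subspace you restrict to. For comparison, the paper dispatches this step in one line by applying Weyl to the Gram matrices, $\sigma_{\bar m}(\widehat M\widehat M^\top)\ge\sigma_{\bar m}(MM^\top)-\|MM^\top-\widehat M\widehat M^\top\|$, together with the claim $\|MM^\top-\widehat M\widehat M^\top\|\le\|M-\widehat M\|_F^2\le N_Hm_L\hat C_\epsilon^2$. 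You should be aware that this claim discards exactly the cross terms you are worried about, since $MM^\top-\widehat M\widehat M^\top=-(ME^\top+EM^\top+EE^\top)$; so your instinct that this transfer is the delicate step is well founded, but flagging the difficulty is not resolving it, and as submitted your proposal establishes only the first displayed bound together with the subtractive --- not the stated --- version of the second.
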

\begin{proof}
First note $\sigma_{\bar{m}}(\widehat M) = \sqrt{\sigma_{\bar{m}}(\widehat M\widehat M^\top)}$ and that  Weyl's inequality immediately implies
\begin{align*}
\sigma_{\bar{m}}(\widehat{M}\widehat{M}^\top) \geq \sigma_{\bar{m}}(MM^\top) - \| MM^\top - \widehat{M}\widehat{M}^\top \| \geq \sigma_{\bar{m}}(M M^\top) - m_L\numsamples\hat{C}_\epsilon^2,
\end{align*}
where we used $\| MM^\top - \widehat{M}\widehat{M}^\top \|\leq \| M - \widehat{M} \|_F^2$. Recalling the definition of $M$, we further note
 \begin{align*}
\sigma_{\bar{m}}(MM^\top ) = \sigma_{\bar{m}}\left(\sum_{i=1}^{\numsamples} \sum_{p=1}^{m_L} \opvec(\nabla^2 f_p(X_i)) \otimes \opvec(\nabla^2 f_p(X_i))\right),
 \end{align*}
 which means $\sigma_{\bar{m}}(MM^\top)$ is the $\bar{m}$-th eigenvalue of a sum of $\numsamples$ independent and identically distributed
 random matrices. Taking into account $ \sigma_{\bar{m}}(\bbE MM^\top) = \numsamples \alpha^\ast$
 by assumption \eqref{eq:richness_assumption_auxiliary},
 we can use a matrix Chernoff bound \cite[Theorem~4.1]{tropp_gittens} to deduce that for all $t \in [0,1]$ we have
 \begin{equation*}
\bbP\left(\sigma_{\bar{m}}(MM^\top)
   \geq t\numsamples \alpha^\ast\right) \geq 1 - \bar{m}\exp\left(\frac{-(1-t)^2 \numsamples \alpha^\ast}
   {2\max_{x \in \supp(\mu_X)} \| \sum_{p=1}^{m_L} \opvec(\nabla^2 f_{p}(x)) \otimes \opvec(\nabla^2 f_{p}(x)) \| }
   \right).
 \end{equation*}
 In order to make the last expression more explicit, we can use the Hessian formulation in Proposition  \ref{prop:nn_derivatives},
the general inequality $\N{AB}_F\leq \N{A}_F\N{B}$, and the bounds in Lemma  \ref{lem:aux_bounds} to get
 \begin{align*}
  \| \nabla^2 f_{p}(x) \|_F & \leq \sum_{\ell=1}^L \| V_\ell(x)  S^{\lind{\ell}}_{p}(x) V_\ell(x)^\top \|_F  \leq \sum_{\ell=1}^L \|V_\ell(x) \|^2 \| S^{\lind{\ell}}_{p}(x) \|_F \\
    & \leq \sum_{\ell=1}^{L}\left(\Maf^{\ell-1} \prod_{k=1}^\ell \| W_k \|\right)^2 \Maf^{L-\ell + 1}  \prod_{k=\ell+1}^L \| W_k \|
    \leq \kappa^L \prod_{k=1}^L \| W_k \|\sum_{\ell=1}^{L}\Maf^{\ell-1} \prod_{k=1}^\ell \| W_k \|,
 \end{align*}
 universally for all $p \in [m_L]$ and $x \in \supp(\mu_X)$.
 Using $\max_{x \in \supp(\mu_X)} \| \sum_{p=1}^{m_L} \opvec(\nabla^2 f_{p}(x)) \otimes \opvec(\nabla^2 f_{p}(x)) \|\leq m_L \| \nabla^2 f_{p}(x) \|_F^2 $
 and $t = \frac{1}{2}$ yields the result.
\end{proof}

\noindent
We can now give the proof of Proposition \ref{prop:approx_bound}.
\begin{proof}[Proof of Proposition \ref{prop:approx_bound}]
 Let $\widehat{U}\widehat{\Sigma}\widehat{V}^\top $ and $U^\ast\Sigma^\ast{V^\ast}^\top $ be the singular value decompositions
 of $\widehat{M}$ and $M^\ast$, respectively, and let $\widehat{U}_1$, $U^\ast_1$ be the matrices that contain the first
 $\bar{m}$ columns of $\widehat{U}$, $U^\ast$, respectively. From Wedin's bound \cite{stewart1991perturbation,wedin1972perturbation}
 we then have
 \begin{equation*}
  \| P_{\widehat{\auxspace}} - P_{\auxspace^\ast} \|_F \leq \frac{2\|\widehat{M}-M^\ast\|_F}{\beta}
 \end{equation*}
 for any $\beta>0$ that satisfies
 \begin{align*}
  \beta & \leq \min_{\substack{1 \leq i \leq \bar{m}\\ \bar{m} < j}} |\sigma_i(\widehat{M}) - \sigma_j(M^\ast)|\qquad \textrm{ and }\qquad \beta  \leq \min_{1 \leq i \leq \bar{m}} \sigma_i(\widehat{M}).
 \end{align*}
 Since $\dim(\CW^\ast) = \dim(\range(M^\ast)) = \bar{m}$ we have $\sigma_j(M^\ast) = 0$ for all $j > \bar m$ and thus the two constraints for $\beta$ are actually equivalent.
 We may use $\beta=\sigma_{\bar{m}}(\widehat{M})$.  Then applying the union bound and using Lemma \ref{lem:bound_diff}  and Lemma \ref{lem:bound_min_sval}
leads to the desired result.
\end{proof}

\subsection{Proof of Theorem \ref{thm:approx_bound}}
\label{subsec:proof_main_theorem_subspace}
We show that Theorem \ref{thm:approx_bound} is implied by Proposition \ref{prop:approx_bound}
under the assumptions made in Theorem \ref{thm:approx_bound} by proving
$\bar m = \dim(\range(M^\ast)) = m$, or, $\CW = \CW^\ast$.

First, note that by Lemma \ref{lem:bound_diff}
we still have $\|M - M^\ast \|_F \leq \bar{C}\sqrt{\numsamples m_LD}\N{X-x^{\ast}}_{\psi_2}$
with the probability as described in the lemma. Furthermore, following the proof of Lemma \ref{lem:bound_min_sval},
we get $\sigma_{m}(M)\geq \sqrt{\alpha N_H/2}$ with the probability described in
Lemma \ref{lem:bound_min_sval} after replacing $\alpha^\ast$ by $\alpha = \sigma_m(\bbE MM^\top)$.
Then, using Weyl's eigenvalue bound we have
\begin{align*}
\sigma_m(M^\ast {M^\ast}^\top) &\geq \sigma_m(MM^\top) -\N{M^\ast {M^\ast}^\top - M M^\top}_2 \geq N_H\frac{\alpha}{2} - \N{M^\ast - M}_F^2\\
&\geq N_H\left(\frac{\alpha}{2}-\bar C^2 m_L D\N{X-x^\ast}_{\psi_2}^2\right).
\end{align*}
Using the lower bound $\alpha > 2\bar C^2 m_L D\N{X-x^\ast}_{\psi_2}^2$, the right hand side is strictly
positive and thus $\sigma_m(M^\ast) > 0$, respectively, $\bar m =  m$.
Lastly, we note that the events giving $\|M - M^\ast \|_F \leq \bar{C}\sqrt{\numsamples m_LD}\N{X-x^{\ast}}_{\psi_2}$
and $\sigma_{m}(M)\geq \sqrt{\alpha N_H/2}$ are the same events that are required for Proposition \ref{prop:approx_bound},
implying that Theorem \ref{thm:approx_bound} holds with the same probability as Proposition \ref{prop:approx_bound}.
\qed

\section{Guarantees about entangled weight recovery}
\label{sec:rank_one_recov}
Recall that the outer products of entangled network weights $v_{i}^{\lind{\ell}}(x^\ast)$
are the spanning elements of the matrix subspace $\CW \subseteq \textrm{Sym}(\bbR^{D\times D})$ as defined in \eqref{eq:approx_subspace_W}.
In this section we show how a subspace approximation $\widehat \CW \approx \CW$
can be used to approximately recover the spanning elements of $\CW$, or the entangled network weights at $x^\ast$,
by searching for local maximizers of
\begin{equation}
\label{eq:opt_functional_sec_5}
\max_{\N{u} = 1} \Phi_{\widehat \CW}(u) := \N{P_{\widehat \CW}(u\otimes u)}_F^2.
\end{equation}

\begin{rem}
\label{rem:subspace_notation}
While we can think of $\CW$ and $\widehat \CW$ as being the subspaces introduced in Section \ref{sec:algorithm},
the following results actually hold for any $\CW \subseteq \textrm{Sym}(\bbR^{D\times D})$, which is spanned by $K$
rank-one matrices $\{w_i\otimes w_i : i \in [K]\}$, and any small perturbation $\widehat \CW \approx \CW$
with $\widehat \CW \subseteq \textrm{Sym}(\bbR^{D\times D})$. In other words, specific properties of the entangled weights $v_{i}^{\lind{\ell}}(x^\ast)$
or the subspace approximation are not used in what follows. Furthermore, some results hold for an arbitrary subspace of $\textrm{Sym}(\bbR^{D\times D})$,
in which case we use the notation $\widetilde \CW\subseteq \textrm{Sym}(\bbR^{D\times D})$.
\end{rem}

We begin by noting that a generalized version of \eqref{eq:opt_functional_sec_5} has been recently analyzed
in the context of symmetric tensor decompositions \cite{kileel2019subspace}. The author's main result,
applied to our case, shows that iteration \eqref{eq:weight_recovery_iteration} converges almost surely to a local
maximizer and that global maximizers correspond to spanning rank-one elements under fairly mild conditions.

\begin{thm}[{\cite[Theorem 5.1]{kileel2019subspace}}]
\label{thm:citing_joe_et_al}
Let $\widetilde \CW \subseteq \textrm{Sym}(\bbR^{D\times D})$. Let
$\gamma > 0$ be a fixed parameter such that $\Phi_{\widetilde\CW}(u) + \frac{\gamma}{4}\N{u}_2^2$
is strictly convex on $\bbR^{D}$ (e.g. $\gamma \geq \frac{1}{2}$ works)
and consider the iteration
\begin{align}
\label{eq:iteration_in_theorem}
u_j = P_{\bbS^{D-1}}(u_{j-1} + 2\gamma P_{\widetilde\CW}(u_{j-1}\otimes u_{j-1})u_{j-1}),\qquad j \in \bbN_{>0}.
\end{align}
\begin{enumerate}[label = \textbf{R\arabic*}]
  \item \label{enum:R1} For any $u_0 \in \bbS^{D-1}$ the iteration \eqref{eq:iteration_in_theorem} is well defined and converges monotonically
  to a constrained stationary point of $\Phi_{\widetilde\CW}$ at a power rate.
  \item \label{enum:R2} For a full Lebesgue measure subset of initializations $u_0 \in \bbS^{D-1}$, \eqref{eq:iteration_in_theorem}
  converges to a constrained local maximizer of $\Phi_{\widetilde\CW}$.
  \item \label{enum:R3} If  $\widetilde\CW= \CW = \Span{w_i \otimes w_i : i \in [K]}$ for $K < \frac{1}{2}(D-1)D$ and $w_1,\ldots,w_K$ are sampled from any absolutely continuous probability
  distribution on $\bbS^{D-1}$, constrained global maximizers of $\Phi_{\widetilde\CW}$ are precisely $\pm w_i$. Moreover,
  each $\pm w_i$ is exponentially attractive, which means that initializations $u_0 \in \bbS^{D-1}$ sufficiently
  close to $\pm w_i$ converge to $\pm w_i$ with an exponential rate.
\end{enumerate}
\end{thm}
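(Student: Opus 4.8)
The plan is to first record the variational structure of the objective and then treat \ref{enum:R1}--\ref{enum:R3} in turn. Writing $P_{\widetilde\CW}(u\otimes u)=\sum_k (u^\top B_k u)B_k$ for a Frobenius-orthonormal basis $\{B_k\}$ of $\widetilde\CW$, one gets $\Phi_{\widetilde\CW}(u)=\sum_k (u^\top B_k u)^2$ and, using that $P_{\widetilde\CW}$ is a self-adjoint idempotent, $\nabla\Phi_{\widetilde\CW}(u)=4\,P_{\widetilde\CW}(u\otimes u)\,u$. Thus \eqref{eq:iteration_in_theorem} is exactly a projected gradient-ascent/power step $u_j=P_{\bbS^{D-1}}\!\big(u_{j-1}+\tfrac{\gamma}{2}\nabla\Phi_{\widetilde\CW}(u_{j-1})\big)$ up to the normalization absorbed in $\gamma$. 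For \ref{enum:R1} I would run a majorize--maximize argument built on the hypothesis that $h(u):=\Phi_{\widetilde\CW}(u)+\tfrac{\gamma}{4}\N{u}_2^2$ is convex: the tangent minorant $h(v)\ge h(u_{j-1})+\langle\nabla h(u_{j-1}),v-u_{j-1}\rangle$, restricted to the sphere where the quadratic term is constant, is a linear-in-$v$ minorant of $\Phi_{\widetilde\CW}$ whose maximizer over $\bbS^{D-1}$ is (for the step-size tied to $\gamma$) exactly $u_j$. This forces monotone ascent $\Phi_{\widetilde\CW}(u_j)\ge\Phi_{\widetilde\CW}(u_{j-1})$; since $\Phi_{\widetilde\CW}(u)\le\N{u\otimes u}_F^2=1$ is bounded, the values converge, and strong convexity of the minorant converts the value-gap into control of $\N{u_j-u_{j-1}}$, so $\N{u_j-u_{j-1}}\to0$. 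To upgrade subsequential to full convergence at an algebraic (``power'') rate I would invoke the Kurdyka--{\L}ojasiewicz inequality, available because $\Phi_{\widetilde\CW}$ is a polynomial, hence real-analytic and semialgebraic.

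For \ref{enum:R2} I would view the update as a self-map $T:\bbS^{D-1}\to\bbS^{D-1}$, which is well defined and $C^1$ once $\gamma$ is large enough that the argument $u+2\gamma P_{\widetilde\CW}(u\otimes u)u$ never vanishes. The key structural fact is that every constrained stationary point of $\Phi_{\widetilde\CW}$ that is not a local maximizer is a \emph{strict saddle}, i.e. the Riemannian Hessian has a strictly positive eigenvalue; at such a point $DT$ has spectral radius exceeding one, so the point is linearly unstable for the iteration. A center--stable manifold theorem (the mechanism behind ``gradient descent converges only to minimizers'') then forces the set of initializations whose orbit converges to some strict saddle to have Lebesgue measure zero. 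Combined with \ref{enum:R1} --- every orbit converges to a stationary point --- this yields convergence to a local maximizer from a full-measure set of $u_0$.

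For \ref{enum:R3} the forward direction is immediate: since $w_i\otimes w_i\in\CW$ and $w_i\in\bbS^{D-1}$, we have $\Phi_{\CW}(w_i)=\N{w_i\otimes w_i}_F^2=1$, which equals the global maximum because $\Phi_{\CW}(u)\le\N{u\otimes u}_F^2=1$; hence each $\pm w_i$ is a global maximizer. The substance is the converse: a unit-norm global maximizer $u$ obeys $\Phi_{\CW}(u)=1$, equivalently $u\otimes u\in\CW$, so I must show the only rank-one symmetric matrices in $\CW=\Span{w_i\otimes w_i}$ are the spanning ones $\pm w_i\otimes w_i$. This is a genericity statement: rank-one symmetric matrices form the affine cone over the Veronese variety inside $\mathrm{Sym}(\bbR^{D\times D})\cong\bbR^{\binom{D+1}{2}}$, and for $w_1,\dots,w_K$ drawn from an absolutely continuous law with $K<\tfrac12(D-1)D$, a dimension count shows that almost surely a generic $K$-plane spanned by $K$ Veronese points meets the Veronese only at those points. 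Exponential attractivity would then follow by linearizing $T$ at $w_i$ and checking that $DT(w_i)$ restricted to the tangent space $T_{w_i}\bbS^{D-1}$ is a strict contraction, with the off-diagonal coupling controlled by the (almost-sure) incoherence of the $w_i$.

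I expect the genuine obstacle to be the converse in \ref{enum:R3}: turning ``generic linear sections of the Veronese contain no spurious rank-one points'' into a rigorous almost-sure statement needs algebraic-geometric input (apolarity/catalecticant arguments, or a transversality computation) rather than the soft analysis used for \ref{enum:R1}--\ref{enum:R2}, and the sharp threshold $K<\tfrac12(D-1)D$ has to emerge from that count. A secondary technical point is establishing the strict-saddle property with a uniform spectral gap, since the same estimate underlies both the measure-zero conclusion of \ref{enum:R2} and the contraction bound for attractivity in \ref{enum:R3}.
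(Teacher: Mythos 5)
Your proposal sets out to reprove the theorem from scratch, which is not what the paper does: the statement is quoted from \cite{kileel2019subspace}, and the paper's entire proof is the one-line verification that $\Phi_{\widetilde\CW}(u)=\sum_{i}(u^\top W_i u)^2$ for any basis $\{W_i\}$ of $\widetilde\CW$, hence a homogeneous quartic of exactly the form treated in \cite[Section 5]{kileel2019subspace}, so that the cited results apply verbatim. Measured on its own terms, your plan is a reconstruction of the cited work's machinery (projected gradient ascent structure, convexification for monotone ascent, {\L}ojasiewicz for the power rate, center--stable manifolds for saddle avoidance, genericity of linear sections of the Veronese cone for \ref{enum:R3}); the architecture is the right one, but the proof does not close, and the two places where it fails are precisely the hard parts.

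First, your route to \ref{enum:R2} rests on the ``key structural fact'' that every constrained stationary point of $\Phi_{\widetilde\CW}$ which is not a local maximizer is a strict saddle. This is false in general, already for the rank-one-spanned subspaces relevant here. Take $\widetilde\CW=\Span{w\otimes w}$ for a single unit vector $w$, so that $\Phi_{\widetilde\CW}(v)=\langle w,v\rangle^4$, and let $u\in\bbS^{D-1}$ with $u\perp w$. Then $\nabla\Phi_{\widetilde\CW}(u)=4\langle w,u\rangle^3 w=0$ and $\nabla^2\Phi_{\widetilde\CW}(u)=12\langle w,u\rangle^2\,w\otimes w=0$, and the iteration map $T(v)=P_{\bbS^{D-1}}\left(v+2\gamma P_{\widetilde\CW}(v\otimes v)v\right)$ fixes $u$ with $DT(u)=\mathrm{Id}$ on $T_u\bbS^{D-1}$; so $u$ has no expanding direction whatsoever, yet it is not a local maximizer, since $\Phi_{\widetilde\CW}(u)=0$ while $\Phi_{\widetilde\CW}(\cos t\,u+\sin t\,w)=\sin^4 t>0$ for all small $t\neq 0$. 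The entire equator $w^\perp\cap\bbS^{D-1}$ consists of such totally degenerate, non-maximizing stationary points, so the center--stable manifold theorem by itself cannot yield \ref{enum:R2}: the conclusion may well remain true (in this example it does), but handling degenerate fixed points is a separate argument, and it is part of the content of the cited proof. Second, you explicitly defer the converse half of \ref{enum:R3} --- that for generic $w_i$ and $K<\tfrac12 D(D-1)$ the only rank-one matrices in $\CW$ are the $\pm w_i\otimes w_i$ --- together with the contraction estimate behind exponential attractivity. Note moreover that the dimension count you sketch applies to a \emph{generic} $K$-plane, whereas $\CW$ is constrained to pass through $K$ points of the Veronese cone, so the needed transversality statement is of secant type and genuinely different. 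Since that converse is the substance of \ref{enum:R3}, and the strict-saddle claim breaks \ref{enum:R2}, the proposal as written establishes neither; the paper sidesteps both issues entirely by delegation to \cite{kileel2019subspace}.
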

\begin{proof}
Note that $\Phi_{\widetilde\CW}(u) = \sum_{i=1}^{K} (u^\top W_i u)^2$ for a basis $\{ W_i : i \in [\dim(\widetilde\CW)]\}$ of
$\widetilde\CW$, which implies that $\Phi_{\widetilde\CW}(u)$ is a homogeneous polynomial of degree 4.
Hence, \cite[Section 5]{kileel2019subspace} applies.
\end{proof}

\begin{rem}[Constrained extreme points]
\label{rem:constrained_extreme_points}
In Theorem \ref{thm:citing_joe_et_al} and the remainder of this section, a constrained stationary point
or local maximizer is meant in the sense of Riemannian optimization with respect to $\bbS^{D-1}$. That is, for any sufficiently smooth
function $f : \bbR^D \rightarrow \bbR$, $u \in \bbS^{D-1}$ is a constrained stationary
point over $\bbS^{D-1}$ if and only if
\begin{align*}
\nabla_{\bbS^{D-1}}f(u):=(\Id_D - u u^\top)\nabla f(u) = 0,
\end{align*}
where $\nabla f$ is the standard gradient; and $u$ is a constrained local maximizer if and only if
\begin{align*}
\nabla^2_{\bbS^{D-1}}f(u):=(\Id_D - u u^\top)\left(\nabla^2 f(u) - \langle u, \nabla f(u)\rangle \Id_D\right)(\Id_D - u u^\top) \preccurlyeq 0,
\end{align*}
where $\preccurlyeq$ indicates the order induced by the cone of positive semidefinite matrices
and $\nabla^2f$ is the standard Hessian matrix.
$\nabla_{\bbS^{D-1}}f$ and $\nabla^2_{\bbS^{D-1}}f$ are also called the Riemannian gradient
and Hessian. We refer to \cite{absil2009optimization} for further details
on Riemannian optimization.
\end{rem}

\noindent
Based on Theorem \ref{thm:citing_joe_et_al} we can assume that local maximizers of
\eqref{eq:opt_functional_sec_5} can be generated via iteration \eqref{eq:iteration_in_theorem}
initialized by $u_0 \sim \Uni{\bbS^{D-1}}$.
The main objective of this section is therefore to show that
local maximizers of the perturbed objective $\Phi_{\widehat \CW}$ are, under suitable conditions
on  $\CW$ and $\widehat \CW$, close to global maximizers of the unperturbed objective $\Phi_{\CW}$, i.e.,
close to the spanning elements $\{w_i : i \in [K]\}$ and thus to the entangled weights at $x^\ast$.

Most of our results are carried out under the assumptions
that length-normalized spanning vectors $\{w_i : i \in [K]\} \subset \bbS^{D-1}$, respectively length-normalized entangled network  weights,
satisfy

\begin{equation}
\label{eq:frame_condition_1}
(1-\nu)\N{x}_2^2 \leq \sum_{i=1}^{K}\langle w_i, x\rangle^2 \leq (1+\nu)\N{x}_2^2\quad  \textrm{ for all } x  \in \R^D, \textrm{ for some } \nu < 1.
\end{equation}
An immediate consequence of \eqref{eq:frame_condition_1} is that $\{w_i \otimes w_i: i \in [K]\}$ is a linearly independent system
\cite[Lemma 23]{fornasier2019robust} and thus the matrices $w_i \otimes w_i$ form a basis
of $\Span{w_i \otimes w_i : i \in [K]}$. Furthermore, the perturbed subspace $\widehat \CW$ should be close to $\CW$,
which is quantified by the assumption $\textN{P_{\CW} - P_{\widehat \CW}}_{2\rightarrow 2}\leq \delta \ll 1$ or
$$
\N{\left(P_{\CW} - P_{\widehat \CW}\right)Z} \leq \delta \N{Z}\qquad \textrm{ for all } Z \in \bbR^{D \times D} \textrm{ and } \delta \ll 1.
$$
In this case $\{P_{\widehat \CW}(w_i \otimes w_i) : i \in [K]\}$ is a basis of $\widehat \CW$
because $P_{\widehat \CW}$ is one-to-one from $\CW$ to $\widehat \CW$.

The remainder of this section is split into three parts. The next section derives
preliminary technical results such as the gradient and Hessian of
$\Phi_{\widetilde \CW}$ for arbitrary $\widetilde \CW \subseteq \textrm{Sym}(\bbR^{D\times D})$, and optimality conditions of the program \eqref{eq:opt_functional_sec_5}.
Afterwards, we generalize
\ref{enum:R3} in Theorem \ref{thm:citing_joe_et_al} under the
frame condition \eqref{eq:frame_condition_1} by showing that any constrained local maxima $u \in \bbS^{D-1}$ resulting in
$\Phi_{\widehat \CW}(u) \approx 1$ (note that $1$ is the global maximum)
is close to one of the entangled weight vectors. In the third part we further
complement Theorem \ref{thm:citing_joe_et_al}
by showing that any constrained local maximizer of $\Phi_{\widehat \CW}$ in a level set above
$C_{\nu}\delta$, where $C_{\nu}$ is a constant depending only on $\nu$, has objective value close to $1$.
As a consequence, apart from spurious local maximizers
in the level set $\{u \in \bbS^{D-1} : \Phi_{\widehat \CW}(u) \leq C_{\nu}\delta\}$, any constrained local maximizer
is close to one of the spanning vectors of $\CW$, and thus close to an entangled weight $v_{i}^{\lind{\ell}}(x^\ast)$, see Theorem \ref{thm:summary_theorem} below.

\begin{thm}
\label{thm:summary_theorem}
Let $\{w_i : i\in [K]\}\subset \bbS^{D-1}$ satisfy \eqref{eq:frame_condition_1} and denote $\CW := \{w_i \otimes w_i : i \in [K]\}$.
Let $\widehat \CW \subseteq \textrm{Sym}(\bbR^{D\times D})$ satisfy $\N{P_{\widehat \CW} - P_{\CW}}_{2\rightarrow 2} < \delta$ and assume
that $\nu$ and $\delta$ are small enough to satisfy
\begin{align}
\label{eq:assumption_nu_delta_summary}
4\nu + \nu^2 + 11 \delta < 1\quad \textrm{ and }\quad \delta  < \frac{1}{22}\left(1-\frac{3\nu}{1+\nu}\right)^2\\
\end{align}
For each $i \in [K]$ there exists a local maximizer $u_i^*$ of $\Phi_{\widehat \CW}$ with $\Phi_{\widehat \CW}(u_i^*) \geq 1-\delta$ within the cap

\begin{align*}
U_i := \left\{u \in \bbS^{D-1}: \langle u, w_i\rangle \geq \sqrt{(1-3\delta)\frac{1-\nu}{1+\nu}}\right\}.
\end{align*}
Furthermore, for any constrained local maximizer $u \in \bbS^{D-1}$ of $\Phi_{\widehat \CW}$
with $\Phi_{\widehat \CW}(u) > 7\frac{1+\nu}{1-\nu}\delta$ and basis expansion
$P_{\widehat\CW}(u\otimes u) = \sum_{i=1}^{K}\sigma_i \hat P_{\widehat \CW}(w_i \otimes w_i)$
ordered according to $\sigma_1 \geq \ldots \geq \sigma_K$, we have
\begin{equation}
\label{eq:summary_approximation_result}
\min_{s \in \{-1,1\}}\N{u-sw_1}_2 \leq \frac{\sqrt{2\nu\sum_{i=2}^{K}\sigma_i^2} + 2\delta}{(1-\nu) (1 - 6\frac{\nu}{1+\nu} - 18\delta) - \sqrt{6\frac{\nu}{1+\nu} + 18\delta} - 2\delta}.
\end{equation}
\end{thm}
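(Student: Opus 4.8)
The plan is to follow the three-part outline announced just before the statement: first record the Euclidean gradient and the constrained optimality conditions of $\Phi_{\widehat{\CW}}$, then prove existence of a good local maximizer inside each cap $U_i$ (the first claim), and finally establish the quantitative characterization \eqref{eq:summary_approximation_result} (the second claim). Throughout I abbreviate $M_u := P_{\widehat{\CW}}(u\otimes u)$, which is symmetric because $\widehat{\CW}\subseteq \textrm{Sym}(\bbR^{D\times D})$, and $\tau := \Phi_{\widehat{\CW}}(u)$.

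\textbf{Preliminaries.} A direct computation gives $\nabla \Phi_{\widehat{\CW}}(u) = 4 M_u u$ and $u^\top \nabla \Phi_{\widehat{\CW}}(u) = 4\tau$, so the Riemannian first-order condition of Remark \ref{rem:constrained_extreme_points} reduces to the eigenvalue relation $M_u u = \tau u$; that is, $u$ is a unit eigenvector of $M_u$ with eigenvalue $\tau = u^\top M_u u$. Differentiating once more and inserting the result into the second-order condition yields, for every $v\perp u$, an inequality bounding $v^\top M_u v$ by $\tau\N{v}_2^2$ minus a nonnegative correction, which quantifies the spectral gap between $\tau$ and the tangential spectrum of $M_u$. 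This gap is the engine of the whole characterization.

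\textbf{Existence in the cap.} Since $w_i\otimes w_i\in\CW$ and $\N{P_{\widehat{\CW}}-P_{\CW}}_{2\to 2}<\delta$, orthogonality of $P_{\widehat{\CW}}$ gives $\Phi_{\widehat{\CW}}(w_i)=1-\N{(P_{\CW}-P_{\widehat{\CW}})(w_i\otimes w_i)}_F^2\ge 1-\delta^2$, while $\Phi_{\widehat{\CW}}\le 1$ on all of $\bbS^{D-1}$. I would then use $\Phi_{\widehat{\CW}}(u)\le(\sqrt{\Phi_{\CW}(u)}+\delta)^2$ together with the frame condition \eqref{eq:frame_condition_1} to obtain an upper estimate of $\Phi_{\CW}(u)=\N{P_{\CW}(u\otimes u)}_F^2$ that drops strictly below $1-\delta^2$ once $\langle u,w_i\rangle$ is pinned to the boundary value $\sqrt{(1-3\delta)\tfrac{1-\nu}{1+\nu}}$; the frame bounds precisely control how strongly $u\otimes u$ can correlate with $\CW$ while $u$ is held away from every $\pm w_j$. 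Because $w_i$ lies in the relatively open interior of $U_i$ and the objective is strictly smaller on $\partial U_i$, compactness forces the maximizer of $\Phi_{\widehat{\CW}}$ over $\overline{U_i}$ into the interior, producing a genuine local maximizer $u_i^*$ with $\Phi_{\widehat{\CW}}(u_i^*)\ge 1-\delta^2\ge 1-\delta$. The second inequality in \eqref{eq:assumption_nu_delta_summary} is exactly what keeps the cap radius admissible.

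\textbf{Characterization and the main obstacle.} For a constrained local maximizer $u$ with $\tau>7\tfrac{1+\nu}{1-\nu}\delta$, I would expand $M_u=\sum_i\sigma_i P_{\widehat{\CW}}(w_i\otimes w_i)$ with $\sigma_1\ge\cdots\ge\sigma_K$ and transfer the optimality relations from $\widehat{\CW}$ to the clean system at the cost of $\CO(\delta)$ terms, turning $M_u u=\tau u$ into $\sum_i\sigma_i\langle w_i,u\rangle w_i=\tau u+\CO(\delta)$ and expressing $\tau$ and $\sum_i\sigma_i^2$ through the $\langle w_i,u\rangle$ via \eqref{eq:frame_condition_1}. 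The spectral-gap inequality from the second-order condition is then used both to bound the tail $\sum_{i\ge 2}\sigma_i^2$ and, bootstrapping against the threshold $\tau>7\tfrac{1+\nu}{1-\nu}\delta$, to show that any such $\tau$ must in fact be close to the global maximum $1$ — this is the ``third part'' that excludes spurious maximizers in the low level set. Isolating the $w_1$-component of the eigenvector identity and moving the tail together with the $\CO(\delta)$ perturbation to the other side produces $\N{u-sw_1}_2\le(\text{tail}+2\delta)/(\text{gap})$, and inserting the frame estimates gives exactly \eqref{eq:summary_approximation_result}, with the first inequality in \eqref{eq:assumption_nu_delta_summary} guaranteeing positivity of the denominator. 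The genuinely delicate step is this last one: one must juggle the non-orthogonality of $\{w_i\otimes w_i\}$ (controlled by $\nu$) and the subspace perturbation $\delta$ simultaneously while keeping every constant explicit, and in particular convert the algebraic control on $(\sigma_i)$ into the geometric bound without losing the sharp numerator $\sqrt{2\nu\sum_{i\ge 2}\sigma_i^2}+2\delta$. By comparison, the existence step is essentially a boundary-and-compactness argument.
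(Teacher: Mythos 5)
Your skeleton matches the paper's at the structural level, and the parts of it that are worked out are the parts that were already easy. The reduction of stationarity to the eigenvalue identity $P_{\widehat \CW}(u\otimes u)u=\Phi_{\widehat \CW}(u)\,u$ is the paper's Lemmas \ref{lem:gradient_and_hessian}--\ref{lem:optimality_conditions}, and your existence-in-the-cap argument is essentially the paper's Proposition \ref{prop:existence_of_local_maximizer}: your Pythagoras bound $\Phi_{\widehat \CW}(w_i)\ge 1-\delta^2$ is even marginally sharper than the paper's $1-\delta$, and the boundary estimate you allude to is made precise in the paper via the Gram-matrix identity $\Phi_{\CW}(u)=\beta_u^\top G^{-1}\beta_u$ of Lemma \ref{lem:constrained_quadratic_program}, Gershgorin, and the verification that $\max_j \langle u,w_j\rangle^2=\langle u,w_i\rangle^2$ on $\partial U_i$. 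One slip there: it is the \emph{first} inequality in \eqref{eq:assumption_nu_delta_summary} (which implies $4\nu+\nu^2+6\delta\le 1$) that makes the cap argument close, not the second; the second is consumed by the dichotomy step below.

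The genuine gap is in the characterization, which is where all the work of the paper lies and which your sketch replaces by assertions. First, the claim that second-order optimality plus ``bootstrapping'' forces $\Phi_{\widehat \CW}(u)\approx 1$ for every local maximizer above the threshold \emph{is} the paper's Theorem \ref{thm:stability_local_extrema_vector_frame}, and its proof needs ideas your outline does not contain: one must test \eqref{eq:second_order_optimality_exended} with the specific direction $v=w_{j^*}$, where $j^*$ maximizes $\SN{\sigma_j}$; one needs the sign-and-size control $\sigma_{j^*}\ge 0$ and $(1-\nu)\sigma_{j^*}\le \langle u,w_{j^*}\rangle^2+2\delta$ of Lemma \ref{lem:technical_relation} (itself a contradiction argument); and one combines these with $\lambda_1\le \frac{1+\nu}{1-\nu}\langle u,w_{j^*}\rangle^2+3\frac{1+\nu}{1-\nu}\delta$ to obtain a \emph{quadratic inequality} in $\langle u,w_{j^*}\rangle^2$ whose solution set splits into two separated intervals — that separation is the dichotomy. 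Nothing in your sketch identifies the test direction, the control of $\sigma_{j^*}$, or the quadratic inequality; and your side remark that the second-order condition ``bounds the tail $\sum_{i\ge 2}\sigma_i^2$'' does not correspond to any actual step (that tail is never bounded; it survives verbatim into \eqref{eq:summary_approximation_result}). Second, your mechanism for the distance bound — isolating the $w_1$-component of $P_{\widehat \CW}(u\otimes u)u=\tau u$ — is not the paper's and, as proposed, does not give ``exactly'' \eqref{eq:summary_approximation_result}. The stated denominator, with its $\sqrt{6\frac{\nu}{1+\nu}+18\delta}$ term, is the fingerprint of the Davis--Kahan route (Theorem \ref{thm:nearly_rank_one_mat} and Corollary \ref{cor:nearly_rank_one_mat}): $u$ is the dominant eigenvector of $P_{\widehat \CW}(u\otimes u)$ with $\lambda_1=\Phi_{\widehat \CW}(u)$ and $\lambda_2\le\sqrt{\lambda_1(1-\lambda_1)}$, and the spectral gap $(1-\nu)\lambda_1-\lambda_2-2\delta$ sits below the line. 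Likewise the numerator's $\sqrt{2\nu}$ comes from $\sum_{i\ne 1}\langle w_i,w_1\rangle^2\le\nu$, i.e., correlations with $w_1$; your eigenvector identity instead produces the tail $\N{\sum_{i\ge 2}\sigma_i\langle w_i,u\rangle w_i}_2$, whose correlations are with $u$ and can only be bounded, even after invoking the dichotomy, by $\sum_{i\ge 2}\langle w_i,u\rangle^2\le \nu+6\frac{\nu}{1+\nu}+O(\delta)$ — strictly worse than $2\nu$. So the route you describe yields a weaker inequality than the one to be proved; recovering the stated constants requires the Davis--Kahan framing (or a self-improving refinement you have not supplied).
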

\begin{proof}
The theorem is a straight-forward consequence of Corollary \ref{cor:nearly_rank_one_mat}, Theorem \ref{thm:stability_local_extrema_vector_frame},
and Proposition \ref{prop:existence_of_local_maximizer} below. Namely, Proposition \ref{prop:existence_of_local_maximizer} implies
the existence of a local maximizer $u_i^*$ in $U_i$ and thus the first part of the statement.
By Theorem \ref{thm:stability_local_extrema_vector_frame} we have
$\Phi_{\widehat \CW}(u) \geq 1 - 6\frac{\nu}{1+\nu} -18\delta$, and inserting this into Corollary \ref{cor:nearly_rank_one_mat},
the second part follows.
\end{proof}

\subsection{Preliminaries: Optimality conditions and a useful reformulation of $\Phi_{\CW}$}
\label{subsec:opt_grad_hess_functional}
\begin{lem}[Gradient and Hessian of $\Phi_{\widetilde \CW}$]
\label{lem:gradient_and_hessian}
Let $\widetilde \CW \subseteq \textrm{Sym}(\bbR^{D\times D})$ and take $u \in \bbS^{D-1}$.
The gradient and Hessian of $\Phi_{\widetilde\CW} : \bbR^D \rightarrow \bbR$ satisfy
\begin{align}
\label{eq:gradient_phi}
\nabla \Phi_{\widetilde\CW}(u) &= 4 P_{\widetilde\CW}(u \otimes u)u,\\
\label{eq:hessian_phi}
v^\top \nabla^2 \Phi_{\widetilde\CW}(u) v &= 8\N{P_{\widetilde\CW}(u \otimes v)}_F^2 + 4v^\top P_{\widetilde\CW}(u\otimes u)v\quad \textrm{for all }\quad v \in \bbS^{D-1}.
\end{align}
\end{lem}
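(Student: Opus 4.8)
The plan is to obtain both formulas at once from a single second-order directional expansion, using only linearity of $M \mapsto P_{\widetilde\CW}(M)$ and the fact that $P_{\widetilde\CW}$ is a self-adjoint idempotent on $\textrm{Sym}(\bbR^{D\times D})$ with respect to the Frobenius inner product. First I would record the identity $\Phi_{\widetilde\CW}(u) = \langle P_{\widetilde\CW}(u\otimes u), u\otimes u\rangle_F$, which is immediate from $P_{\widetilde\CW}^2 = P_{\widetilde\CW} = P_{\widetilde\CW}^\top$. Then, fixing $u,v \in \bbR^D$ and writing $u(t) = u + tv$, the exact expansion $u(t)\otimes u(t) = u\otimes u + t(u\otimes v + v\otimes u) + t^2\, v\otimes v$ together with linearity gives $P_{\widetilde\CW}(u(t)\otimes u(t)) = A + tB + t^2 C$, where $A := P_{\widetilde\CW}(u\otimes u)$, $B := P_{\widetilde\CW}(u\otimes v + v\otimes u)$, and $C := P_{\widetilde\CW}(v\otimes v)$.

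Next I would expand the squared norm, $\Phi_{\widetilde\CW}(u(t)) = \N{A + tB + t^2 C}_F^2 = \N{A}_F^2 + 2t\langle A,B\rangle_F + t^2(\N{B}_F^2 + 2\langle A,C\rangle_F) + O(t^3)$, and read off $\langle \nabla\Phi_{\widetilde\CW}(u),v\rangle = 2\langle A,B\rangle_F$ and $v^\top\nabla^2\Phi_{\widetilde\CW}(u)v = 2\N{B}_F^2 + 4\langle A,C\rangle_F$. The remaining work is to rewrite these three inner products. Using self-adjointness and idempotence to drop one projector, together with symmetry of $A \in \textrm{Sym}(\bbR^{D\times D})$, I would reduce $\langle A,B\rangle_F = \langle A, u\otimes v + v\otimes u\rangle_F = 2\,v^\top A u$, which yields $\nabla\Phi_{\widetilde\CW}(u) = 4 P_{\widetilde\CW}(u\otimes u)u$; likewise $\langle A,C\rangle_F = v^\top A v$ produces the second Hessian summand $4\,v^\top P_{\widetilde\CW}(u\otimes u)v$.

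The only point requiring care is the first Hessian term, because $u\otimes v = uv^\top$ is not symmetric while $P_{\widetilde\CW}$ kills the antisymmetric part. Here I would use the symmetrization identity $P_{\widetilde\CW}(u\otimes v + v\otimes u) = 2 P_{\widetilde\CW}(u\otimes v)$, valid since $\langle u\otimes v, W\rangle_F = \langle v\otimes u, W\rangle_F$ for every symmetric $W$; hence $B = 2P_{\widetilde\CW}(u\otimes v)$ and $2\N{B}_F^2 = 8\N{P_{\widetilde\CW}(u\otimes v)}_F^2$, completing the second formula. I do not anticipate any real obstacle: the argument is essentially bookkeeping, and the only things to keep straight are this factor-of-two symmetrization and the consistent use of the self-adjointness of $P_{\widetilde\CW}$ to transfer projectors inside the Frobenius inner product. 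As an independent check, one can fix a Frobenius-orthonormal basis $\{W_i\}$ of $\widetilde\CW$, write $\Phi_{\widetilde\CW}(u) = \sum_i (u^\top W_i u)^2$, and differentiate termwise via $\nabla(u^\top W_i u) = 2W_i u$; this reproduces both identities and confirms the coordinate-free computation.
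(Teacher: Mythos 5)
Your proposal is correct: every step checks out, including the two places where care is needed (the exact polynomial expansion of $\Phi_{\widetilde\CW}(u+tv)$ in $t$, and the symmetrization identity $P_{\widetilde\CW}(u\otimes v + v\otimes u) = 2P_{\widetilde\CW}(u\otimes v)$, which is valid precisely because $\widetilde\CW$ consists of symmetric matrices). Your route is genuinely different from the paper's. The paper proceeds in two stages and componentwise: it first computes $\partial_{u_j}\Phi_{\widetilde\CW}(u) = 2\langle P_{\widetilde\CW}(u\otimes u), \partial_{u_j}(u\otimes u)\rangle$ to get the gradient, then expands $\nabla\Phi_{\widetilde\CW}(u) = 4\sum_i \langle u\otimes u, W_i\rangle W_i u$ in a Frobenius-orthonormal basis $\{W_i\}$ of $\widetilde\CW$ and differentiates again, obtaining the full Hessian matrix $\nabla^2\Phi_{\widetilde\CW}(u) = 4P_{\widetilde\CW}(u\otimes u) + 8\sum_i (W_iu)\otimes(W_iu)$ before contracting with $v$; the factor-of-two symmetrization you isolate explicitly is absorbed there into the identity $\langle u\otimes e_j + e_j\otimes u, W_i\rangle = 2(W_iu)_j$. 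Your one-shot directional expansion is coordinate-free, needs no basis, and delivers both formulas simultaneously; since $\Phi_{\widetilde\CW}(u+tv)$ is exactly a quartic polynomial in $t$, reading off the coefficients of $t$ and $t^2$ is rigorous with no differentiation-under-the-projection argument. What you give up is the explicit matrix form of the Hessian, which the paper's basis computation produces as a byproduct (though the lemma as stated only asserts the quadratic form, so nothing is lost for the purposes at hand); amusingly, the ``independent check'' you sketch at the end is essentially the paper's own proof.
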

\begin{proof}
To compute the gradient we first note that
$$
\partial_{u_j} \Phi_{\widetilde\CW}(u) = 2\langle P_{\widetilde\CW}(u\otimes u), \partial_{u_j} P_{\widetilde\CW}(u\otimes u)\rangle =
2\langle P_{\widetilde\CW}(u\otimes u), \partial_{u_j} (u\otimes u)\rangle
$$
Furthermore, we have $ \partial_{u_j} (u\otimes u) = e_j \otimes u + u \otimes e_j$, where $e_j$ is the
$j$-th standard basis vector. This implies the result by
$$
\partial_{u_j} \Phi_{\widetilde\CW}(u) = 2\langle P_{\widetilde\CW}(u\otimes u), e_j \otimes u + u \otimes e_j\rangle = 4e_j ^\top P_{\widetilde\CW}(u\otimes u) u.
$$
For \eqref{eq:hessian_phi} we use an arbitrary orthonormal basis $\{W_i : i \in [K]\}$ of $\widetilde\CW$
and write $\nabla \Phi_{\widetilde\CW}(u) = 4\sum_{i=1}^{N} \langle u\otimes u, W_i\rangle W_i u$.
Differentiating again with respect to $u_j$, we obtain the rows of the Hessian as
\begin{align*}
&\partial_{u_j}\left(\sum_{i=1}^{N} \langle u\otimes u, W_i\rangle W_i\right) u =
\left(\sum_{i=1}^{N} \langle u\otimes u, W_i\rangle W_i\right) e_j + \left(\sum_{i=1}^{N} \langle (u\otimes e_j + e_j \otimes u), W_i\rangle W_i\right) u\\
&\qquad
= P_{\widetilde\CW}(u\otimes u) e_j + 2\sum_{i=1}^{N} (W_i u)_j W_i u,
\end{align*}
which implies $\nabla^2 \Phi_{\widetilde\CW}(u) = 4 P_{\widetilde\CW}(u\otimes u) + 8 \sum_{i=1}^{N} W_i u \otimes (W_i u)$.
Multiplying with $v$ from the left and the right, it follows that
\begin{align*}
v^\top \nabla^2 \Phi_{\widetilde\CW}(u) v = 4 v^\top P_{\widetilde\CW}(u\otimes u)v + 8 \sum_{i=1}^{N} \langle u\otimes v, W_i\rangle^2
=4 v^\top P_{\widetilde\CW}(u\otimes u)v + 8 \N{ P_{\widetilde\CW}(u\otimes v)}_F^2.
\end{align*}
\end{proof}

\begin{lem}[Optimality conditions]
\label{lem:optimality_conditions}
Let $\widetilde \CW \subseteq \textrm{Sym}(\bbR^{D\times D})$. A vector $u \in \bbS^{D-1}$ is
a constrained stationary point of $\Phi_{\widetilde\CW}$ if and only if
\begin{align}
\label{eq:stationary_point}
P_{\widetilde\CW}(u\otimes u) u = \N{P_{\widetilde\CW}(u\otimes u)}_F^2 u
\end{align}
Furthermore, $u \in \bbS^{D-1}$ is a constrained local maximum of $\Phi_{\widetilde\CW}$ if and only if
\begin{align}
\label{eq:second_order_optimality}
\N{P_{\widetilde\CW}(u\otimes u)}_F^2 \geq 2\N{P_{\widetilde\CW}(u \otimes v)}_F^2 + v^\top P_{\widetilde\CW}(u\otimes u)v\qquad \textrm{for all } v \in \bbS^{D-1}\textrm{ with } v \perp u.
\end{align}
\end{lem}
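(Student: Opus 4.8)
The plan is to derive both conditions directly from the Riemannian first- and second-order optimality characterizations recalled in Remark \ref{rem:constrained_extreme_points}, feeding in the explicit gradient and Hessian of $\Phi_{\widetilde\CW}$ from Lemma \ref{lem:gradient_and_hessian}. The one algebraic fact that makes everything collapse into the stated form is the identity
\[
u^\top P_{\widetilde\CW}(u\otimes u)\,u = \N{P_{\widetilde\CW}(u\otimes u)}_F^2 = \Phi_{\widetilde\CW}(u),
\]
valid for every $u \in \bbS^{D-1}$. I would prove it by first writing $u^\top M u = \langle M, u\otimes u\rangle_F$ for any symmetric $M$, then using that $P_{\widetilde\CW}$ is a self-adjoint idempotent, so that $\langle P_{\widetilde\CW}(u\otimes u), u\otimes u\rangle_F = \langle P_{\widetilde\CW}(u\otimes u), P_{\widetilde\CW}(u\otimes u)\rangle_F$, the cross term with $(\Id_D - P_{\widetilde\CW})(u\otimes u)$ vanishing by orthogonality of range and kernel of an orthogonal projection.

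For the first-order condition, I would note that by Remark \ref{rem:constrained_extreme_points} a point $u \in \bbS^{D-1}$ is constrained stationary if and only if $(\Id_D - uu^\top)\nabla\Phi_{\widetilde\CW}(u) = 0$, i.e.\ if and only if $\nabla\Phi_{\widetilde\CW}(u)$ is a scalar multiple of $u$. Substituting $\nabla\Phi_{\widetilde\CW}(u) = 4P_{\widetilde\CW}(u\otimes u)u$ from \eqref{eq:gradient_phi}, the scalar is forced to equal $u^\top\nabla\Phi_{\widetilde\CW}(u) = 4u^\top P_{\widetilde\CW}(u\otimes u)u = 4\Phi_{\widetilde\CW}(u)$ by the identity above. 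Hence stationarity is equivalent to $4P_{\widetilde\CW}(u\otimes u)u = 4\Phi_{\widetilde\CW}(u)u$, which is exactly \eqref{eq:stationary_point}; both implications follow once one observes $(\Id_D - uu^\top)u = 0$.

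For the second-order condition, I would use that $u$ is a constrained local maximizer if and only if the Riemannian Hessian is negative semidefinite, which after conjugation by $(\Id_D - uu^\top)$ amounts to $v^\top\big(\nabla^2\Phi_{\widetilde\CW}(u) - \langle u,\nabla\Phi_{\widetilde\CW}(u)\rangle\Id_D\big)v \le 0$ for every unit $v \perp u$. The scalar shift is again $\langle u, \nabla\Phi_{\widetilde\CW}(u)\rangle = 4\Phi_{\widetilde\CW}(u) = 4\N{P_{\widetilde\CW}(u\otimes u)}_F^2$, and substituting the Hessian form \eqref{eq:hessian_phi} gives, for all unit $v\perp u$,
\[
8\N{P_{\widetilde\CW}(u\otimes v)}_F^2 + 4v^\top P_{\widetilde\CW}(u\otimes u)v - 4\N{P_{\widetilde\CW}(u\otimes u)}_F^2 \le 0 .
\]
Dividing by $4$ and rearranging yields precisely \eqref{eq:second_order_optimality}.

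Since both steps are short substitutions, there is no genuine obstacle; the only points requiring care are (i) getting the projection identity right, where self-adjointness and idempotence of $P_{\widetilde\CW}$ together with $\widetilde\CW \subseteq \textrm{Sym}(\bbR^{D\times D})$ (so that $P_{\widetilde\CW}(u\otimes u)$ is symmetric and the quadratic form $v^\top P_{\widetilde\CW}(u\otimes u)v$ is meaningful) are essential, and (ii) correctly passing from the negative-semidefinite matrix inequality on the tangent space to the scalar inequality over $v \perp u$, remembering that the admissible test vectors are exactly the unit vectors orthogonal to $u$, so that $\N{v}^2 = 1$ makes the shift term equal $4\Phi_{\widetilde\CW}(u)$.
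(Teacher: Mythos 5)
Your proposal is correct and follows essentially the same route as the paper's proof: both derive \eqref{eq:stationary_point} and \eqref{eq:second_order_optimality} by substituting the gradient and Hessian formulas of Lemma \ref{lem:gradient_and_hessian} into the Riemannian optimality characterizations of Remark \ref{rem:constrained_extreme_points} and simplifying. The only cosmetic difference is that you explicitly prove the identity $u^\top P_{\widetilde\CW}(u\otimes u)u = \N{P_{\widetilde\CW}(u\otimes u)}_F^2$ via self-adjointness and idempotence of the projection, which the paper uses implicitly.
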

\begin{proof}
Recall from Lemma \ref{lem:gradient_and_hessian} that we have $\nabla \Phi_{\widetilde\CW}(u) = 4 P_{\widetilde\CW}(u\otimes u)u$.
Following the definition of constrained stationary points in Remark \ref{rem:constrained_extreme_points},
$u \in \bbS^{D - 1}$ is a constrained stationary point if and only if
\begin{align*}
(\Id_D - u\otimes u) \nabla \Phi_{\widetilde\CW}(u) = 0\quad \textrm{ or }\quad  4 P_{\widetilde\CW}(u\otimes u)u - 4 \N{P_{\widetilde\CW}(u\otimes u)}_F^2 u = 0.
\end{align*}
Similarly, $u \in \bbS^{D-1}$ is a constrained local maximum according to
Remark \ref{rem:constrained_extreme_points} if and only if
\begin{align}
\label{eq:aux_second_order_conditions}
v^\top (\Id_D-u\otimes u)\left(\nabla^2 \Phi_{\widetilde\CW}(u) - \langle u, \nabla \Phi_{\widetilde\CW} (u)\rangle\Id_D\right)(\Id_D-u\otimes u)v \leq 0\quad \textrm{ for any } v \in \bbS^{D-1}.
\end{align}
Since $\Id_D-u\otimes u$ is the orthogonal projection onto $\Span{u}^\perp$
 \eqref{eq:aux_second_order_conditions} is actually equivalent to
\begin{align*}
v^\top \left(\nabla^2 \Phi_{\widetilde\CW}(u) - \langle u, \nabla \Phi_{\widetilde \CW} (u)\rangle\Id_D\right) v \leq 0\quad \textrm{ for any } v \in \bbS^{D-1} \textrm{ with } v \perp u.
\end{align*}
Using $\langle u, \nabla \Phi_{\widetilde\CW}(u)\rangle = 4\N{P_{\widetilde\CW}(u\otimes u)}_F^2$, this is equivalent to
$v^\top \nabla^2 \Phi_{\widetilde\CW}(u)v - 4\N{P_{\widetilde\CW}(u\otimes u)}_F^2 \leq 0$ or by the Hessian formula \eqref{eq:hessian_phi} in Lemma \ref{lem:gradient_and_hessian},
\begin{align*}
4\N{P_{\widetilde\CW}(u\otimes u)}_F^2 &\geq 8\N{P_{\widetilde\CW}(u \otimes v)}_F^2 + 4v^\top P_{\widetilde\CW}(u\otimes u)v.
\end{align*}
\end{proof}

\begin{rem}
\label{rem:derived_second_order_optimality}
The second-order optimality condition \eqref{eq:second_order_optimality} implies for any $v \in \bbS^{D-1}$
\begin{align}
\label{eq:second_order_optimality_exended}
\N{P_{\widetilde\CW}(u\otimes u)}_F^2 \geq 2\N{P_{\widetilde\CW}(u \otimes v)}_F^2 - 2 \N{P_{\widetilde\CW}(u\otimes u)}_F^2 \langle u, v\rangle^2 + v^\top P_{\widetilde\CW}(u\otimes u)v.
\end{align}
This can be seen by using \eqref{eq:second_order_optimality} for $\tilde v = \frac{v - \langle u,v\rangle u}{\N{v - \langle u,v\rangle u}_2}$
and reordering the terms.
\end{rem}

\noindent
The last result in this section will be used later in the proofs about properties of local maximizers.
It allows for rewriting $\Phi_{\CW}(u)$ as a quadratic form through the Gram matrix of $w_i \otimes w_i$'s.
\begin{lem}
\label{lem:constrained_quadratic_program}
Let $\{w_i : i \in [K]\} \subset \bbS^{D-1}$, assume $\{w_i \otimes w_i : i \in [K]\}$
are linearly independent, and denote $\CW:=\Span{w_i \otimes w_i : i \in [K]}$.
Denote $G_{ij} := \langle w_i, w_j\rangle^2$ as the Grammian matrix of
$\{w_i \otimes w_i : i \in [K]\}$. Then we have
\begin{align}
\label{eq:reformulate_objective}
\Phi_{\CW}(u) =  \beta_u G^{-1}\beta_u\qquad \textrm{for}\qquad (\beta_u)_i := \langle u, w_i\rangle^2.
\end{align}
\end{lem}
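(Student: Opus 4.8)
The plan is to use the standard Gram-matrix representation of an orthogonal projection onto a finite-dimensional subspace. Since $\{w_i\otimes w_i : i \in [K]\}$ is assumed linearly independent, it is a basis of $\CW$, and the matrix $G$ with entries $G_{ij} = \langle w_i, w_j\rangle^2$ is precisely its Gram matrix in the Frobenius inner product; being a Gram matrix of a linearly independent system, it is symmetric positive definite and hence invertible. This invertibility is the only structural fact I need, and it is immediate from the hypothesis.

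First I would expand the projection in this basis, writing $P_{\CW}(u\otimes u) = \sum_{i=1}^{K} c_i (w_i\otimes w_i)$ for a unique coefficient vector $c \in \bbR^{K}$. The defining property of the orthogonal projection is that the residual $u\otimes u - P_{\CW}(u\otimes u)$ is Frobenius-orthogonal to every basis element, i.e. $\langle u\otimes u - P_{\CW}(u\otimes u), w_j\otimes w_j\rangle_F = 0$ for all $j \in [K]$. Rearranging these normal equations gives $\langle u\otimes u, w_j\otimes w_j\rangle_F = \sum_{i=1}^{K} c_i \langle w_i\otimes w_i, w_j\otimes w_j\rangle_F$ for each $j$, which in matrix form reads $\beta_u = G c$, where I have used the one-line trace identity $\langle u\otimes u, w_j\otimes w_j\rangle_F = \langle u, w_j\rangle^2 = (\beta_u)_j$ and, likewise, $\langle w_i\otimes w_i, w_j\otimes w_j\rangle_F = \langle w_i, w_j\rangle^2 = G_{ij}$. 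Inverting $G$ then yields $c = G^{-1}\beta_u$.

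Finally, I would evaluate the objective by exploiting that an orthogonal projection is idempotent and self-adjoint, so that $\Phi_{\CW}(u) = \N{P_{\CW}(u\otimes u)}_F^2 = \langle P_{\CW}(u\otimes u), u\otimes u\rangle_F$. Substituting the basis expansion and again using $\langle w_i\otimes w_i, u\otimes u\rangle_F = (\beta_u)_i$ gives $\Phi_{\CW}(u) = \sum_{i=1}^{K} c_i (\beta_u)_i = c^\top \beta_u = (G^{-1}\beta_u)^\top \beta_u = \beta_u^\top G^{-1}\beta_u$, which is exactly the claimed identity \eqref{eq:reformulate_objective}. There is essentially no genuine obstacle in this argument; the entire content is the normal-equation characterization of the projection coefficients together with the two elementary trace computations above, and the only point worth stating explicitly is that linear independence of $\{w_i\otimes w_i\}$ guarantees $G$ is invertible.
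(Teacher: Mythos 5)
Your proof is correct, and it takes a genuinely different, more self-contained route than the paper's. The paper argues variationally: it rewrites $\N{P_{\CW}(u\otimes u)}_F = \max\{\langle A, u\otimes u\rangle_F : A \in \CW,\ \N{A}_F \leq 1\}$, expands $A = \sum_i \tau_i\, w_i\otimes w_i$ so the constraint becomes $\tau^\top G \tau \leq 1$ and the objective becomes $\langle \tau, \beta_u\rangle$, and then appeals to the KKT conditions of this quadratically constrained linear program to identify the optimal value as $\sqrt{\beta_u^\top G^{-1}\beta_u}$, which squares to \eqref{eq:reformulate_objective}. You instead solve the projection's normal equations directly: writing $P_{\CW}(u\otimes u) = \sum_i c_i\, w_i\otimes w_i$, Frobenius-orthogonality of the residual to each basis element gives $Gc = \beta_u$, and idempotence plus self-adjointness gives $\Phi_{\CW}(u) = \langle P_{\CW}(u\otimes u), u\otimes u\rangle_F = c^\top \beta_u = \beta_u^\top G^{-1}\beta_u$. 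The two arguments are close cousins --- the stationarity condition in the paper's KKT step is essentially your normal equation --- but yours avoids optimization machinery entirely and makes explicit the step the paper leaves implicit (``the KKT conditions imply...''), whereas the paper's route produces the variational characterization of $\N{P_{\CW}(u\otimes u)}_F$ as a byproduct, which can be handy for lower-bounding $\Phi_{\CW}$ by testing particular coefficient vectors $\tau$. Both hinge on the same two elementary facts, which you state correctly: $\langle w_i\otimes w_i, w_j\otimes w_j\rangle_F = \langle w_i, w_j\rangle^2$, and positive definiteness (hence invertibility) of the Gram matrix of a linearly independent system.
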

\begin{proof}
We first write the Frobenius norm as an optimal program
\begin{align*}
\N{P_{\CW}(u\otimes u)}_F = \max_{\substack{A \in \CW\\ \N{A}_F^2 = 1}} \langle A, P_{\CW}(u\otimes u)\rangle = \max_{\substack{A \in \CW\\ \N{A}_F^2 = 1}} u^\top A u
\end{align*}
and express $A = \sum_{i=1}^{K}\tau_i w_i \otimes w_i$ in terms of the basis coefficients $\tau_i$. Then using
\begin{align*}
\N{A}_F^2 &= \textrm{Tr}\left(\sum_{i=1}^{K}\tau_i w_i \otimes w_i \sum_{j=1}^{K}\tau_j w_j \otimes w_j\right)
= \sum_{i=1}^{K}\sum_{j=1}^{K}\tau_i \tau_j G_{ij} = \tau^\top G \tau,
\end{align*}
we can formulate the initial program as an optimal program over coefficients $\tau_i$ by
\begin{align*}
\N{P_{\CW}(u\otimes u)}_F = \max_{\tau^\top G \tau \leq 1}\sum_{i=1}^{K}\tau_i \langle u, w_i\rangle^2 = \max_{\tau^\top G \tau \leq 1}\langle \tau, \beta\rangle.
\end{align*}
This is a linear program with quadratic constraints. The KKT conditions
imply \eqref{eq:reformulate_objective}.
\end{proof}

\subsection{Nearly rank-one matrices approximate spanning rank-one elements}
\label{subsec:nearly_rank_one}
We begin with a general statement that shows that any unit Frobenius norm matrix with dominant eigenvalue
close to $1$ in a subspace $\CW=\Span{w_i\otimes w_i : i \in [K]}$ is close to one of the
spanning elements $\{w_i : i \in [K]\}$. By combining this with
the optimality conditions in Lemma \ref{lem:optimality_conditions} afterwards, we show
that $u \in \bbS^{D-1}$ with $\Phi_{\widehat \CW}(u) \approx 1$ is close to one of the $w_i$'s.
\begin{thm}
\label{thm:nearly_rank_one_mat}
Let $\{w_i : i \in [K]\} \subset \bbS^{D-1}$ satisfy frame condition \eqref{eq:frame_condition_1}
and denote $\CW = \Span{w_i \otimes w_i : i \in [K]}$. Let $\widehat \CW\subseteq \textrm{Sym}(\bbR^{D\times D})$ satisfy
$\N{P_{\widehat \CW} - P_{\CW}}_{2\rightarrow 2} \leq \delta < \frac{1}{2}$
and consider a basis $\{\widehat W_i := P_{\widehat \CW}(w_i \otimes w_i) : i \in [K]\}$.
For $u \in \bbS^{D-1}$ we represent $P_{\widehat \CW}(u\otimes u)$ as
$P_{\widehat \CW}(u\otimes u) = \sum_{i=1}^{K}\sigma_i \widehat W_i=\sum_{j=1}^{D}\lambda_j u_j\otimes u_j$, where
 $(\lambda_j, u_j)$ are eigenpairs, with both $\sigma_i$'s and $\lambda_j$'s sorted in descending order.
If $\lambda_1 > \frac{\lambda_2 - 2\delta}{1-\nu}$, then
\begin{align}
\label{eq:result_nearly_rank_one_mat}
\min_{s \in \{-1,1\}}\N{u_1 - sw_{1}}_2 \leq \sqrt{2}\frac{\N{\sigma_{2\ldots K}}_2\sqrt{\nu} + 2\delta}{(1-\nu)\lambda_1 - \lambda_2 - 2\delta},
\end{align}
where $\sigma_{2\ldots K} := (0, \sigma_2,\ldots,\sigma_K) \in \bbR^{K}$.
\end{thm}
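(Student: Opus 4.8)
The plan is to set $M:=P_{\widehat{\CW}}(u\otimes u)$ and to reduce the claim to a Davis--Kahan--type statement: $w_1$ is an approximate eigenvector of $M$ with approximate eigenvalue $\sigma_1$, whose residual is controlled by the frame defect $\nu$ and the subspace perturbation $\delta$, while the spectral gap separating $\sigma_1$ from $\lambda_2$ admits a lower bound. First I would introduce the unperturbed proxy $N:=\sum_{i=1}^{K}\sigma_i\,(w_i\otimes w_i)\in\CW$, built from the \emph{same} coefficients $\sigma_i$ that expand $M$ in the basis $\{\widehat W_i\}$. The two quantitative ingredients to establish are (i) $\|M-N\|\le 2\delta$ in spectral norm, and (ii) the residual bound $\|Mw_1-\sigma_1 w_1\|\le\sqrt{\nu}\,\|\sigma_{2\ldots K}\|_2+2\delta$; everything else is standard eigenvector perturbation.

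For (i) the key point is that the $K$ individual perturbations telescope into a single one. Since $w_i\otimes w_i\in\CW$ we have $P_{\CW}N=N$ and $M=\sum_i\sigma_i\widehat W_i=P_{\widehat{\CW}}(N)$, hence
\[
M-N=P_{\widehat{\CW}}(N)-P_{\CW}(N)=\bigl(P_{\widehat{\CW}}-P_{\CW}\bigr)N,
\]
so that $\|M-N\|\le\delta\|N\|$ by the assumption $\|P_{\widehat{\CW}}-P_{\CW}\|_{2\rightarrow2}\le\delta$. A one-line bootstrap using $\|M\|\le\|M\|_F\le\|u\otimes u\|_F=1$ gives $\|N\|\le\|M\|+\|M-N\|\le1+\delta\|N\|$, i.e. $\|N\|\le(1-\delta)^{-1}<2$ since $\delta<\tfrac12$, whence $\|M-N\|<2\delta$.

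For (ii) I would compute $Nw_1=\sum_i\sigma_i\langle w_i,w_1\rangle w_i=\sigma_1 w_1+f$ with $f:=\sum_{i\ge2}\sigma_i\langle w_i,w_1\rangle w_i$, so that $Mw_1-\sigma_1 w_1=f+(M-N)w_1$. Applying the frame condition \eqref{eq:frame_condition_1} to $x=w_1$ yields $\sum_{i\ge2}\langle w_i,w_1\rangle^2\le\nu$, and then the triangle inequality together with Cauchy--Schwarz gives $\|f\|\le\sum_{i\ge2}|\sigma_i|\,|\langle w_i,w_1\rangle|\le\sqrt{\nu}\,\|\sigma_{2\ldots K}\|_2$ (using $\|w_i\|=1$). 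Combined with $\|(M-N)w_1\|\le\|M-N\|\le2\delta$, this produces exactly the residual $\eta:=\sqrt{\nu}\,\|\sigma_{2\ldots K}\|_2+2\delta$.

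Finally, writing $M=\sum_j\lambda_j\,u_j\otimes u_j$ gives $\|Mw_1-\sigma_1 w_1\|^2=\sum_j(\lambda_j-\sigma_1)^2\langle u_j,w_1\rangle^2\ge(\sigma_1-\lambda_2)^2\bigl(1-\langle u_1,w_1\rangle^2\bigr)$ once $\sigma_1>\lambda_2$, so $\sqrt{1-\langle u_1,w_1\rangle^2}\le\eta/(\sigma_1-\lambda_2)$; together with $\min_{s\in\{-1,1\}}\|u_1-sw_1\|^2=2\bigl(1-|\langle u_1,w_1\rangle|\bigr)\le2\bigl(1-\langle u_1,w_1\rangle^2\bigr)$ this gives $\min_s\|u_1-sw_1\|\le\sqrt{2}\,\eta/(\sigma_1-\lambda_2)$. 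It then remains to lower bound the gap: Weyl's inequality gives $\mu_1:=\lambda_1(N)\ge\lambda_1-2\delta$, while the frame upper bound forces $\mu_1=\max_{\|x\|=1}\sum_i\sigma_i\langle w_i,x\rangle^2\le(1+\nu)\sigma_1$ (comparing $\sigma_i\le\sigma_1$ termwise against the nonnegative weights $\langle w_i,x\rangle^2$), so $\sigma_1\ge(\lambda_1-2\delta)/(1+\nu)\ge(1-\nu)\lambda_1-2\delta$ and hence $\sigma_1-\lambda_2\ge(1-\nu)\lambda_1-\lambda_2-2\delta$, which is positive under the hypothesis on $\lambda_1$. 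Substituting yields \eqref{eq:result_nearly_rank_one_mat}. I expect the main obstacles to be exactly these two bookkeeping points: recognizing in (i) that the error collapses to $(P_{\widehat{\CW}}-P_{\CW})N$ so the perturbation stays $\mathcal{O}(\delta)$ rather than $\mathcal{O}\!\bigl(\delta\sum_i|\sigma_i|\bigr)$, and the gap estimate tying the largest basis coefficient $\sigma_1$ to the top eigenvalue $\lambda_1$ through the frame constant $\nu$.
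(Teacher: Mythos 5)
Your proposal is correct and takes essentially the same route as the paper's proof: your $N$ is exactly the paper's auxiliary matrix $Z=\sum_i\sigma_i\,w_i\otimes w_i$, you obtain the same two error ingredients $\|M-N\|\le\delta/(1-\delta)\le 2\delta$ and $\sqrt{\nu}\,\N{\sigma_{2\ldots K}}_2$ from the frame condition, and the same gap estimate $\sigma_1-\lambda_2\ge(1-\nu)\lambda_1-\lambda_2-2\delta$. The only cosmetic differences are that you prove the Davis--Kahan-type step by expanding $w_1$ in the eigenbasis of $P_{\widehat\CW}(u\otimes u)$ instead of citing Bhatia's Theorem 7.3.1, and you bound $\sigma_1$ from below via Weyl's inequality applied to $\lambda_1(N)$ rather than by testing the quadratic form at $u_1$ --- both equivalent to the paper's computations.
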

\begin{proof}
Let $Z := \sum_{j=1}^{K}\sigma_j w_j \otimes w_j \in \CW$ be the unique element in $\CW$ such that $P_{\widehat \CW}(u\otimes u) = P_{\widehat \CW}(Z)$.
First notice that $\N{Z} \leq \N{P_{\CW}(Z) - P_{\widehat\CW}(Z)} + \N{P_{\widehat\CW}(Z)} \leq \delta \N{Z} + 1 $ implies
$\N{Z} \leq (1-\delta)^{-1}$. Therefore, we have
\begin{align*}
\lambda_1  &= \langle P_{\widehat \CW}(u\otimes u), u_1\otimes u_1\rangle = \langle Z, u_1\otimes u_1\rangle +
 \langle P_{\widehat \CW}(u\otimes u) - Z, u_1\otimes u_1\rangle\\
 &\leq \langle Z, u_1\otimes u_1\rangle + \N{P_{\widehat \CW}(Z) - P_{\CW}(Z)} \leq \sum_{k=1}^{K} \sigma_i \langle w_i, u_1\rangle^2 + \frac{\delta}{1-\delta}
 \leq \sigma_1 (1 + \nu) + 2\delta,
\end{align*}
which implies $\sigma_1 \geq \frac{\lambda_1 - 2\delta}{1+\nu} \geq \lambda_1 - \nu \lambda_1 - 2\delta$.
Define now $Q = \Id_D - u_1 \otimes u_1$. Choosing $s \in \{-1,1\}$ such that $s\langle w_1, u_1\rangle \geq 0$ we can bound the
squared left hand side of \eqref{eq:result_nearly_rank_one_mat} by
\begin{align*}
\N{u_1-sw_1}_2^2 = 2\left(1-s\langle w_1,u\rangle\right)\leq 2(1-\langle w_1, u\rangle^2) = 2\N{Qw_1}_2^2 = 2\N{Q (w_1\otimes w_1)}_F^2
\end{align*}
Denote $W_1 :  = w_1\otimes w_1$ and consider the auxiliary matrix $\sigma_1 W_1$. We can view $W_1$ as the orthogonal projection
onto the space spanned by eigenvectors of $\sigma_1 W_1$ associated to eigenvalues in $(\infty,\sigma_1]$.
Therefore, using the Davis-Kahan theorem in the form of \cite[Theorem 7.3.1]{bhatia2013matrix}, we obtain
\begin{align*}
\N{u_1-s w_1}_2 \leq \sqrt{2}\N{QW_1}_F \leq \sqrt{2}\frac{\N{\left(\sigma_1 W_1 - P_{\widehat \CW}(u\otimes u)\right)W_1}_F}{\sigma_1 - \lambda_2}
\end{align*}
To further bound the numerator, we first use the decomposition
\begin{align*}
\N{(\sigma_1 W_1 - P_{\widehat \CW}(u\otimes u))W_1}_F &\leq \N{(\sigma_1 W_1 - Z)W_1}_F +  \N{(Z - P_{\widehat \CW}(Z))W_1}_F \\
&\leq \N{(\sigma_1 W_1 - Z)W_1}_F  + \N{Z - P_{\widehat \CW}(Z)}_2\N{W_1}_F\\
&\leq \N{(\sigma_1 W_1 - Z)W_1}_F  + \frac{\delta}{1-\delta} \leq \N{(\sigma_1 W_1 - Z)W_1}_F + 2\delta,
\end{align*}
and then bound the first term by using the frame property \eqref{eq:frame_condition_1}
\begin{align*}
\N{(\sigma_1 W_1 - Z)W_1}_F = \N{\sum_{i=2}^{K}\sigma_i \langle w_i, w_1\rangle w_i\otimes w_1}_F
\leq \sum_{i=2}^{K}\SN{\sigma_i} \SN{\langle w_i, w_1\rangle} \leq \N{\sigma_{2\ldots K}}_2 \sqrt{\nu}.
\end{align*}
Combining the previous three estimates with $\sigma_1 \geq \lambda_1 - \nu \lambda_1 -2 \delta$, we obtain
\begin{align*}
\N{s w_1 - u_1}_2 \leq \sqrt{2}\frac{\N{\left(\sigma_1 W_1 - P_{\widehat \CW}(u\otimes u)\right)W_1}_F}{\sigma_1 - \lambda_2} \leq
\sqrt{2}\frac{\N{\sigma_{2\ldots K}}_2\sqrt{\nu} +2\delta}{(1-\nu)\lambda_1 - \lambda_2 - 2\delta}
\end{align*}
\end{proof}

\begin{cor}
\label{cor:nearly_rank_one_mat}
Assume the setting of Theorem \ref{thm:nearly_rank_one_mat} and let $u \in \bbS^{D-1}$ be a constrained
local maximum of $\Phi_{\widehat \CW}$. We have
\begin{align}
\label{eq:result_nearly_rank_one_mat2}
\min_{s \in \{-1,1\}}\N{u - sw_{1}}_2 \leq \sqrt{2}\frac{\N{\sigma_{2\ldots K}}_2\sqrt{\nu} + 2\delta}{(1-\nu) \Phi_{\widehat \CW}(u) - \sqrt{\Phi_{\widehat \CW}(u)(1 - \Phi_{\widehat \CW}(u))} - 2\delta}
\end{align}
whenever the denominator is positive.
\end{cor}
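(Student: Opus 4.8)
The plan is to derive the corollary from Theorem \ref{thm:nearly_rank_one_mat} by translating the abstract eigenvalue quantities $\lambda_1,\lambda_2$ appearing there into expressions in $\Phi_{\widehat \CW}(u)$, exploiting that $u$ is a constrained local maximizer. Write $M := P_{\widehat \CW}(u\otimes u) = \sum_{j=1}^{D}\lambda_j\, u_j\otimes u_j$ for its eigendecomposition with $\lambda_1\geq\ldots\geq\lambda_D$ as in Theorem \ref{thm:nearly_rank_one_mat}. By the first-order optimality condition \eqref{eq:stationary_point} in Lemma \ref{lem:optimality_conditions}, the maximizer $u$ satisfies $Mu = \N{M}_F^2\, u = \Phi_{\widehat \CW}(u)\, u$, so $u$ is itself an eigenvector of $M$ and $\Phi_{\widehat \CW}(u)$ is the corresponding eigenvalue.

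First I would argue that this eigenvalue is in fact the largest one. Using the second-order condition \eqref{eq:second_order_optimality}, for every unit $v\perp u$ one has $v^\top M v \leq \N{M}_F^2 - 2\N{P_{\widehat \CW}(u\otimes v)}_F^2 \leq \Phi_{\widehat \CW}(u)$, so the Rayleigh quotient of $M$ restricted to $u^\perp$ never exceeds the eigenvalue attached to $u$. Since eigenvectors for distinct eigenvalues are orthogonal, no eigenvalue of $M$ can strictly exceed $\Phi_{\widehat \CW}(u)$; hence $\lambda_1 = \Phi_{\widehat \CW}(u)$ and $u$ can be taken as the leading eigenvector $u_1$ (a legitimate choice even when $\lambda_1$ is repeated). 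Consequently $\min_{s}\N{u-sw_1}_2 = \min_s \N{u_1 - sw_1}_2$, matching the left-hand side of the theorem.

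Next I would bound $\lambda_2$. Combining $\Phi_{\widehat \CW}(u) = \N{M}_F^2 = \sum_{j=1}^{D}\lambda_j^2$ with $\lambda_1 = \Phi_{\widehat \CW}(u)$ gives $\sum_{j\geq 2}\lambda_j^2 = \Phi_{\widehat \CW}(u) - \Phi_{\widehat \CW}(u)^2 = \Phi_{\widehat \CW}(u)\bigl(1-\Phi_{\widehat \CW}(u)\bigr)$, where $\Phi_{\widehat \CW}(u)\in[0,1]$ because $P_{\widehat \CW}$ is a contraction and $\N{u\otimes u}_F = 1$. In particular $\lambda_2 \leq \sqrt{\Phi_{\widehat \CW}(u)(1-\Phi_{\widehat \CW}(u))}$, which is precisely the term appearing in the corollary's denominator.

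Finally I would substitute $\lambda_1 = \Phi_{\widehat \CW}(u)$ and the bound on $\lambda_2$ into \eqref{eq:result_nearly_rank_one_mat}. Whenever the denominator in \eqref{eq:result_nearly_rank_one_mat2} is positive, monotonicity in $\lambda_2$ shows the actual denominator $(1-\nu)\lambda_1 - \lambda_2 - 2\delta$ dominates $(1-\nu)\Phi_{\widehat \CW}(u) - \sqrt{\Phi_{\widehat \CW}(u)(1-\Phi_{\widehat \CW}(u))} - 2\delta$ and is therefore positive, which in turn verifies the hypothesis $\lambda_1 > (\lambda_2 - 2\delta)/(1-\nu)$ of Theorem \ref{thm:nearly_rank_one_mat}; replacing the denominator by the smaller corollary expression only enlarges the fraction, yielding \eqref{eq:result_nearly_rank_one_mat2}. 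I expect the only genuinely delicate step to be the identification of $u$ with the top eigenvector $u_1$ together with the handling of a possible degeneracy of $\lambda_1$; the eigenvalue bookkeeping via the Frobenius-norm identity is routine once stationarity and the second-order condition are in hand.
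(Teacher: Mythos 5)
Your proposal is correct and takes essentially the same route as the paper's own proof: both use the optimality conditions of Lemma \ref{lem:optimality_conditions} to identify $u$ as the dominant eigenvector of $P_{\widehat \CW}(u\otimes u)$ with eigenvalue $\lambda_1 = \Phi_{\widehat \CW}(u)$, bound the second eigenvalue by $\lambda_2 \leq \sqrt{\sum_{i\geq 2}\lambda_i^2} = \sqrt{\Phi_{\widehat \CW}(u)(1-\Phi_{\widehat \CW}(u))}$ via the Frobenius-norm identity, and then substitute into Theorem \ref{thm:nearly_rank_one_mat}. You additionally spell out details the paper leaves implicit, namely the second-order argument showing no eigenvalue of $P_{\widehat \CW}(u\otimes u)$ can exceed $\Phi_{\widehat \CW}(u)$, and the check that positivity of the corollary's denominator implies the hypothesis $\lambda_1 > (\lambda_2-2\delta)/(1-\nu)$ of the theorem.
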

\begin{proof}
Recall from Lemma \ref{lem:optimality_conditions}
that the optimality conditions of $\Phi_{\widehat \CW}(u)$ imply
$u$ is the most dominant eigenvector of $P_{\widehat \CW}(u\otimes u)$ corresponding to
eigenvalue $\lambda_1 = \textN{P_{\widehat \CW}(u\otimes u)}_F^2 = \Phi_{\widehat \CW}(u)$. Furthermore,
the second largest eigenvalue is bounded by
\begin{align*}
\lambda_2 \leq \sqrt{\sum_{i=2}^{D}\lambda_i^2} = \sqrt{\N{P_{\widehat \CW}(u\otimes u)}_F^2 - \lambda_1^2} =  \sqrt{\Phi_{\widehat \CW}(u)(1 - \Phi_{\widehat \CW}(u))}.
\end{align*}
\end{proof}

\subsection{Properties and existence of local maximizers}
\label{subsec:properties_of_local_maximizers}
We now turn to the analysis of constrained local maximizers of the functional $\Phi_{\widehat \CW}$.
Our main result in this section is Theorem \ref{thm:stability_local_extrema_vector_frame} below, which shows
that any constrained local maximizer in the level set $\{u \in \bbS^{D-1} : \Phi_{\widehat \CW}(u) > C_\nu\delta\}$
results in function values $\Phi_{\widehat \CW}(u) \approx 1$. Additionally, we prove the existence
of a local maximizer near any of the spanning elements $w_i$ in Proposition \ref{prop:existence_of_local_maximizer},
which is nontrivial if we only have access to the noisy subspace projection $P_{\widehat\CW}$,
and we show the existence of spurious local maximizers with $\Phi_{\widehat \CW}(u) \leq \delta^2$
in Lemma \ref{lem:perturbations_spurious_local_max}.

\begin{thm}
\label{thm:stability_local_extrema_vector_frame}
Let $\{w_i : i\in [K]\}\subset \bbS^{D-1}$ satisfy \eqref{eq:frame_condition_1} and denote $\CW := \{w_i \otimes w_i : i \in [K]\}$.
Let $\widehat \CW\subseteq \textrm{Sym}(\bbR^{D\times D})$ satisfy $\textN{P_{\widehat \CW} - P_{\CW}}_{2\rightarrow 2} < \delta$ and assume
that $\nu$ and $\delta$ are small enough to satisfy
\begin{align}
\label{eq:assumption_nu_delta}
\frac{3\nu}{1+\nu} + 11 \delta < 1\quad \textrm{ and }\quad \delta  < \frac{1}{22}\left(1-\frac{3\nu}{1+\nu}\right)^2
\end{align}
For any constrained local maximizer $u \in \bbS^{D-1}$ of $\Phi_{\widehat \CW}$
we have
\begin{equation}
\label{eq:result_thm_stability_vector_frame}
\Phi_{\widehat \CW}(u) \leq 9 \frac{1+\nu}{1-\nu}\delta \qquad \textrm{ or }\qquad \Phi_{\widehat \CW}(u)\geq 1 - 6\frac{\nu}{1+\nu} - 18\delta.
\end{equation}
\end{thm}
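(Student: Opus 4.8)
The plan is to show that the stated dichotomy is forced by combining two opposite estimates on $\Phi := \Phi_{\widehat \CW}(u)$: a lower bound coming from the second-order optimality condition and an upper bound coming from the frame structure, which together only admit values of $\Phi$ that are either $O(\delta)$-small or $(1-O(\nu)-O(\delta))$-large. Throughout I abbreviate $M := P_{\widehat\CW}(u\otimes u)$, $c_j := \langle u, w_j\rangle$ and $c_\ast := \max_{j}\abs{c_j}$, and I repeatedly use $\N{(P_{\widehat\CW}-P_{\CW})Z}\le\delta\N{Z}$ to replace $P_{\widehat\CW}$ by $P_{\CW}$ at the cost of additive $O(\delta)$ errors. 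By Lemma~\ref{lem:optimality_conditions} a constrained local maximizer satisfies the first-order relation $Mu=\Phi u$ (so $u$ is the leading eigenvector of $M$ with eigenvalue $\Phi$) and the second-order inequality $\Phi\ge 2\N{P_{\widehat\CW}(u\otimes v)}_F^2+v^\top M v$ for every $v\in\bbS^{D-1}$ with $v\perp u$.

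First I would derive the upper bound. Using the reformulation $\Phi_{\CW}(u)=\beta_u^\top G^{-1}\beta_u$ from Lemma~\ref{lem:constrained_quadratic_program} with $(\beta_u)_i=c_i^2$, the Riesz lower bound for $\{w_i\otimes w_i\}$ that follows from the frame condition \eqref{eq:frame_condition_1} (cf.\ \cite[Lemma~23]{fornasier2019robust}), and the upper frame bound $\sum_j c_j^2\le 1+\nu$, one gets $\Phi_{\CW}(u)\le\N{G^{-1}}\sum_j c_j^4\le\tfrac{1+\nu}{1-\nu}\,c_\ast^2$; passing from $\Phi_{\CW}$ to $\Phi_{\widehat\CW}$ adds an $O(\delta)$ term, so that $\Phi\le\tfrac{1+\nu}{1-\nu}c_\ast^2+O(\delta)$.

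Next I would derive the lower bound via a tailored test direction. Picking $j$ with $\abs{c_j}=c_\ast$ and setting $v:=(w_j-c_j u)/\sqrt{1-c_\ast^2}\perp u$, I substitute into the second-order inequality. The cross terms simplify using the first-order relation: $\langle P_{\widehat\CW}(u\otimes w_j),M\rangle=u^\top M w_j=\Phi c_j$ exactly, while $w_j^\top M w_j=c_\ast^2+O(\delta)$ and $\N{P_{\widehat\CW}(u\otimes w_j)}_F^2\ge c_\ast^2-O(\delta)$ follow from $\N{P_{\widehat\CW}-P_{\CW}}\le\delta$ together with $w_j\otimes w_j\in\CW$. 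After dividing by $1-c_\ast^2$ this yields $c_\ast^2(3-2\Phi)\le\Phi+O(\delta)$, i.e.\ $c_\ast^2\le\tfrac{\Phi+O(\delta)}{3-2\Phi}$ (the degenerate case $c_\ast=1$ gives $\Phi\ge1-O(\delta)$ directly and needs no test vector).

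Finally I would combine the two estimates into the self-referential inequality $\Phi\le\tfrac{1+\nu}{1-\nu}\cdot\tfrac{\Phi+O(\delta)}{3-2\Phi}+O(\delta)$. If $\Phi\le 9\tfrac{1+\nu}{1-\nu}\delta$ the first alternative of \eqref{eq:result_thm_stability_vector_frame} holds; otherwise $\Phi$ dominates the stray $O(\delta)$ terms, so I may divide through and solve for $\Phi$, which forces $3-2\Phi\le\tfrac{1+\nu}{1-\nu}+O(\delta)$ and hence the second alternative $\Phi\ge1-6\tfrac{\nu}{1+\nu}-18\delta$. The smallness conditions \eqref{eq:assumption_nu_delta} are exactly what is needed to keep every denominator positive and to make the case split meaningful. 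I expect the main obstacle to be the bookkeeping in the test-vector step: tracking the perturbation error $\delta$ through all cross terms with constants tight enough to hit the prescribed thresholds, and establishing the sharp Riesz/Gram bound $\N{G^{-1}}\le(1-\nu)^{-1}$ on which the upper estimate rests.
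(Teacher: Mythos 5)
Your first two estimates are correct, and I verified the computations behind them: the Gram-matrix bound $\Phi_{\widehat \CW}(u) \leq \frac{1+\nu}{1-\nu}c_\ast^2 + \delta$, where $c_\ast := \max_j \SN{\langle u, w_j\rangle}$ (via Lemma \ref{lem:constrained_quadratic_program}, Gershgorin, and the frame condition \eqref{eq:frame_condition_1}, exactly the technique used in the proof of Proposition \ref{prop:existence_of_local_maximizer}), and the test-vector inequality $c_\ast^2(3-2\Phi) \leq \Phi + 3\delta$ for $\Phi := \Phi_{\widehat\CW}(u)$ (your orthogonalized direction $v=(w_j-c_ju)/\sqrt{1-c_\ast^2}$ reproduces precisely the extended second-order condition \eqref{eq:second_order_optimality_exended} of Remark \ref{rem:derived_second_order_optimality} evaluated at $w_j$). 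Your route does differ from the paper's in a useful way: the paper expands $P_{\widehat\CW}(u\otimes u)$ in the basis $\{P_{\widehat\CW}(w_i\otimes w_i)\}$, takes the index of the largest coefficient $\SN{\sigma_{j^*}}$, and needs the technical Lemma \ref{lem:technical_relation} to convert that into the upper bound \eqref{eq:bound_lambda_1}; choosing instead the maximal-correlation index lets you bypass Lemma \ref{lem:technical_relation} entirely, and your final quadratic lives in $\Phi$ itself rather than in $\langle u, w_1\rangle^2$.

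The genuine gap is in your last step. From $(\Phi-\delta)(3-2\Phi)\leq r(\Phi+3\delta)$ with $r:=\frac{1+\nu}{1-\nu}$, ``dividing through'' gives $3-2\Phi \leq r + \frac{3(r+1)\delta}{\Phi}$, and when $\Phi$ is only known to exceed $9r\delta$ the stray term $\frac{3(r+1)\delta}{\Phi}$ is \emph{not} $O(\delta)$ but a constant of size roughly $\frac{r+1}{3r}\approx \frac{2}{3}$. This only yields $\Phi \gtrsim \frac{2}{3}-O(\nu)$, far short of the claimed $1-\frac{6\nu}{1+\nu}-18\delta$: your case split at $9r\delta$ leaves the whole intermediate band $\left(9r\delta,\, \mathrm{const}\right)$ unexcluded. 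The repair is to not divide at all but to treat the combined estimate as the quadratic inequality $2\Phi^2-(3-r+2\delta)\Phi+3(r+1)\delta\geq 0$ and argue that no local maximizer can have $\Phi$ strictly between its two roots $\Phi_\pm = \frac{1}{4}\bigl((3-r+2\delta)\pm\sqrt{(3-r+2\delta)^2-24(r+1)\delta}\,\bigr)$. This is exactly where assumption \eqref{eq:assumption_nu_delta} must enter: since $3-r=\frac{2(1-2\nu)}{1-\nu}$ and $1-\frac{3\nu}{1+\nu}=\frac{1-2\nu}{1+\nu}$, the hypothesis $\delta<\frac{1}{22}\bigl(\frac{1-2\nu}{1+\nu}\bigr)^2$ guarantees the discriminant is positive, and one then bounds $\Phi_-\leq \frac{6(r+1)\delta}{3-r}=\frac{6\delta}{1-2\nu}$ and $\Phi_+\geq \frac{1-2\nu}{1-\nu}-\frac{6\delta}{1-2\nu}$. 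For $\nu\leq 1/3$ these dominate the constants $9\frac{1+\nu}{1-\nu}\delta$ and $1-\frac{6\nu}{1+\nu}-18\delta$ appearing in \eqref{eq:result_thm_stability_vector_frame}, while for $\nu>1/5$ the second alternative of \eqref{eq:result_thm_stability_vector_frame} is negative and the theorem is vacuous, so the residual constant mismatch is harmless. With this replacement your argument closes (and in fact gives a slightly stronger second alternative, $\frac{1-2\nu}{1-\nu}$ in place of $1-\frac{6\nu}{1+\nu}$); note this is structurally parallel to the paper's own ending, whose quadratic in $\langle u,w_1\rangle^2$ with $c_1=\frac{3\nu}{1+\nu}$, $c_2=\frac{11}{2}\delta$ uses \eqref{eq:assumption_nu_delta} precisely as its discriminant condition.
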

\begin{proof}
The proof is more involved and given at the end of this section for improved readibility.
\end{proof}

\begin{prop}
\label{prop:existence_of_local_maximizer}
Let $\{w_i : i \in [K]\}\subset \bbS^{D-1}$ satisfy \eqref{eq:frame_condition_1} and denote $\CW := \{w_i \otimes w_i : i \in [K]\}$. Let $\widehat \CW\subseteq \textrm{Sym}(\bbR^{D\times D})$
be a subspace with $\N{P_{\widehat \CW} - P_{\CW}}_{2\rightarrow 2} \leq \delta$, and assume $\nu$ and $\delta$ satisfy
$$
4\nu+\nu^2 + 6\delta\leq 1.
$$
For each $i \in [K]$ there exists a local maximizer $u_i^*$ of $\Phi_{\widehat \CW}$ with $\Phi_{\widehat \CW}(u_i^*) \geq 1-\delta$ within the cap
$$
U_i := \left\{u \in \bbS^{D-1}: \langle u, w_i\rangle \geq \sqrt{(1-3\delta)\frac{1-\nu}{1+\nu}}\right\}.
$$
\end{prop}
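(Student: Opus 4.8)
The plan is to obtain the asserted local maximizer as the global maximizer of $\Phi_{\widehat\CW}$ over the \emph{closed} cap $U_i$, and then to certify that this maximizer lies in the relative interior of $U_i$, so that it is automatically a constrained local maximizer of $\Phi_{\widehat\CW}$ over all of $\bbS^{D-1}$. Since $U_i$ is a compact subset of $\bbS^{D-1}$ and $\Phi_{\widehat\CW}$ is continuous, a maximizer $u_i^\ast \in U_i$ exists by the Weierstrass theorem. The argument then reduces to comparing two numbers: the value of $\Phi_{\widehat\CW}$ at the interior point $w_i$ (a lower bound for $\max_{U_i}\Phi_{\widehat\CW}$) against the supremum of $\Phi_{\widehat\CW}$ over the boundary $\partial U_i = \{u : \langle u,w_i\rangle = c\}$, where $c := \sqrt{(1-3\delta)\tfrac{1-\nu}{1+\nu}}$.

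For the interior value, I would use that $w_i\otimes w_i \in \CW$ has unit Frobenius norm and $P_{\CW^\perp}(w_i\otimes w_i) = 0$, so that $\Phi_{\widehat\CW}(w_i) = 1 - \N{P_{\widehat\CW^\perp}(w_i\otimes w_i)}_F^2 = 1 - \N{(P_\CW - P_{\widehat\CW})(w_i\otimes w_i)}_F^2 \geq 1-\delta^2$, invoking $\N{P_\CW - P_{\widehat\CW}}_{2\rightarrow2}\leq\delta$ together with $P_{\widehat\CW^\perp}-P_{\CW^\perp}=P_\CW-P_{\widehat\CW}$. In particular $\max_{U_i}\Phi_{\widehat\CW}\geq 1-\delta^2\geq 1-\delta$, which already gives the claimed value bound once interiority is established.

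The crux is the boundary estimate, where I want $\Phi_{\widehat\CW}(u) < 1-\delta^2$ for every $u\in\partial U_i$. By the perturbation inequality $\N{P_{\widehat\CW}(u\otimes u)}_F \leq \N{P_\CW(u\otimes u)}_F + \delta$ it suffices to show $\Phi_\CW(u) \leq 1-3\delta$ on $\partial U_i$, since then $\Phi_{\widehat\CW}(u) \leq (\sqrt{1-3\delta}+\delta)^2 \leq 1-\delta+\delta^2 < 1-\delta^2$ for $\delta<\tfrac12$ (guaranteed by $4\nu+\nu^2+6\delta\leq1$). To bound $\Phi_\CW(u)$ I would invoke Lemma \ref{lem:constrained_quadratic_program}, writing $\Phi_\CW(u)=\beta^\top G^{-1}\beta$ with $\beta_i=\langle u,w_i\rangle^2$ and $G_{ij}=\langle w_i,w_j\rangle^2$. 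The frame condition \eqref{eq:frame_condition_1} supplies two facts: the upper bound $\sum_j\beta_j\leq 1+\nu$, and the Gram-matrix lower bound $\sigma_{\min}(G)\geq 1-\nu$, which follows from $\tau^\top G\tau=\N{\sum_j\tau_j\,w_j\otimes w_j}_F^2$ combined with \eqref{eq:frame_condition_1}, in the spirit of the linear-independence argument in \cite{fornasier2019robust}. Hence $\Phi_\CW(u)\leq\N{\beta}_2^2/(1-\nu)$. The last ingredient is that $\max_j\beta_j = c^2$ on $\partial U_i$: one has $\beta_i=c^2$, and the hypothesis $4\nu+\nu^2+6\delta\leq1$ is precisely what forces $2c^2\geq 1+\nu$, so that for $j\neq i$ the estimate $\beta_j\leq(1+\nu)-\beta_i\leq c^2$ holds (equivalently, the caps $U_1,\dots,U_K$ are pairwise disjoint). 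Therefore $\N{\beta}_2^2\leq(\max_j\beta_j)(\sum_j\beta_j)\leq c^2(1+\nu)=(1-3\delta)(1-\nu)$, and $\Phi_\CW(u)\leq 1-3\delta$ as needed.

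Combining the three estimates, the boundary values are strictly below $\Phi_{\widehat\CW}(w_i)$, so the maximizer $u_i^\ast$ cannot lie on $\partial U_i$; it is thus a constrained local maximizer of $\Phi_{\widehat\CW}$ over $\bbS^{D-1}$ with $\Phi_{\widehat\CW}(u_i^\ast)\geq\Phi_{\widehat\CW}(w_i)\geq 1-\delta$. The main obstacle I anticipate is the boundary step, and within it the Gram-matrix lower bound $\sigma_{\min}(G)\geq 1-\nu$: extracting exactly the constant $1-\nu$ (rather than something weaker) from the frame condition is what makes the reverse-engineered cap radius $c=\sqrt{(1-3\delta)\tfrac{1-\nu}{1+\nu}}$ collapse the estimate to the clean value $1-3\delta$, and thereby matches the slack in the hypothesis $4\nu+\nu^2+6\delta\leq 1$.
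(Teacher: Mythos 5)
Your proposal is correct and follows essentially the same route as the paper's proof: maximize $\Phi_{\widehat\CW}$ over the compact cap $U_i$, lower-bound the value at the center $w_i$, and upper-bound the boundary values via the Gram-matrix reformulation of Lemma \ref{lem:constrained_quadratic_program}, the bound $\N{G^{-1}}_2 \le (1-\nu)^{-1}$ (your quadratic-form/diagonal-dominance argument is equivalent to the paper's Gershgorin step), and the hypothesis $4\nu+\nu^2+6\delta\le 1$ to make $\beta_i$ the dominant coordinate on $\partial U_i$. The only differences are cosmetic constants: you obtain $1-\delta^2$ at the center (Pythagoras) and $1-\delta+\delta^2$ on the boundary, where the paper obtains $1-\delta$ and $1-2\delta$, respectively, and both versions yield the strict boundary-versus-center gap whenever $\delta>0$ (the case $\delta=0$ being trivial since then $\widehat\CW=\CW$).
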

\begin{proof}
Let $i \in [K]$ arbitrary. We can prove the statement essentially by showing the inequality
$\textN{P_{\widehat \CW}(u\otimes u)}_F^2 < \textN{P_{\widehat\CW}(w_i\otimes w_i)}_F^2$ for all $u \in \partial U_i$ because the compactness
of $U_i$ implies the existence of a global maximizer on $U_i$, and with
$\textN{P_{\widehat \CW}(u\otimes u)}_F^2 < \N{P_{\CW}(w_i\otimes w_i)}_F^2$ for all $u \in \partial U_i$,
the global maximizer is contained in $U_i\setminus \partial U_i$. Hence, it must be
a local maximizer of $u\mapsto \textN{P_{\widehat \CW}(u\otimes u)}_F^2$ over $\bbS^{D-1}$.

\noindent
First note that the perturbed objective at $w_i$ is bounded from below by
\begin{align*}
\N{P_{\widehat \CW}(w_i \otimes w_i)}_F^2 &= \N{P_{\CW}(w_i \otimes w_i)}_F^2 - \N{P_{\CW}(w_i \otimes w_i)}_F^2 - \N{P_{\widehat \CW}(w_i \otimes w_i)}_F^2\\
&= 1 - \SN{\N{P_{\CW}(w_i \otimes w_i)}_F^2 + \N{P_{\widehat \CW}(w_i \otimes w_i)}_F^2}\\
&=1 - \SN{\langle w_i \otimes w_i, (P_{\CW}-P_{\widehat \CW})(w_i \otimes w_i)\rangle} \geq 1-\delta.
\end{align*}
In the last equation we used the self-adjointness of the orthoprojector and in the inequality we applied Cauchy-Schwarz inequality.
Next we establish an upper bound for $\textN{P_{\widehat \CW}(u\otimes u)}_F^2$ for $u \in U_i$.
We note that with the Grammian matrix $G_{ij} := \langle w_i, w_j\rangle^2$ of $\{w_i \otimes w_i : i \in [K]\}$ and the vectors $\beta_i := \langle u, w_i\rangle^2$,
the unperturbed objective can be written as $\textN{P_{\CW}(u\otimes u)}_F^2 = \beta^\top G^{-1}\beta$ according to
Lemma \ref{lem:constrained_quadratic_program}. Furthermore, Gershgorin's circle Theorem implies for any eigenvalue $\sigma$ of $G$ that there exists
$j \in [K]$ such that
\begin{align*}
\SN{1-\sigma} = \SN{G_{jj}-\sigma} \leq \sum_{\ell \neq j}^{K}\langle w_\ell, w_j\rangle^2 \leq \nu.
\end{align*}
Therefore, $\N{G^{-1}}_2 \leq (1-\nu)^{-1}$ and the noisefree objective can be bounded by
$$
\N{P_{\CW}(u\otimes u)}_F^2 = \beta^\top G^{-1}\beta \leq \N{G^{-1}}_2 \N{\beta}_2^2 \leq \frac{1}{1-\nu}\N{\beta}_2^2
\leq \frac{1}{1-\nu}\N{\beta}_{\infty} \N{\beta}_1 \leq \frac{1+\nu}{1-\nu}\N{\beta}_{\infty}.
$$
In the last inequality we used frame condition \eqref{eq:frame_condition_1}, which implies $\N{\beta}_1 = \sum_{i=1}^{K}\langle u,w_i\rangle^2 \leq 1+\nu$.
To bound $\N{\beta}_{\infty}$ note first that
\begin{align*}
\max_{j\neq i}\beta_j \leq \sum_{j\neq i}\beta_j
= \sum_{j=1}^{K}\langle u, w_j\rangle^2  - \langle u, w_i\rangle^2 \leq 1+\nu - (1-3\delta)\frac{1-\nu}{1+\nu}
\leq \frac{3\nu  + \nu^2+ 3\delta}{1+\nu},
\end{align*}
which implies with the assumption $4\nu+\nu^2 + 6\delta \leq 1$
\begin{align*}
\beta_i = \langle u, w_i\rangle^2 \geq (1-3\delta)\frac{1-\nu}{1+\nu} \geq \frac{1-3\delta-\nu}{1  +\nu} \geq
\frac{1-6\delta-\nu+3\delta}{1  +\nu} \geq \max_{j\neq i}\beta_j.
\end{align*}
Hence, $\N{\beta}_{\infty} = \langle u, w_i\rangle^2$. It follows that for any $u \in U_i$ we get
\begin{align*}
\N{P_{\widehat \CW}(u \otimes u)}_F^2 &\leq \N{P_{\CW}(u \otimes u)}_F^2  + \SN{\N{P_{\widehat \CW}(u \otimes u)}_F^2 -\N{P_{\CW}(u \otimes u)}_F^2 }\\
&\leq \frac{1+\nu}{1-\nu}\N{\beta}_{\infty} + \delta \leq \frac{1+\nu}{1-\nu}\langle u, w_i\rangle^2 + \delta .
\end{align*}
Comparing $\N{P_{\widehat \CW}(w_i \otimes w_i)}_F^2$ and $\N{P_{\widehat \CW}(u \otimes u)}_F^2$ for $u \in \partial U_i$, where
$\langle u, w_i\rangle^2 = \frac{1-\nu}{1+\nu}(1-3\delta)$, yields
\begin{align*}
\N{P_{\widehat \CW}(u \otimes u)}_F^2 \leq  \frac{1 + \nu}{1-\nu}\langle u, w_i\rangle^2  + \delta = 1-2\delta < 1-\delta\leq \N{P_{\widehat \CW}(w_i \otimes w_i)}_F^2.
\end{align*}
\end{proof}

\noindent
Before proving Theorem \ref{thm:stability_local_extrema_vector_frame}, we also show
that spurious local maximizers in the level set $\{u \in \bbS^{D-1} : \Phi_{\widehat \CW}(u) \leq \delta^2\}$
can not be avoided under our deterministic noise model. Fortunately however, the objective
value $\Phi_{\widehat \CW}(u)$ acts as a certificate for whether
we found a spurious local maximizer or a vector $u$ that is close to one of the
spanning elements of $\CW$, and thus we can discard spurious solution in practice by checking
the objective value. We also add that iteration \eqref{eq:iteration_in_theorem} seems to avoid
spurious local maximizers in practice when initializing with $u_0 \sim \Uni{\bbS^{D-1}}$.

\begin{lem}[Perturbations induce spurious local maximizer]
\label{lem:perturbations_spurious_local_max}
Let $w \in \bbS^{D-1}$ and  $\CW = \Span{w \otimes w}$. There exists a subspace $\widehat \CW\subset \textrm{Sym}(\bbR^{D\times D})$ with
$\N{P_{\CW} - P_{\widehat \CW}}_{2\rightarrow 2} \leq 2\sqrt{\delta}$ so that
$\Phi_{\widehat \CW}$ has a constrained local maximizer with objective value  $\delta$.
\end{lem}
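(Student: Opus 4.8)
The plan is to build $\widehat\CW$ as a \emph{one-dimensional} subspace. First I would note that, since $\dim\CW = 1$ and the spectral-norm gap between orthogonal projections onto subspaces of different dimension equals $1$, the requirement $\N{P_\CW - P_{\widehat\CW}}_{2\rightarrow 2} \le 2\sqrt\delta < 1$ (for $\delta$ small) forces $\dim\widehat\CW = 1$ as well. Writing $\widehat\CW = \Span{A}$ for a unit-Frobenius symmetric matrix $A$, the objective collapses to $\Phi_{\widehat\CW}(u) = \langle u\otimes u, A\rangle_F^2 = (u^\top A u)^2$, whose constrained stationary points over $\bbS^{D-1}$ are exactly the eigenvectors of $A$, and whose constrained \emph{local maximizers} are the eigenvectors attaining the largest and the smallest (most negative) eigenvalues. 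The key idea is therefore to choose $A$ with a large positive eigenvalue along $w$ — which keeps $\widehat\CW$ close to $\CW = \Span{w\otimes w}$ — together with a small \emph{negative} eigenvalue $-\sqrt\delta$ along a direction orthogonal to $w$, so that this direction becomes a genuine local maximizer with objective value exactly $\delta$.

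Concretely, since $D\ge 2$ I would pick a unit vector $v \perp w$ and set
\begin{equation*}
A := \sqrt{1-\delta}\, (w\otimes w) - \sqrt\delta\, (v\otimes v), \qquad \widehat\CW := \Span{A}.
\end{equation*}
Because $\langle w\otimes w, v\otimes v\rangle_F = \langle w,v\rangle^2 = 0$, the two rank-one matrices are Frobenius-orthonormal, so $\N{A}_F = 1$ and $\langle A, w\otimes w\rangle_F = \sqrt{1-\delta}$. Using the standard formula for the gap between two rank-one orthogonal projections in the Frobenius Hilbert space, $\N{P_\CW - P_{\widehat\CW}}_{2\rightarrow 2} = \sqrt{1 - \langle A, w\otimes w\rangle_F^2} = \sqrt\delta \le 2\sqrt\delta$, which already establishes the proximity claim.

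It remains to verify that $v$ is a constrained local maximizer with $\Phi_{\widehat\CW}(v) = \delta$, which I would check directly against the optimality conditions of Lemma \ref{lem:optimality_conditions}. Since $P_{\widehat\CW}(Z) = \langle Z, A\rangle_F\, A$ and $v^\top A v = -\sqrt\delta$, one gets $P_{\widehat\CW}(v\otimes v) = -\sqrt\delta\, A$, hence $\Phi_{\widehat\CW}(v) = \N{-\sqrt\delta A}_F^2 = \delta$, and the stationarity identity \eqref{eq:stationary_point} reads $-\sqrt\delta\, Av = \delta v$, which holds because $Av = -\sqrt\delta\, v$. For the second-order condition \eqref{eq:second_order_optimality} I would take any $z \in \bbS^{D-1}$ with $z\perp v$: then $v^\top A z = \sqrt{1-\delta}(w^\top v)(w^\top z) - \sqrt\delta(v^\top z) = 0$, so the mixed term $\N{P_{\widehat\CW}(v\otimes z)}_F^2 = (v^\top A z)^2$ vanishes, while $z^\top P_{\widehat\CW}(v\otimes v)z = -\sqrt\delta\,(z^\top A z) = -\sqrt{\delta(1-\delta)}\,(w^\top z)^2 \le 0$. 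Thus the right-hand side of \eqref{eq:second_order_optimality} is nonpositive while the left-hand side equals $\delta > 0$, so the inequality holds and $v$ is a constrained local maximizer.

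The only real subtlety — which I would flag as the main pitfall rather than a genuine technical obstacle — is the \emph{sign} of the perturbing term. Tilting $A$ toward $+v\otimes v$ would make $v$ an eigenvector of a small \emph{positive} eigenvalue, which for the quartic $(u^\top Au)^2$ is merely a saddle point, since moving toward the dominant eigenvector $w$ strictly increases the objective. Taking the coefficient negative instead turns $v$ into the eigenvector of the most negative eigenvalue of $A$, which is exactly where $|u^\top Au|$, and hence $\Phi_{\widehat\CW}$, is locally maximized. Everything else is routine verification, and the construction even yields the sharper gap $\sqrt\delta$ in place of the claimed $2\sqrt\delta$.
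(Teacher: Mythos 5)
Your construction is exactly the paper's: the same matrix $\sqrt{1-\delta}\,(w\otimes w)-\sqrt{\delta}\,(v\otimes v)$ spanning $\widehat\CW$, with the same verification of the first- and second-order conditions of Lemma \ref{lem:optimality_conditions} at the orthogonal direction $v$. The only (harmless) difference is that you compute the projection gap exactly via the rank-one $\sin\theta$ formula, obtaining the sharper bound $\sqrt{\delta}$ where the paper settles for the triangle-inequality estimate $1-\sqrt{1-\delta}+\sqrt{\delta}\le 2\sqrt{\delta}$.
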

\begin{proof}
Choose $u \in \bbS^{D-1}$ with $u \perp w$ and define $M = \sqrt{1-\delta} w \otimes w - \sqrt{\delta} u \otimes u$
with the corresponding subspace $\widehat \CW := \Span{M}$. Taking arbitrary $A \in \bbR^{D\times D}$ with
$\N{A}_2 = 1$, the subspace perturbation between $\CW$ and $\widehat \CW$ can be upper bounded by
\begin{align*}
\N{P_{\CW}(A)-P_{\widehat \CW}(A)} &= \N{(1-\sqrt{1-\delta})\langle w\otimes w, A\rangle w\otimes w - \sqrt{\delta}\langle u\otimes u, A\rangle u\otimes u}\\
&\leq 1-\sqrt{1-\delta} + \sqrt{\delta} \leq 2\sqrt{\delta}.
\end{align*}
We now check that $u$ is a local maximizer of $\Phi_{\widehat \CW}$. The corresponding matrix appearing in $\Phi_{\widehat{\mathcal{W}}}(u)$ is
\begin{align*}
P_{\widehat \CW}(u\otimes u) = \langle M, u\otimes u\rangle M = -\sqrt{\delta} M = - \sqrt{\delta}\sqrt{1-\delta}w\otimes w + \delta u \otimes u,
\end{align*}
which shows that $u$ is an eigenvector of $P_{\widehat \CW}(u\otimes u)$ to eigenvalue $\delta = \N{P_{\widehat \CW}(u\otimes u)}_F^2$.
In other words, $u$ satisfies \eqref{eq:stationary_point} and is thus a stationary point.
Taking now any $q \perp u$, we have
\begin{align*}
2\N{P_{\widehat \CW}(u\otimes q)}_F^2 + q^\top P_{\widehat \CW}(u\otimes u)q
&=2 \langle M, u\otimes q\rangle^2 -\sqrt{\delta}\sqrt{1-\delta}\langle w, q\rangle^2 =-\sqrt{\delta}\sqrt{1-\delta}\langle w, q\rangle^2 < \delta,
\end{align*}
which shows that $u$ also satisfies constrained second-order optimality as in \eqref{eq:second_order_optimality}.
Hence, $u$ is a local maximizer with $\Phi_{\widehat \CW}(u) = \N{P_{\widehat{\CW}}(u \otimes u)}_F^2 = \delta$.
\end{proof}

\paragraph{Proof of Theorem \ref{thm:stability_local_extrema_vector_frame}}
Let us first show the following auxiliary result.

\begin{lem}
\label{lem:technical_relation}
Assume the settings of Theorem \ref{thm:nearly_rank_one_mat} and Theorem \ref{thm:stability_local_extrema_vector_frame}. Let $P_{\widehat\CW}(u\otimes u) = \sum_{i=1}^{K}\sigma_i
\widehat W_i$ for some $u \in \bbS^{D-1}$ and denote $\lambda_1 := \lambda_1(P_{\widehat\CW}(u\otimes u))$.
Furthermore, assume $1-\delta > \lambda_1 > 3\frac{1+\nu}{1-\nu}\delta$. For any index $j$ with
$\SN{\sigma_j} = \N{\sigma}_{\infty}$ we have $\sigma_j \geq 0$ and
\begin{align}
\label{eq:bound_fraction_inf_dot_1}
\sigma_j \leq \frac{1}{1-\nu}\langle u, w_j\rangle^2 + \frac{2}{1-\nu}\delta.
\end{align}
\end{lem}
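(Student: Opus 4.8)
The plan is to pass from the basis expansion $P_{\widehat\CW}(u\otimes u)=\sum_{i}\sigma_i\widehat W_i$ to the matrix $Z:=\sum_{i=1}^K\sigma_i\,w_i\otimes w_i\in\CW$ carrying the same coefficients. Since $\widehat W_i=P_{\widehat\CW}(w_i\otimes w_i)$, one has $P_{\widehat\CW}(Z)=P_{\widehat\CW}(u\otimes u)$, so that $N:=u\otimes u-Z$ lies in $\widehat\CW^\perp$. Exactly as in the proof of Theorem~\ref{thm:nearly_rank_one_mat}, the identity $P_\CW Z=Z$ together with $\N{P_{\widehat\CW}(u\otimes u)}_F\le1$ yields $\N{Z}_F\le(1-\delta)^{-1}$; and writing $N=(\Id-P_{\widehat\CW})(u\otimes u)-(P_\CW-P_{\widehat\CW})Z$ gives the analogous \emph{sharp} bound $\N{N}_F\le(1-\delta)^{-1}$. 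These two estimates are what keep the constants tight later, so I would establish them first.

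The coefficient $\sigma_j$ is then compared to the geometric quantity $\langle u,w_j\rangle^2$ through two controlled errors. On the one hand, because $N\perp\widehat\CW$ and $w_j\otimes w_j-\widehat W_j=(P_\CW-P_{\widehat\CW})(w_j\otimes w_j)$ has Frobenius norm at most $\delta$, projecting $w_j\otimes w_j$ onto $\widehat\CW^\perp$ gives $|\langle u,w_j\rangle^2-\langle Z,w_j\otimes w_j\rangle|=|\langle N,w_j\otimes w_j\rangle|\le\delta\N{N}_F\le\tfrac{\delta}{1-\delta}$. On the other hand, the frame condition \eqref{eq:frame_condition_1} at $x=w_j$ gives $\sum_{i\ne j}\langle w_i,w_j\rangle^2\le\nu$, whence $\langle Z,w_j\otimes w_j\rangle=\sigma_j+R_j$ with $|R_j|\le\nu\N{\sigma}_\infty=\nu|\sigma_j|$. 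Together these provide two-sided control of $\sigma_j$ by $\langle u,w_j\rangle^2$ up to an $O(\delta)$ and an $O(\nu|\sigma_j|)$ term.

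To pin down the sign I would argue by contradiction, assuming $\sigma_j<0$. Then $\langle Z,w_j\otimes w_j\rangle\le\sigma_j+\nu|\sigma_j|=(1-\nu)\sigma_j<0$, and since $\langle u,w_j\rangle^2\ge0$ the first error bound forces $(1-\nu)\N{\sigma}_\infty\le\tfrac{\delta}{1-\delta}$. Feeding this into an eigenvalue estimate is the crux: writing $\lambda_1=\langle P_{\widehat\CW}(u\otimes u),u_1\otimes u_1\rangle$ for the top eigenvector $u_1$, replacing $P_{\widehat\CW}(u\otimes u)=P_{\widehat\CW}(Z)$ by $Z$ at the cost of $\delta\N{Z}_F$, and bounding $|\langle Z,u_1\otimes u_1\rangle|\le(1+\nu)\N{\sigma}_\infty$ via the frame condition at $x=u_1$, yields $\lambda_1\le(1+\nu)\N{\sigma}_\infty+\tfrac{\delta}{1-\delta}\le\tfrac{2\delta}{(1-\delta)(1-\nu)}$. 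Because $3(1+\nu)(1-\delta)>2$ under the standing hypothesis $\delta<\tfrac1{22}$, this contradicts $\lambda_1>3\tfrac{1+\nu}{1-\nu}\delta$, so $\sigma_j\ge0$. The main obstacle is precisely this step: with the crude triangle bound $\N{N}_F\le1+\N{Z}_F\approx2$ the argument fails for small $\nu$, and only the sharp bounds $\N{N}_F,\N{Z}_F\le(1-\delta)^{-1}$ together with the exact threshold $3\tfrac{1+\nu}{1-\nu}\delta$ make the contradiction close.

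Finally, knowing $\sigma_j\ge0$, the lower estimate $\langle Z,w_j\otimes w_j\rangle\ge\sigma_j-\nu|\sigma_j|=(1-\nu)\sigma_j$ combined with $\langle Z,w_j\otimes w_j\rangle\le\langle u,w_j\rangle^2+\tfrac{\delta}{1-\delta}$ gives $(1-\nu)\sigma_j\le\langle u,w_j\rangle^2+\tfrac{\delta}{1-\delta}$; dividing by $1-\nu$ and using $\tfrac{1}{1-\delta}\le2$ (valid since $\delta<\tfrac12$) produces the claimed bound $\sigma_j\le\tfrac{1}{1-\nu}\langle u,w_j\rangle^2+\tfrac{2}{1-\nu}\delta$.
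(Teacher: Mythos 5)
Your proof is correct and follows essentially the same route as the paper's: the same matrix $Z=\sum_{i=1}^{K}\sigma_i\, w_i\otimes w_i$ with matching coefficients, the same frame-condition estimate $\SN{\langle Z,w_j\otimes w_j\rangle-\sigma_j}\le\nu\N{\sigma}_{\infty}$, the same contradiction argument (assuming $\sigma_j<0$, deducing $\N{\sigma}_{\infty}=O(\delta)$, and then bounding $\lambda_1$ below the threshold $3\tfrac{1+\nu}{1-\nu}\delta$), and the same final rearrangement giving $(1-\nu)\sigma_j\le\langle u,w_j\rangle^2+2\delta$. The only difference is bookkeeping: you control the perturbation via the residual $N=u\otimes u-Z\in\widehat\CW^{\perp}$, obtaining the sharper $\delta/(1-\delta)$ and invoking $\delta<\tfrac{1}{22}$ to close the contradiction, whereas the paper telescopes through $P_{\widehat\CW}(Z)$ and $P_{\CW}(u\otimes u)$ to get the cruder bound $2\delta$, whose constants match the threshold $3\tfrac{1+\nu}{1-\nu}\delta$ without any extra condition.
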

\begin{proof} Denote $M=P_{\widehat\CW}(u\otimes u)=\sum_{i=1}^{K}\sigma_i
\widehat W_i$ and
define $Z = \sum_{i=1}^{K}\sigma_i w_i \otimes w_i$ as the unique matrix in $\CW$ with $P_{\widehat \CW}(Z)=M$. (Recall that $\widehat W_i = P_{\widehat\CW}(w_i\otimes w_i)$.)
We first note that the statement is trivial if $\N{\sigma}_1 = 0$ since this implies
$\sigma_i = 0$ for all $i \in [K]$ and thus $M = 0$. So without loss of generality we may
assume $\N{\sigma}_{\infty} \neq 0$ in the following. We will first show
$\sigma_j > 0$ by contradiction, so let us assume $\sigma_j < 0$.
Using the frame condition \eqref{eq:frame_condition_1} and $\{w_i : i\in [K]\} \subset \bbS^{D-1}$
to show the estimate
\begin{align}
\label{eq:bound_dot_product_1}
\SN{\langle Z, w_j \otimes w_j\rangle - \sigma_j} =
\SN{\sum_{i\neq j}\sigma_i \langle w_i, w_j \rangle^2}\leq \N{\sigma}_{\infty}
\sum_{i\neq j} \langle w_i, w_j \rangle^2 \leq \N{\sigma}_{\infty} \nu,
\end{align}
and combining this with $\N{Z}_2 \leq (1-\delta)^{-1}\lambda_1(M) < 1$,
we can bound $\N{\sigma}_{\infty}$ by
\begin{equation}
\begin{aligned}
\label{eq:aux_bound_1}
\N{\sigma}_{\infty} &= \SN{\sigma_j} \leq \SN{\sigma_j - \langle u, w_j\rangle^2}\\
&\leq \SN{\sigma_j - \langle Z, w_j \otimes w_j\rangle} +
\SN{\langle Z, w_j \otimes w_j\rangle - \langle P_{\widehat\CW}(Z), w_j \otimes w_j\rangle}\\
&\qquad\qquad+ \SN{\langle P_{\widehat\CW}(u\otimes u), w_j \otimes w_j\rangle -  \langle P_{\CW}(u\otimes u), w_j \otimes w_j\rangle}\\
&\leq \nu\N{\sigma}_{\infty} + \N{Z}_2 \delta + \delta \leq \nu \N{\sigma}_{\infty} + 2\delta.
\end{aligned}
\end{equation}
Hence, $\N{\sigma}_{\infty}\leq \frac{2}{1-\nu}\delta$, which further implies
a bound on $\lambda_1$ by
\begin{align*}
\lambda_1 &= \max_{\N{v}_2 =1} v^\top P_{\widehat\CW}(u\otimes u)v = \max_{\N{v}_2 = 1}\sum_{i=1}^{K}\sigma_i \langle \widehat W_i, v \otimes v\rangle\\
&\leq \max_{\N{v}_2 = 1}\sum_{i=1}^{K} \sigma_i\langle w_i, v\rangle^2 +
\max_{\N{v}_2 = 1}\sum_{i=1}^{K}\sigma_i \langle \widehat W_i - w_i \otimes w_i, v\otimes v\rangle \\
&\leq \N{\sigma}_{\infty}(1+\nu) + \max_{\N{v}_2 = 1} \langle P_{\widehat \CW}(Z) - Z, v \otimes v\rangle \leq 2\frac{1+\nu}{1-\nu}\delta + \delta \leq 3\frac{1+\nu}{1-\nu}\delta.
\end{align*}
This contradicts the assumption of the statement and therefore $\sigma_j > 0$.
For the estimate   \eqref{eq:bound_fraction_inf_dot_1} we can reuse, from Equation \eqref{eq:aux_bound_1},
\begin{align*}
\SN{\sigma_j - \left\langle u, w_j\right\rangle^2} \leq  \N{\sigma}_{\infty} \nu + 2\delta = \sigma_j\nu + 2\delta,
\end{align*}
which implies
\begin{align*}
  (1-\nu)\sigma_j \leq \left\langle u, w_j\right\rangle^2 + 2\delta.
\end{align*}
\end{proof}

\paragraph{Proof of Theorem \ref{thm:stability_local_extrema_vector_frame}}
Let $\hat W_i := P_{\widehat \CW}(w_i\otimes w_i)$ and denote by $P_{\widehat \CW}(u \otimes u) = \sum_{i=1}^{K} \sigma_{i}\hat W_i$
the basis expansion of $P_{\widehat \CW}$ ordered according to $\sigma_1\geq \ldots \geq \sigma_K$. Since $P_{\widehat \CW}$
is a bijection from $\CW$ to $\widehat  \CW$ for $\delta < 1$, the matrix $Z := \sum_{i=1}^{K}\sigma_i w_i \otimes w_i$
is the unique element in $\CW$ with $P_{\widehat \CW}(Z) = P_{\widehat \CW}(u\otimes u)$.
The proof of Theorem \ref{thm:stability_local_extrema_vector_frame} leverages
the second-order optimality condition \eqref{eq:aux_second_order_conditions}
in Remark \ref{rem:derived_second_order_optimality} for $v = w_{j^*}$, where
$j^*$ is any index with $\textN{\sigma}_{\infty} = \textSN{\sigma_j^{*}}$, to construct a quadratic inequality
for $\langle w_{j^*}, u \rangle^2$, which can only be satisfied by $\langle w_{j^*}, u \rangle^2 \approx 1$ or $\langle w_{j^*}, u \rangle^2 \approx 0$.
These inequalities, combined with the fact that $\textN{\sigma}_{\infty} = \textSN{\sigma_{j^*}}$,
imply $\lambda_1 := \lambda_1(P_{\widehat\CW }(u \otimes u)) \approx 1$ or $\lambda_1 \approx 0$.
By the optimality conditions in Lemma \ref{lem:optimality_conditions}, we have
$\Phi_{\widehat \CW_f}(u) = \lambda_1$ and thus the same bounds are transferred to
the objective value.

So let $j^*$ be any index with $\textN{\sigma}_{\infty} = \textSN{\sigma_{j^*}}$. We first notice
that the statement is trivially true whenever
\begin{align*}
\lambda_1\leq 3\frac{1+\nu}{1-\nu}\delta \qquad \textrm{or}\qquad \lambda_1 \geq 1- \delta,
\end{align*}
which implies we can concentrate on the cases $1-\delta > \lambda_1 \geq 3\frac{1+\nu}{1-\nu}\delta$ in
the following. Under this condition, Lemma \ref{lem:technical_relation} implies
$\textSN{\sigma_{j^*}} = \sigma_{j^*} = \max_{i \in [K]}\sigma_i = \sigma_1$ (note the
ordering $\sigma_1 \geq \ldots \geq \sigma_K$). Furthermore,
using Lemma \ref{lem:technical_relation} and additionally the frame condition \eqref{eq:frame_condition_1},
we can estimate $\lambda_1$ in terms of $\langle u, w_1\rangle^2$ according to
\begin{equation}
\label{eq:bound_lambda_1}
\begin{aligned}
\lambda_1 &= u^\top P_{\widehat\CW}(u\otimes u) u = \langle Z, u\otimes u\rangle + \langle P_{\widehat \CW}(Z) - Z, u\otimes u\rangle \leq
\sum_{i=1}^{K}\sigma_i \langle w_i,u\rangle^2 + \N{P_{\widehat \CW}(Z) - Z}\\
&\leq \sigma_1 (1+\nu) + \N{Z}_2\delta \leq \frac{1+\nu}{1-\nu}\langle u, w_1\rangle^2 + 3\frac{1+\nu}{1-\nu}\delta,
\end{aligned}
\end{equation}
where we used $\N{Z}_2 \leq (1-\delta)^{-1}\lambda_1 < 1$ and $1 \leq \frac{1+\nu}{1-\nu}$ in the last inequality.
Using the perturbation estimates
\begin{align}
\label{eq:easy_perturbation_estimates}
\SN{\N{P_{\widehat \CW}(u\otimes w_j)}_F^2 - \N{P_{\CW}(u\otimes w_j)}_F^2} &= \SN{\langle P_{\widehat \CW}(u\otimes w_j) - P_{\CW}(u\otimes w_j), u\otimes w_j\rangle} \leq \delta,\\
\SN{w_j^\top P_{\widehat \CW}(u\otimes u)w_j - \langle w_j, u\rangle^2} &=\SN{\langle P_{\widehat \CW}(u\otimes u)-P_{\CW}(u\otimes u), w_j\otimes w_j\rangle}\leq \delta,
\end{align}
which hold for any $j \in [K]$, the optimality condition \eqref{eq:aux_second_order_conditions} with $v = w_1$ implies
\begin{align*}
\lambda_1 &\geq 2\N{P_{\widehat \CW}(u\otimes w_1)}_F^2 - 2\lambda_1 \langle w_1, u\rangle^2 + w_1^\top P_{\widehat \CW}(u \otimes u)w_1\\
&\geq 2\N{P_{\CW}(u\otimes w_1)}_F^2- 2\lambda_1 \langle w_1, u\rangle^2 + \langle w_1, u\rangle^2 - 2\delta.
\end{align*}
Furthermore by $\N{P_{\CW}(u\otimes w_1)}_F^2 \geq \N{P_{\CW}(u\otimes w_1)}^2 \geq \langle u, w_1\rangle^2$
and the bound \eqref{eq:bound_lambda_1} for  $\lambda_1$, this becomes
\begin{align*}
\frac{1+\nu}{1-\nu}\langle u, w_1\rangle^2 &\geq 3 \langle u, w_1\rangle^2 - 2 \frac{1+\nu}{1-\nu}\langle u, w_1\rangle^4 - 6  \frac{1+\nu}{1-\nu}\langle u, w_1\rangle^2 \delta - 3\frac{1+\nu}{1-\nu}\delta- 2\delta.
\end{align*}
After dividing by $\frac{1+\nu}{1-\nu}$ and simplifying the terms we obtain
\begin{align*}
\langle u, w_1\rangle^2  &\geq 3\frac{1-\nu}{1+\nu}\langle u, w_1\rangle^2 - 2\langle u, w_1\rangle^4 - 11\delta,\\
\textrm{hence}\qquad 0 &\geq \left(1-\frac{3\nu}{1+\nu}\right)\langle u, w_1\rangle^2 - \langle u, w_1\rangle^4 - 11/2\delta\\
&= (1-c_1)\langle u, w_1\rangle^2  - \langle u, w_1\rangle^4 - c_2,
\end{align*}
with constants $c_1 := \frac{3\nu}{1+\nu}$ and $c_2 := 11/2\delta$.
This quadratic inequality for $\langle u, w_1\rangle^2$ has the solutions
\begin{align*}
\langle u, w_1\rangle^2 \leq c_2 = 11/2\delta,\quad \textrm{ and }\quad \langle u, w_1\rangle^2 \geq 1 - c_1 - c_2 - (c_1 + 2c_2)^2 \geq 1-2c_1 - 3c_2,
\end{align*}
provided that $\delta < \frac{1}{22}(1-c_1)^2$ and $c_1 + 2c_2 < 1$ as implied by the condition \eqref{eq:assumption_nu_delta} in
the statement. In the first case, where $\langle u, w_1\rangle^2  \leq 11/2\delta$,
the estimate for $\lambda_1$ in \eqref{eq:bound_lambda_1} gives
$$
\lambda_1 \leq \frac{1+\nu}{1-\nu}\langle u, w_1\rangle^2 + 3\frac{1+\nu}{1-\nu}\delta \leq 11/2 \frac{1+\nu}{1-\nu}\delta + 3\frac{1+\nu}{1-\nu}\delta \leq 9\frac{1+\nu}{1-\nu}\delta.
$$
On the other hand, the second case implies $\langle u, w_1\rangle^2 \geq 1 - 2c_1 - 3c_2$ and therefore
\begin{align*}
\lambda_1 &\geq w_1^\top P_{\widehat\CW}(u \otimes u)w_1 = w_1^\top P_{\CW}(u \otimes u)w_1 + w_1^\top\left(P_{\widehat \CW}(u \otimes u) - P_{\CW}(u \otimes u)\right)w_1\\
&\geq \langle u, w_1\rangle^2 - \delta \geq 1 - 2c_1 - 3c_2 - \delta \geq 1 - 6\frac{\nu}{1+\nu} - 35/2\delta.
\end{align*}
\qed

\section{Conclusion and outlook}
\label{sec:conclusion}

We provided a novel reparametrization of feedforward deep neural networks in terms of so-called entangled certain weights, which
uniquely identify the network apart from a few undetermined shift and scaling parameters.
The entangled weights are formed by linear mixtures of weights of successive layers and they are natural components of rank-1 symmetric decompositions of Hessians of the network.
By sampling approximate Hessians in a concentrated manner, it is possible to build the subspace spanned by $2$-tensors of entangled weights and to identify the entangled weights by means of the subspace power method. We provided proofs of stable recovery of entangled weights given $\mathcal O(D^2 \times m)$ input-output samples of the network, where $D$ is the input dimension and $m$ is the total number of the neurons of the network. Empirically, our approach works well for generic networks whose total number of neurons $m$ is less than $D \times m_L$, where $m_L$ is the number of outputs. In order to give a context to our results, we showed how the stable recovery of entangled weights allows  near exact identification of generic networks of pyramidal shape. We provided numerical demonstrations of exact identification for networks with up to $L=5$ hidden layers. However, this is by no means a limitation as recovering larger and deeper networks is exclusively matter of additional  computational efforts. To our knowledge there are no results of the kind in the literature. In our algorithmic pipeline for full and stable recovery of networks, some steps are remaining to be mathematically proven and further investigations are needed:
\begin{itemize}
\item  With Algorithms  \ref{alg:clustering} - \ref{alg:detect_last_layer} we provided the assignment of entangled weights to respective layers for networks of depth $L \leq 3$. These procedures need to have knowledge of the number of entangled weights per layer. The identification of the network architecture from the knowledge of the entangled weights and their assignment to layers remain open problems for deeper networks.
\item The use of gradient descent for the recovery of scaling and shift parameters by   minimization of \eqref{eq:new_GD_recover_network_functional} proves to be a robust and reliable method. We did not present in this paper theoretical guarantees of this last empirical risk minimization. Nevertheless its empirical success can be explained - at least locally - by (nested) linearizations of the mean-squared error \eqref{eq:new_GD_recover_network_functional} around the correct parameters and by showing that the linearization is actually uniquely and stably solvable for some probabilistic models of the parameters, see \cite{nguyen2020global,oymak2019moderate,DBLP:journals/corr/SoltanolkotabiJ17} for related techniques. We postpone the detailed analysis to a follow up paper.
\end{itemize}
Learned deep neural networks generalize very well despite being trained with a number of training samples that is significantly lower than the number of parameters. This surprising phenomenon goes against traditional wisdom, which attributes overparametrization with overfitting and thus poor generalization. Evidence suggests that the implicit bias of optimization methods is  in fact towards low-complexity networks (e.g., low-rank weights). The proposed robust identification pipeline can leverage the intrinsic low complexity of trained nonlinear networks to design novel algorithms for their compression. Moreover, it may be exploited as a new training method, which directly pursues and outputs networks of minimal parameters, rather than training overparametrized networks and then operating a compression.

\appendix

\bibliographystyle{abbrv}


\begin{thebibliography}{10}

\bibitem{absil2009optimization}
P.-A. Absil, R.~Mahony, and R.~Sepulchre.
\newblock {\em Optimization algorithms on matrix manifolds}.
\newblock Princeton University Press, 2009.

\bibitem{Albertini93uniquenessof}
F.~Albertini, E.~D. Sontag, and V.~Maillot.
\newblock Uniqueness of weights for neural networks.
\newblock In {\em Artificial Neural Networks with Applications in Speech and
  Vision}, pages 115--125. Chapman and Hall, 1993.

\bibitem{Anandkumar2014GuaranteedNT}
A.~Anandkumar, R.~Ge, and M.~Janzamin.
\newblock Guaranteed non-orthogonal tensor decomposition via alternating
  rank-$1 $ updates.
\newblock {\em arXiv preprint arXiv:1402.5180}, 2014.

\bibitem{arora2019convergence}
S.~Arora, N.~Cohen, N.~Golowich, and W.~Hu.
\newblock A convergence analysis of gradient descent for deep linear neural
  networks.
\newblock {\em arXiv preprint arXiv:1810.02281}, 2018.

\bibitem{NIPS2019_8960}
S.~Arora, N.~Cohen, W.~Hu, and Y.~Luo.
\newblock Implicit regularization in deep matrix factorization.
\newblock In {\em Advances in Neural Information Processing Systems}, pages
  7413--7424, 2019.

\bibitem{bah2019}
B.~Bah, H.~Rauhut, U.~Terstiege, and M.~Westdickenberg.
\newblock Learning deep linear neural networks: Riemannian gradient flows and
  convergence to global minimizers.
\newblock {\em arXiv preprint arXiv:1910.05505}, 2019.

\bibitem{Berner2020}
J.~Berner, P.~Grohs, and A.~Jentzen.
\newblock Analysis of the generalization error: Empirical risk minimization
  over deep artificial neural networks overcomes the curse of dimensionality in
  the numerical approximation of black--scholes partial differential equations.
\newblock {\em SIAM Journal on Mathematics of Data Science}, 2(3):631–657,
  Jan 2020.

\bibitem{bhatia2013matrix}
R.~Bhatia.
\newblock {\em Matrix analysis}, volume 169.
\newblock Springer Science \& Business Media, 2013.

\bibitem{BLUM1992117}
A.~L. Blum and R.~L. Rivest.
\newblock Training a 3-node neural network is np-complete.
\newblock {\em Neural Networks}, 5(1):117 -- 127, 1992.

\bibitem{Blcskei2019OptimalAW}
H.~B{\"o}lcskei, P.~Grohs, G.~Kutyniok, and P.~Petersen.
\newblock Optimal approximation with sparsely connected deep neural networks.
\newblock {\em SIAM Journal on Mathematics of Data Science}, 1:8--45, 2019.

\bibitem{BUHMANN1999103}
M.~D. Buhmann and A.~Pinkus.
\newblock Identifying linear combinations of ridge functions.
\newblock {\em Advances in Applied Mathematics}, 22(1):103 -- 118, 1999.

\bibitem{10.1145/1970392.1970395}
E.~J. Cand{\`e}s, X.~Li, Y.~Ma, and J.~Wright.
\newblock Robust principal component analysis?
\newblock {\em Journal of the ACM}, 58(3):1--37, 2011.

\bibitem{CHUI1992131}
C.~K. Chui and X.~Li.
\newblock Approximation by ridge functions and neural networks with one hidden
  layer.
\newblock {\em Journal of Approximation Theory}, 70(2):131 -- 141, 1992.

\bibitem{cloninger2020relu}
A.~Cloninger and T.~Klock.
\newblock Relu nets adapt to intrinsic dimensionality beyond the target domain.
\newblock {\em arXiv preprint arXiv:2008.02545}, 2020.

\bibitem{daubechies2019}
I.~Daubechies, R.~DeVore, S.~Foucart, B.~Hanin, and G.~Petrova.
\newblock Nonlinear approximation and (deep) relu networks, 2019.

\bibitem{devore2020neural}
R.~DeVore, B.~Hanin, and G.~Petrova.
\newblock Neural network approximation.
\newblock {\em arXiv preprint arXiv:2012.14501}, 2020.

\bibitem{du2019gradient}
S.~Du, J.~Lee, H.~Li, L.~Wang, and X.~Zhai.
\newblock Gradient descent finds global minima of deep neural networks.
\newblock In {\em International Conference on Machine Learning}, pages
  1675--1685, 2019.

\bibitem{elbraechter2020dnn}
D.~Elbr{\"a}chter, P.~Grohs, A.~Jentzen, and C.~Schwab.
\newblock Dnn expression rate analysis of high-dimensional pdes: Application to
  option pricing.
\newblock {\em arXiv preprint arXiv:1809.07669}, 2020.

\bibitem{Fefferman1994ReconstructingAN}
C.~Fefferman.
\newblock Reconstructing a neural net from its output.
\newblock {\em Revista Matematica Iberoamericana}, 10:507--555, 1994.

\bibitem{fiedlerlearning}
C.~Fiedler.
\newblock Learning deep neural networks with very few samples.
\newblock Master's thesis, Technical University Munich, 2019.

\bibitem{fornasier2019robust}
M.~Fornasier, T.~Klock, and M.~Rauchensteiner.
\newblock Robust and resource efficient identification of two hidden layer
  neural networks.
\newblock {\em to appear in Constr. Approx.}, arXiv preprint arXiv:1907.00485.

\bibitem{Fornasier2012}
M.~Fornasier, K.~Schnass, and J.~Vybiral.
\newblock Learning functions of few arbitrary linear parameters in high
  dimensions.
\newblock {\em Foundations of Computational Mathematics}, 12(2):229--262, Apr.
  2012.

\bibitem{fornasier2018identification}
M.~Fornasier, J.~Vyb\'{\i}ral, and I.~Daubechies.
\newblock Identification of shallow neural networks by fewest samples.
\newblock {\em to appear in Information and Inference}, arXiv preprint
  arXiv:1804.01592.

\bibitem{gale2019state}
T.~Gale, E.~Elsen, and S.~Hooker.
\newblock The state of sparsity in deep neural networks.
\newblock {\em arXiv preprint arXiv:1902.09574}, 2019.

\bibitem{tropp_gittens}
A.~Gittens and J.~A. Tropp.
\newblock Tail bounds for all eigenvalues of a sum of random matrices.
\newblock {\em arXiv preprint arXiv:1104.4513}, 2011.

\bibitem{Goodfellow2015ExplainingAH}
I.~Goodfellow, J.~Shlens, and C.~Szegedy.
\newblock Explaining and harnessing adversarial examples.
\newblock In {\em International Conference on Learning Representations}, 2015.

\bibitem{elbrachter2020deep}
P.~Grohs, D.~Perekrestenko, D.~Elbr{\"a}chter, and H.~B{\"o}lcskei.
\newblock Deep neural network approximation theory.
\newblock {\em arXiv preprint arXiv:1901.02220}, 1, 2020.

\bibitem{DBLP:journals/corr/HannunCCCDEPSSCN14}
A.~Hannun, C.~Case, J.~Casper, B.~Catanzaro, G.~Diamos, E.~Elsen, R.~Prenger,
  S.~Satheesh, S.~Sengupta, A.~Coates, et~al.
\newblock Deep speech: Scaling up end-to-end speech recognition.
\newblock {\em arXiv preprint arXiv:1412.5567}, 2014.

\bibitem{7780459}
K.~{He}, X.~{Zhang}, S.~{Ren}, and J.~{Sun}.
\newblock Deep residual learning for image recognition.
\newblock In {\em 2016 IEEE Conference on Computer Vision and Pattern
  Recognition (CVPR)}, pages 770--778, 2016.

\bibitem{doi:10.1002/sce.37303405110}
D.~O. Hebb.
\newblock {\em The organization of behavior: a neuropsychological theory}.
\newblock John Wiley; Chapman \& Hall, 1949.

\bibitem{hinton1988learning}
G.~E. Hinton and J.~L. McClelland.
\newblock Learning representations by recirculation.
\newblock In {\em Neural information processing systems}, pages 358--366, 1988.

\bibitem{1189626}
G.-B. Huang.
\newblock Learning capability and storage capacity of two-hidden-layer
  feedforward networks.
\newblock {\em IEEE Transactions on Neural Networks}, 14(2):274--281, 2003.

\bibitem{isaac2013pleasures}
R.~Isaac.
\newblock {\em The Pleasures of Probability}.
\newblock Undergraduate Texts in Mathematics. Springer New York, 2013.

\bibitem{anandkumar15}
M.~Janzamin, H.~Sedghi, and A.~Anandkumar.
\newblock Beating the perils of non-convexity: Guaranteed training of neural
  networks using tensor methods.
\newblock {\em arXiv preprint arXiv:1506.08473}, 2015.

\bibitem{JUDD1988177}
S.~Judd.
\newblock On the complexity of loading shallow neural networks.
\newblock {\em Journal of Complexity}, 4(3):177 -- 192, 1988.

\bibitem{kileel2019subspace}
J.~Kileel and J.~M. Pereira.
\newblock Subspace power method for symmetric tensor decomposition and
  generalized pca.
\newblock {\em arXiv preprint arXiv:1912.04007}, 2019.

\bibitem{NIPS2012_4824}
A.~Krizhevsky, I.~Sutskever, and G.~E. Hinton.
\newblock Imagenet classification with deep convolutional neural networks.
\newblock In {\em Advances in Neural Information Processing Systems},
  volume~25, pages 1097--1105. Curran Associates, Inc., 2012.

\bibitem{7fa6b6a5cde14bcfbd7ab3a8f19d0d56}
Y.~Lecun.
\newblock Une procedure d'apprentissage pour reseau a seuil asymmetrique (a
  learning scheme for asymmetric threshold networks).
\newblock In {\em Proceedings of Cognitiva 85, Paris, France}, pages 599--604,
  1985.

\bibitem{Lin20}
K.-C. Lin.
\newblock {\em Nonlinear Sampling Theory and Efficient Signal Recovery}.
\newblock PhD thesis, University of Maryland, 2020.

\bibitem{MeiE7665}
S.~Mei, A.~Montanari, and P.-M. Nguyen.
\newblock A mean field view of the landscape of two-layer neural networks.
\newblock {\em Proceedings of the National Academy of Sciences},
  115(33):E7665--E7671, 2018.

\bibitem{mhaskar2020function}
H.~Mhaskar and T.~Poggio.
\newblock Function approximation by deep networks.
\newblock {\em Communications on Pure \& Applied Analysis}, 19(8), 2020.

\bibitem{mhaskar2016deep}
H.~N. Mhaskar and T.~Poggio.
\newblock Deep vs. shallow networks: An approximation theory perspective.
\newblock {\em Analysis and Applications}, 14(06):829--848, 2016.

\bibitem{nature15}
V.~Mnih, K.~Kavukcuoglu, D.~Silver, A.~A. Rusu, J.~Veness, M.~G. Bellemare,
  A.~Graves, M.~Riedmiller, A.~K. Fidjeland, G.~Ostrovski, S.~Petersen,
  C.~Beattie, A.~Sadik, I.~Antonoglou, H.~King, D.~Kumaran, D.~Wierstra,
  S.~Legg, and D.~Hassabis.
\newblock Human-level control through deep reinforcement learning.
\newblock {\em Nature}, 518(7540):529--533, 2015.

\bibitem{MondMont18}
M.~Mondelli and A.~Montanari.
\newblock On the connection between learning two-layer neural networks and
  tensor decomposition.
\newblock In {\em The 22nd International Conference on Artificial Intelligence
  and Statistics}, pages 1051--1060. PMLR, 2019.

\bibitem{moroshko2020implicit}
E.~Moroshko, S.~Gunasekar, B.~Woodworth, J.~D. Lee, N.~Srebro, and D.~Soudry.
\newblock Implicit bias in deep linear classification: Initialization scale vs
  training accuracy.
\newblock {\em arXiv preprint arXiv:2007.06738}, 2020.

\bibitem{NIPS2014_5430}
P.~Netrapalli, N.~U~N, S.~Sanghavi, A.~Anandkumar, and P.~Jain.
\newblock Non-convex robust pca.
\newblock In {\em Advances in Neural Information Processing Systems},
  volume~27, pages 1107--1115. Curran Associates, Inc., 2014.

\bibitem{neyshabur2015}
B.~Neyshabur, R.~Tomioka, and N.~Srebro.
\newblock In search of the real inductive bias: On the role of implicit
  regularization in deep learning.
\newblock In {\em International Conference on Learning Representations}, 2015.

\bibitem{nguyen2020global}
Q.~Nguyen and M.~Mondelli.
\newblock Global convergence of deep networks with one wide layer followed by
  pyramidal topology.
\newblock {\em arXiv preprint arXiv:2002.07867}, 2020.

\bibitem{pmlr-v48-oord16}
A.~V. Oord, N.~Kalchbrenner, and K.~Kavukcuoglu.
\newblock Pixel recurrent neural networks.
\newblock volume~48 of {\em Proceedings of Machine Learning Research}, pages
  1747--1756. PMLR, 2016.

\bibitem{oord2016wavenet}
A.~v.~d. Oord, S.~Dieleman, H.~Zen, K.~Simonyan, O.~Vinyals, A.~Graves,
  N.~Kalchbrenner, A.~Senior, and K.~Kavukcuoglu.
\newblock Wavenet: A generative model for raw audio.
\newblock {\em arXiv preprint arXiv:1609.03499}, 2016.

\bibitem{oymak2019moderate}
S.~Oymak and M.~Soltanolkotabi.
\newblock Towards moderate overparameterization: global convergence guarantees
  for training shallow neural networks.
\newblock {\em IEEE Journal on Selected Areas in Information Theory}, 2020.

\bibitem{PETERSEN2018296}
P.~Petersen and F.~Voigtlaender.
\newblock Optimal approximation of piecewise smooth functions using deep relu
  neural networks.
\newblock {\em Neural Networks}, 108:296 -- 330, 2018.

\bibitem{rolnick2020reverseengineering}
D.~Rolnick and K.~Kording.
\newblock Reverse-engineering deep relu networks.
\newblock In {\em International Conference on Machine Learning}, pages
  8178--8187. PMLR, 2020.

\bibitem{10.5555/104279.104293}
D.~E. Rumelhart, G.~E. Hinton, and R.~J. Williams.
\newblock {\em Learning Internal Representations by Error Propagation}, page
  318–362.
\newblock MIT Press, Cambridge, MA, USA, 1986.

\bibitem{shaham2018provable}
U.~Shaham, A.~Cloninger, and R.~R. Coifman.
\newblock Provable approximation properties for deep neural networks.
\newblock {\em Applied and Computational Harmonic Analysis}, 44(3):537--557,
  2018.

\bibitem{shevchenko2020landscape}
A.~Shevchenko and M.~Mondelli.
\newblock Landscape connectivity and dropout stability of sgd solutions for
  over-parameterized neural networks.
\newblock In {\em International Conference on Machine Learning}, pages
  8773--8784. PMLR, 2020.

\bibitem{silver2017mastering}
D.~Silver, J.~Schrittwieser, K.~Simonyan, I.~Antonoglou, A.~Huang, A.~Guez,
  T.~Hubert, L.~Baker, M.~Lai, A.~Bolton, Y.~Chen, T.~Lillicrap, F.~Hui,
  L.~Sifre, G.~van~den Driessche, T.~Graepel, and D.~Hassabis.
\newblock Mastering the game of go without human knowledge.
\newblock {\em Nature}, 550:354--, Oct. 2017.

\bibitem{DBLP:journals/corr/SoltanolkotabiJ17}
M.~Soltanolkotabi, A.~Javanmard, and J.~D. Lee.
\newblock Theoretical insights into the optimization landscape of
  over-parameterized shallow neural networks.
\newblock {\em IEEE Transactions on Information Theory}, 65(2):742--769, 2018.

\bibitem{soudry2018implicit}
D.~Soudry, E.~Hoffer, M.~S. Nacson, S.~Gunasekar, and N.~Srebro.
\newblock The implicit bias of gradient descent on separable data.
\newblock {\em The Journal of Machine Learning Research}, 19(1):2822--2878,
  2018.

\bibitem{stewart1991perturbation}
G.~W. Stewart.
\newblock Perturbation theory for the singular value decomposition.
\newblock Technical report, 1991.

\bibitem{sun2019optimization}
R.~Sun.
\newblock Optimization for deep learning: theory and algorithms.
\newblock {\em arXiv preprint arXiv:1912.08957}, 2019.

\bibitem{SUSSMANN1992589}
H.~J. Sussmann.
\newblock Uniqueness of the weights for minimal feedforward nets with a given
  input-output map.
\newblock {\em Neural Networks}, 5(4):589 -- 593, 1992.

\bibitem{42503}
C.~Szegedy, W.~Zaremba, I.~Sutskever, J.~Bruna, D.~Erhan, I.~Goodfellow, and
  R.~Fergus.
\newblock Intriguing properties of neural networks.
\newblock In {\em International Conference on Learning Representations}, 2014.

\bibitem{NIPS2017_7181}
A.~Vaswani, N.~Shazeer, N.~Parmar, J.~Uszkoreit, L.~Jones, A.~N. Gomez,
  {\L}.~Kaiser, and I.~Polosukhin.
\newblock Attention is all you need.
\newblock In {\em Advances in Neural Information Processing Systems 30}, pages
  5998--6008. Curran Associates, Inc., 2017.

\bibitem{HDP18}
R.~Vershynin.
\newblock {\em High-dimensional probability: An introduction with applications
  in data science}, volume~47.
\newblock Cambridge university press, 2018.

\bibitem{doi:10.1137/20M1314884}
R.~Vershynin.
\newblock Memory capacity of neural networks with threshold and rectified
  linear unit activations.
\newblock {\em SIAM Journal on Mathematics of Data Science}, 2(4):1004--1033,
  2020.

\bibitem{DBLP:journals/corr/abs-1906-06994}
V.~Vla{\v{c}}i{\'c} and H.~B{\"o}lcskei.
\newblock Neural network identifiability for a family of sigmoidal
  nonlinearities.
\newblock {\em arXiv preprint arXiv:1906.06994}, 2019.

\bibitem{DBLP:journals/corr/abs-2006-11727}
V.~Vla{\v{c}}i{\'c} and H.~B{\"o}lcskei.
\newblock Affine symmetries and neural network identifiability.
\newblock {\em Advances in Mathematics}, 376:107485, 2020.

\bibitem{wedin1972perturbation}
P.-{\AA}. Wedin.
\newblock Perturbation bounds in connection with singular value decomposition.
\newblock {\em BIT Numerical Mathematics}, 12(1):99--111, 1972.

\bibitem{werbos74}
P.~Werbos.
\newblock {\em Beyond Regression: New Tools for Prediction and Analysis in the
  Behavioral Sciences}.
\newblock Harvard University, 1975.

\bibitem{woodworth2020kernel}
B.~Woodworth, S.~Gunasekar, J.~D. Lee, E.~Moroshko, P.~Savarese, I.~Golan,
  D.~Soudry, and N.~Srebro.
\newblock Kernel and rich regimes in overparametrized models.
\newblock In {\em Proceedings of Thirty Third Conference on Learning Theory},
  volume 125, pages 3635--3673. PMLR, 2020.

\bibitem{NIPS2010_4005}
H.~Xu, C.~Caramanis, and S.~Sanghavi.
\newblock Robust pca via outlier pursuit.
\newblock In {\em Advances in Neural Information Processing Systems},
  volume~23, pages 2496--2504. Curran Associates, Inc., 2010.

\bibitem{ye2020good}
M.~Ye, C.~Gong, L.~Nie, D.~Zhou, A.~Klivans, and Q.~Liu.
\newblock Good subnetworks provably exist: Pruning via greedy forward
  selection.
\newblock In {\em Proceedings of the 37th International Conference on Machine
  Learning}, volume 119, pages 10820--10830. PMLR, 2020.

\bibitem{DBLP:conf/nips/YunSJ19}
C.~Yun, S.~Sra, and A.~Jadbabaie.
\newblock Small relu networks are powerful memorizers: a tight analysis of
  memorization capacity.
\newblock In {\em Advances in Neural Information Processing Systems},
  volume~32, pages 15558--15569. Curran Associates, Inc., 2019.

\bibitem{zhang2016understanding}
C.~Zhang, S.~Bengio, M.~Hardt, B.~Recht, and O.~Vinyals.
\newblock Understanding deep learning requires rethinking generalization.
\newblock In {\em International Conference on Learning Representations}, 2017.

\bibitem{zhong2017recovery}
K.~Zhong, Z.~Song, P.~Jain, P.~L. Bartlett, and I.~S. Dhillon.
\newblock Recovery guarantees for one-hidden-layer neural networks.
\newblock In {\em Proceedings of the 34th International Conference on Machine
  Learning}, volume~70, pages 4140--4149. PMLR, 2017.

\end{thebibliography}

\end{document}